\documentclass[journal,twoside,web]{ieeecolor}
\usepackage{generic}
\usepackage{amsmath,amssymb,amsfonts}

\usepackage{amsthm}

\usepackage{textcomp}
\usepackage{bm}
\usepackage{cite}

\usepackage{multirow} 
\usepackage{makecell}
\usepackage{booktabs}
\usepackage{footnote}
\usepackage{threeparttable}

\usepackage{graphicx}
\usepackage[caption=false]{subfig}

\usepackage{caption2}
\usepackage{xcolor}
\usepackage{hyperref}

\usepackage{algorithm}
\usepackage{algorithmicx}
\usepackage{algpseudocode}

\newtheorem{definition}{Definition} 
\newtheorem{theorem}{Theorem}
\newtheorem{remark}{Remark}
\newtheorem{corollary}{Corollary}

\def\BibTeX{{\rm B\kern-.05em{\sc i\kern-.025em b}\kern-.08em
    T\kern-.1667em\lower.7ex\hbox{E}\kern-.125emX}}
\markboth{\journalname}
{Author \MakeLowercase{\textit{et al.}}: Domain Adaptation for Industrial Time-series Forecasting via Counterfactual Inference}
\begin{document}
\title{Domain Adaptation for Industrial Time-series Forecasting via Causal Inference}
\author{Chao Min, Guoquan Wen, Jiangru Yuan, Jun Yi, Xing Guo}

\maketitle

\begin{abstract}
Industrial time-series, as a structural data responds to production process information,
can be utilized to perform data-driven decision-making 
for effective monitoring of industrial production process.
However, there are some challenges for time-series forecasting in industry,
including training cold-start and predicting few-shot caused by data shortage, 
and decision-confusing caused by unknown causal effect of stochasticaly treatment policy. 
To cope with the aforementioned problems, we propose a novel causal domain adaptation framework,
Causal Domain Adaptation (CDA) forecaster, for industrial processes. 
CDA leverages a Conditional Average Treatment Effect (CATE) 
from sufficiently historical time-series along with treatment policy (source) 
to improve the performance on the interested domain with limited data (target).
In particular, historical data consists of treatments (production measure), 
covariate (features) and outcome.
Firstly, we analyse the causality existing in industrial time-series along with treatments, and
thus ensure the shared causality over time.
Subsequently, we propose an answer-based attention mechanism to achieve domain-invariant 
representation by the shared causality in both domains.
Then, a novel domain-adaptation method is bulit to model the treatments and outcomes jointly
training on source and target domain.
The main insights are that 
our designed answer-based attention mechanism allows the target domain to leverage the casuality
existing in soure time-series even with different treatments, 
and our forecaster can predict the counterfactual outcome of industrial time-series
after an treatment, meaning a guidance in production process.
Compared with commonly baselines, our method on real-world abd synthetic oilfield datasets
demonstrates the effectiveness in  across-domain prediction and 
the practicality in guiding production process.
\end{abstract}

\begin{IEEEkeywords}
Causal inferecnce, Domain adaptation, Industrial time-series, Treatment policy
\end{IEEEkeywords}

\section{Introduction}
\IEEEPARstart{I}{ndustrial} time-series forecasting, 
as a task reflecting industrial producing process over time, 
has drawn more attention along with the development of artificial intelligence, 
where CNN and RNN \cite{wang2020toward, liu2018time, canizo2019multi} are commonly used in forecasting 
due to their powerful ability in feature representation. 
There are various deformations of neural networks successfully applied to industry and engineering \cite{abiodun2018state}, 
e.g., mechanical system \cite{li2021causal}, petroleum industry \cite{al2012artificial}
organic chemistry \cite{fooshee2018deep} and biology \cite{wainberg2018deep}. 
In particularly, attention-based transformer-like models \cite{vaswani2017attention} 
have achieved state-of-the-art performance. 
A downside to these sophisticated models is their reliance on the large dataset 
with homogeneous time-series for training neural networks \cite{zhou2022time},

yet collecting and annotating the sufficient data are 
sometimes expensive and even prohibitive in industrial field. 
This data-efficiency challenge heavily impedes the adoption of deep learning to
a wide spectrum of industry \cite{bertolini2021machine}. 

An effective solution to this challenge is to explore the transferability in deep learning. 
The transferability extracts transferable representations 
from source tasks and then adapt it to improve the learning performance for target task 
\cite{bengio2012deep, lim2021temporal}. 
However, once trained, the deep learning models may not generalize well to 
a new domain of exogenous data since domain shift, that is, 
the distribution discrepancy between source and target domain \cite{wang2021bridging}.
To solve the domain shift issue in deep learning \cite{tzeng2017adversarial}, 
domain adaptation has been proposed to transfer the knowledge of source domain 
with sufficient data to the target domain with unlabeled or insufficiently labeled data for various tasks. 
Domain adaptation attempts to capture domain-invariant representations from raw data 
by aligning features across source and target domains \cite{niu2020decade}. 
In their seminal works, domain adaptation method based on neural network,  
e.g., DANN \cite{ganin2016domain}, MMDA \cite{rahman2020minimum}
is a popular methodology in exploring transferability \cite{zhang2019bridging}.
In the light of success in time-series task,
the related works \cite{ragab2023adatime, wang2023sensor} capture the domain-invariance 
by confusing the domain discriminator, and then perform the adversarially training.
These mechanism ensure they can effectivly distinguish representations from different domains, 
and outperform the metric-based neural network and domain adaptation method in forecasting task \cite{wilson2020survey}. 

In particular, industrial time-series (outcomes) always change over time and policy (treatments),
where treatments is a direct cause for outcomes.
In oil and gas industry, for example, gas production is positively affected by treatments 
such as injection and padding measure \cite{kan2020extended, bilgili2016did}. 
When the sequences of outcomes and treatments are recorded as a time-series, 
a policy dominates what treatment to take and when for production activity,
e.g., improving production and cutting down on accidents.
These methods accommodate the sequential causality of industrial time-series
in anomaly detection \cite{zhang2023spatial, cai2017bayesian, wang2023causal}
and outcome prediction \cite{li2021dtdr, li2023transferable}.
However, they do not directly reveal the latent causality existing in domain shift 
across the observed distribution of treatments and outcomes \cite{correa2022counterfactual}.
Since the policy itself is generally recorded in industrial production process,
we wan to infer a causal model of recurrent treatments from sequential treatment-outcome data
by utilizing domain adaptation and causal inference \cite{imbens2015causal, magliacane2018domain}.
In causal model, there are some consequences should be assessed for industrial production process, 
including, What is the causal effect of a given policy?
What will be the effect of a change to the treatment policy?
What would be happen if the production process had followed a different treatment policy?
These questions correspond to observational, interventional, and counterfactual queries. 
Answering them is crucial, especially in time-series forecasting along with domain shift, 
as well as a policy decision.
Specifically, answering the interventional and counterfactual queries requires modeling policy intervention
based on counterfactual inferecnce, while answering the observational query requires time-series forecaster.

Moreover, industrial time-series forecasting under limited data is a typical few-shot problem \cite{jin2022domain, teshima2020few}, 
which can result in poor-forecasting.  
Few-shot forecasting occurs when there is limited historical matching information in source domain. 
We can see that these problems are common phenomenon in industrial field 
due to the expensive cost in collecting and annotating data.

Based on above observations, we can know the advancement of industrial time-series forecasting is 
challenged by several critical bottlenecks, including
\begin{itemize}
    \item \textbf{Difficulty in collecting and annotating the sufficient data for training.}
    In industrial producing process, collecting and annotating the sufficient time-series data 
    for training model are expensive and even prohibitive due to complex work conditions or 
    privacy agreement. The data scarcity difficulty will lead to the few-shot problems 
    in domain adaptation.
    \item \textbf{Lack of causal guarantee for time-series forecasting.}
    While there are fruitful theoretical analyses for industrial time-series forecasting,
    similar analyses meet substantial hurdles to be extended to causality existing 
    in industrial time-series with treatment policy. 
    Its latent causality can provide the intrinsic relationship among multiple and multivariate time-series. 
    Therefore, it is essential to achieve causal guarantee, 
    which is beneficial to improve forecasting performance and provide producing guidance.
    \item \textbf{Difficulty in charactering the domain-invariant from the causality.}
    Charactering the domain-invariant involving domain shift requires 
    the bulit model is time-sensitive in industrial time-sereis. 
    An effective way is to make the attention mechanism as an encoder, 
    aiming at constructing the domain-invariant representations over time. 
    However, the traditional attention mechanism reconstructs the representation by utilizng the correlation
    not the causality, resulting in the absence of causality in domain shift, 
    Thus, it leads to a difficulty in estimating the causal impact of policy.
\end{itemize}

To figure out these situations, we propose a causal domain adaptation (CDA) framework, 
a novel sequence-to-sequence forecaster for industrial time-series forecasting. 
CDA leverages the advances in domain adversarial training and causal inference to 
construct a industrial time-series forecaster, which attempts to perform causal modeling across 
treatment-outcome and policy sequence. 
Our main contributions are as follows,
\begin{itemize}
    \item \textbf{Treatment-invariant representations over time.}
    Distinct from attention mechanism, CDA utilizes counterfactual inference to construct the 
    domain-invariant representations over time, i.e., treatment-invariant information
    which can break the association between history-matching and 
    treatment alignment of time-series. 
    Thus, the latent causality ensure that even limited data
    can provide the sufficient temporal information for time-series modeling.
    For this, domain adversarial training is employed to trade-off between 
    the treatment-invariant representations and time-series forecasting. 
    We show that the representations remove the bias caused by policy and 
    can be reliably used for forecasting industrial time-series. 
    This contribution also provides the causal guarantee for time-series forecasting from the 
    treatment policy.
    \item \textbf{Counterfactual estimation for future producing activity.}
    To estimate counterfactual outcome for treatment policy, 
    we integrate an average causal effect estimation (CATE) as a part of a sequence-to-sequence architecture. 
    CATE is mainly utilized to perform the causal domain-invariant representation
    and calculate the causal effect of treatment policy impacting on outcome sequence, 
    Thus, CDA can answer the question, which policy is most effective for producing activity?
    We illustrate in Fig.??? the applicability of CDA in counterfactual estimation and 
    optimizing in oil and gas filed.   
\end{itemize}

In our experiments, we evaluate CDA in the realistic datasets with treatment policy in oil and gas field. 
We show that CDA achieves better performance in forecasting monthly oil production, 
but also in choosing the optimal treatment policy in improving oil production.

\section{Preliminaries}
Our work mainly builds on causal inference and domain adaptation.

\subsection{Casual Inference}
Casual inference is the process of drawing a conclusion about a causal connection 
based on the conditions of the occurrence of an effect \cite{imbens2015causal}. 
It is well known that “correlation does not imply causation.” 
The main difference between causality and correlation analysis is that 
former analyzes the response of the effect variable when the cause is changed. 
In this section, we briefly introduce causal graph, treatment intervention and counterfactual inference
for our method \cite{johansson2016learning}. 
And we also define a causal answer formally, while using SCM to emphasize the distinction 
between probability distributions induced by different types of causal answer.

\noindent \textbf{Causal Graph.} 
Let $\mathcal{M}:=(\mathbf{S},p(\epsilon))$ denote a structural causal model (SCM), 
which consists of structural assignments 
$\mathbf{S}=\{\mathbf{x}_{i}:=f_{i}(\epsilon_{i};\mathbf{pa}_{i})\}_{i=1}^{N}$ over structural model $f$
and joint distribution $p(\epsilon)=\prod_{i=1}^{N}p(\epsilon_{i})$ over noise variables $\epsilon$. 
In addition, $\mathbf{pa}_{i}$ is the set of parents of $\mathbf{x}_{i}$ (its direct causes). 
Then, a causal graph $\mathcal{G}$ is obtained by representing each variable $\mathbf{x}_{i}$ as node and 
draw edges to $\mathbf{x}_{i}$ from its parents $\mathbf{pa}(\mathbf{x}_{i})$. 
Formally, time-series $\mathbf{pa}(\mathbf{x}_{i})$ is said to be a temporal cause of $\mathbf{x}_{i}$ 
when the values of $\mathbf{pa}(\mathbf{x}_{i})$ provide statistically significant information 
about the future values of $\mathbf{x}_{i}$.

\noindent \textbf{Treatment Intervention.} 
Let $\mathbf{X},\mathbf{Y} \mathrm{~and~} \mathbf{Z}$ denote the covariate, outcome, 
and treatment variable, respectively. In causal inference, $\mathbf{Y}(\mathbf{X},\mathbf{z}_{i})$ 
is a random variable that represents the value of outcome $\mathbf{Y}$ 
under a treatment policy $\mathbf{z}_{i}$ and covariate $\mathbf{X}$. 
The treatment intervention on $\mathbf{Z}$ by $\mathbf{z}_{i}$ corresponds to the \textit{do-operator} 
$\mathrm{do}(\mathbf{Z}=\mathbf{z}_{i})$ in the SCM, as
$P\big(\mathbf{Y}(\mathbf{X},\mathbf{z}_{i})\big)=
P\big(\mathbf{Y}\big|\mathbf{X},\mathrm{do}(\mathbf{Z}=\mathbf{z}_{i})\big)$,
where $\mathrm{do}(\mathbf{Z}=\mathbf{z}_{i})$ means the treatment policy is forced to intervene to $\mathbf{z}_{i}$.

\noindent \textbf{Counterfactual Inference.}
In counterfactual queries, given a history of past events that have already occurred, 
one asks what past events would have instead occurred if certain intervention had been in place. 
The counterfactual can be formulized as $P\big(\mathbf{Y}(\mathbf{X},\mathbf{z}_{i}\big)\big|\mathbf{X},\mathbf{z}_{j})$, 
where $\mathbf{X},\mathbf{z}_{j}$ corresponds to the past events 
$\mathbf{Y}(\mathbf{X},\mathbf{Z}=\mathbf{z}_{j})$ 
and $\mathbf{Y}(\mathbf{X},\mathbf{z}_{i})$ corresponds to a query that 
if we make an intervention with $\mathrm{do}(\mathbf{Z}=\mathbf{z}_{i})$, 
what will happen to $\mathbf{Y}$ under the trained model $f$, 
i.e., past events $\mathbf{Y}(\mathbf{X},\mathbf{Z}=\mathbf{z}_{j})$.

The difference between treatment intervention $P\big(\mathbf{Y}\big|\mathbf{X},\mathrm{do}(\mathbf{Z}=\mathbf{z}_{i})\big)$
and counterfactuals inference $P\big(\mathbf{Y}(\mathbf{X},\mathbf{z}_{i}\big)\big|\mathbf{X},\mathbf{z}_{j})$ 
are highlighted by joint-distribution: 
(i) a treatment intervention requires access to the treatment distribution 
$\mathbf{z}_{i}\sim P(\mathbf{Z})$ 
and only use the noise prior $P(\epsilon)$; 
(ii) a counterfactual inference requires access to the counterfactual distribution 
$\mathbf{z}_{j}\sim P(\mathbf{Z})$ 
and the posterior distribution of noise $P(\epsilon|\mathbf{X}, \mathbf{z}_{i})$, 
which incorporates the knowledge of what already happened. Upon these knowledge, 
an example of causal graph in continuous time is shown in Fig.\ref{figs:pre} by causal inference.

\begin{figure}[!t]
    \centerline{\includegraphics[scale=0.4]{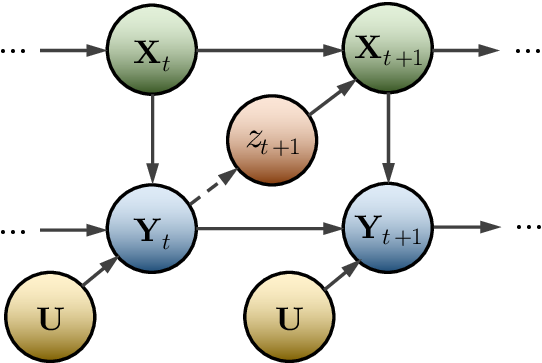}}
    \caption{The causal graph $\mathcal{G}$ in continuous time, 
    where $a\rightarrow b$ means $a$ is the direct cause of $b$. 
    And the treatment policy $z_{t+1}$ can be specified by human (dashed arrow) 
    or determined by previous outcome $\mathbf{Y}_{t}$, 
    $\mathbf{U}$ denote the static variables.}
    \label{figs:pre}     
\end{figure}

\subsection{Domain Adaptation under Causality}
In this section, we follow the definitions and notations of domain adaptation in \cite{pan2010domain}. 
Accordingly, a domain $\mathcal{D}$ is composed of a feature space $\mathcal{X}$ 
with marginal probability distribution $P(\mathbf{X})$ and a task $\mathcal{T}$ with label space $\mathcal{Y}$  
with conditional probability distribution $P(\mathbf{Y}|\mathbf{X})$, 
where $\mathbf{X}$ and $\mathbf{Y}$ are the covariate and target variable. 
By assuming the source domain $\mathcal{D}^{S}=\{{\mathcal X}^{S},P(\mathbf{X}^{S})\}$ 
with $\mathcal{T}^{S}=\{\mathcal{Y}^{S},P(\mathbf{Y}^{S}|\mathbf{X}^{S})\}$
and target domain $\mathcal{D}^{T}=\{{\mathcal X}^{T},P(\mathbf{X}^{T})\}$  
with $\mathcal{T}^{T}=\{\mathcal{Y}^{T},P(\mathbf{Y}^{T}|\mathbf{X}^{T})\}$, 
heterogenous domain adaptation can be stated that when the feature spaces of source and target domain 
satisfy $P(\mathbf{X}^{S})\neq P(\mathbf{X}^{T})$ with $\mathcal{X}^{S}=\mathcal{X}^{T}$ due to domain shift, 
the label spaces of source and target domain under conditional probability distribution holds that 
$P(\mathbf{Y}^{S}|\mathbf{X}^{S})\neq P(\mathbf{Y}^{T}|\mathbf{X}^{T})$ with $\mathcal{Y}^{S}\neq \mathcal{Y}^{T}$.

As aforementioned, this paper focuses on the domain adaptation in industrial time-series forecasting 
under treatment policy. As shown in Fig.\ref{figs:pre}, distinct from non-intervention mode, 
the domain shift in intervention mode between feature spaces 
$\mathbf{X}^{S}$ and $\mathbf{X}^{T}$ can be detected its causes, 
mainly concentrating on the corresponding treatment policy $\mathbf{z}_{i}, \mathbf{z}_{j}\in\mathbf{Z}$. 
These can be defined as follows,
\begin{equation}
    \begin{split}
        P(\mathbf{X}^{S}|\mathbf{Z}=\mathbf{z}_{i})&\neq P(\mathbf{X}^{T}|\mathbf{Z}=\mathbf{z}_{j})\\ 
        P(\mathbf{Y}^{S}|\mathbf{X}^{S},\mathbf{Z}=\mathbf{z}_{i})&\neq 
        P(\mathbf{Y}^{T}|\mathbf{X}^{T},\mathbf{Z}=\mathbf{z}_{j})
    \end{split}
    \label{eqs:casual-da}
\end{equation}
where $\mathbf{z}_{i} \mathrm{~and~} \mathbf{z}_{j}$ denote the treatment policy in source and target domain, 
namely the conditions.

\noindent  \textbf{Causal structure.} 
Intervened by treatment policy, source and target domains of industrial time-series
have different distribution but share the same causal structure over time, 
shown in the top of Fig.\ref{figs:pre}. 
Specifically, similar to the aforementioned notation of causal inference , 
the domain adaptation under causality has the similar probability representation, 
which can be category into two main groups depending on treatment policy or counterfactual inference. 
By utilizing cause $\mathbf{Z}$, namely treatment policy, 
the bridging theory of domain adaptation for treatment intervention can be defined as 
$P(\mathbf{X}^{S}|\mathbf{Z}=\mathbf{z}_{i})\neq P(\mathbf{X}^{T}|\operatorname{do}\big(\mathbf{Z}=\mathbf{z}_{j})\big)$, 
while for counterfactual inference can be defined as 
$P(\mathbf{Y}^{S}|\mathbf{X}^{S},\mathbf{Z}=\mathbf{z}_{i})\neq 
P(\mathbf{Y}^{T}(\mathbf{X}^{S},\mathbf{z}_{i})|\mathbf{X}^{T},
\operatorname{do}\big(\mathbf{Z}=\mathbf{z}_{j})\big)$

\section{Method}
In this section, the problem setup and notation used to study treatment-outcome and domain adaptation
is introduced. And then the bulit method will be proposed, that is, the causal domain adaptation (CDA).
\subsection{Problem formulation}
Considering an observational $N$ time-series dataset 
$\mathcal{D}=\left\{\left(\mathbf{x}_{t}^{(i)},\mathbf{y}_{t}^{(i)},\mathbf{z}_{t}^{(i)}\right)_{t=1}^{T^{(i)}}\right\}_{i=1}^{N}$, 
each observation consists of time-dependent covariates $\mathbf{x}_{t}^{(i)}\in\mathcal{X}_{t}$, 
treatments received from $\mathbf{z}_{t}^{(i)}\in\{z_{1},z_{2},\cdots,z_{K}\}$ and 
outcomes $\mathbf{y}_{t}^{(i)}\in\mathcal{Y}_{t}.$ for $T^{(i)}$ discrete timesteps. 
In time-series forecasting, given the history of the covariates $\mathbf{X}_{1:t}=(\mathbf{x}_{1},\mathbf{x}_{2},\cdots,\mathbf{x}_{t})$ 
and treatment assignments $\mathbf{Z}_{1:t}=(\mathbf{z}_{1},\mathbf{z}_{2},\cdots,\mathbf{z}_{t})$, 
we want to make $\tau$ predictions of outcomes $\mathbf{Y}_{t+1:t+\tau}=(\mathbf{y}_{t+1},\mathbf{y}_{t+2},\cdots,\mathbf{y}_{t+\tau})$ 
via future sequence and time-series generator $G$,
\begin{equation}
    \mathbf{Y}_{t+1:t+\tau}(\mathbf{X}_{t+1:t+\tau},\mathbf{Z}_{t+1:t+\tau})=G(\mathbf{X}_{1:t},\mathbf{Z}_{1:t},\mathbf{Y}_{1:t})
    \label{eqs:problem}
\end{equation}

According to the Eq.(\ref{eqs:problem}), the history of time-series can be used to train a supervised sequence 
generator by the probability formulation 
$P(\mathbf{Y}_{t+1:t+\tau}(\mathbf{X}_{t+1:t+\tau},\mathbf{Z}_{t+1:t+\tau})|\mathbf{X}_{1:t},\mathbf{Z}_{1:t},\mathbf{Y}_{1:t})$, 
Furthermore, in industrial time-series forecasting, the treatment policy $\mathbf{Z}_{t+1:t+\tau}$
can be specified in advance. Then, conditioned on the history $(\mathbf{X}_{1:t},\mathbf{Z}_{1:t},\mathbf{Y}_{1:t})$ 
and shared causality over time, the causal effect of the treatment $\mathbf{Z}_{t+1:t+\tau}$ 
on the outcome trajectory is fully mediated through sequential covariates 
$\mathbf{X}_{t+1:t+\tau}$ (\cite{hizli2023causal} Theorem 1.), 
\begin{equation}
    \begin{split}
        &P(\mathbf{Y}_{t+1:t+\tau}(\mathbf{X}_{t+1:t+\tau},\mathbf{Z}_{t+1:t+\tau})|\mathbf{X}_{1:t},\mathbf{Z}_{0:t-1},\mathbf{Y}_{1:t})\\ 
        =&P(\mathbf{Y}_{>t}(\mathbf{X}_{>t},\mathbf{Z}_{>t})|\mathbf{H}_{\leq t}) \\
        =&P(\mathbf{Y}_{>t}|\mathbf{H}_{\leq t},\mathbf{X}_{>t},\mathbf{Z}_{>t})
        P(\mathbf{X}_{>t}|\mathbf{H}_{\leq t},\mathbf{Z}_{>t})
        P(\mathbf{Z}_{>t}|\mathbf{H}_{\leq t})
    \end{split}
    \label{eqs:probability}
\end{equation}
where $\mathbf{H}_{\leq t}=\left[\mathbf{X}_{1:t},\mathbf{Z}_{1:t},\mathbf{Y}_{1:t}\right],
\mathbf{X}_{>t}=\mathbf{X}_{t+1:t+\tau},\mathbf{Z}_{>t}=\mathbf{Z}_{t+1:t+\tau}$
and $\mathbf{Y}_{>t}=\mathbf{Y}_{t+1:t+\tau}$. 
Accordingly, using the time-lag response in industrial time-series, 
both of which can be estimated with a specified time-series model,
\begin{equation}
    \underbrace{P(\mathbf{Y}_{>t}|\mathbf{H}_{\leq t},\mathbf{X}_{>t},\mathbf{Z}_{>t})}_{\text{Outcome Term}}
    \underbrace{P(\mathbf{X}_{>t}|\mathbf{H}_{\leq t},\mathbf{Z}_{>t})
    P(\mathbf{Z}_{>t}|\mathbf{H}_{\leq t})}_{\text{Treatment Term}}     
    \label{eqs:joint-model}
\end{equation}

Apparently, the identifiability result in Eq.(\ref{eqs:joint-model}) generalize the forecasting task to 
a time-dependent covariate forecasting module, namely \textbf{Treatment Term}, 
and another outcome forecasting module, namely \textbf{Outcome Term}.

\begin{figure}[!t]
    \centerline{\includegraphics[scale=0.4]{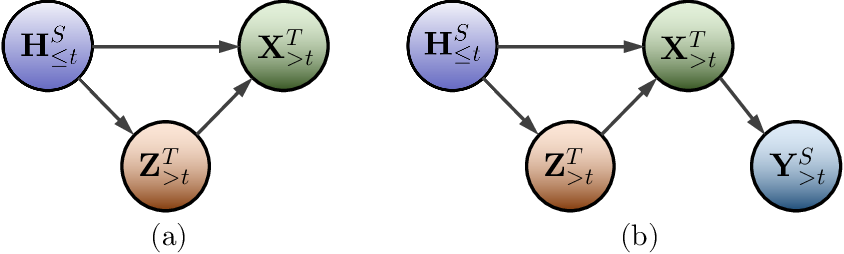}}
    \caption{The shared causality for both both domains in treatment term and outcome term.
    (a). The causality in treatment-policy term; (b). The causality in outcome term.}
    \label{figs:causality}     
\end{figure}

\begin{figure*}[!t]
    \centerline{\includegraphics[scale=0.4]{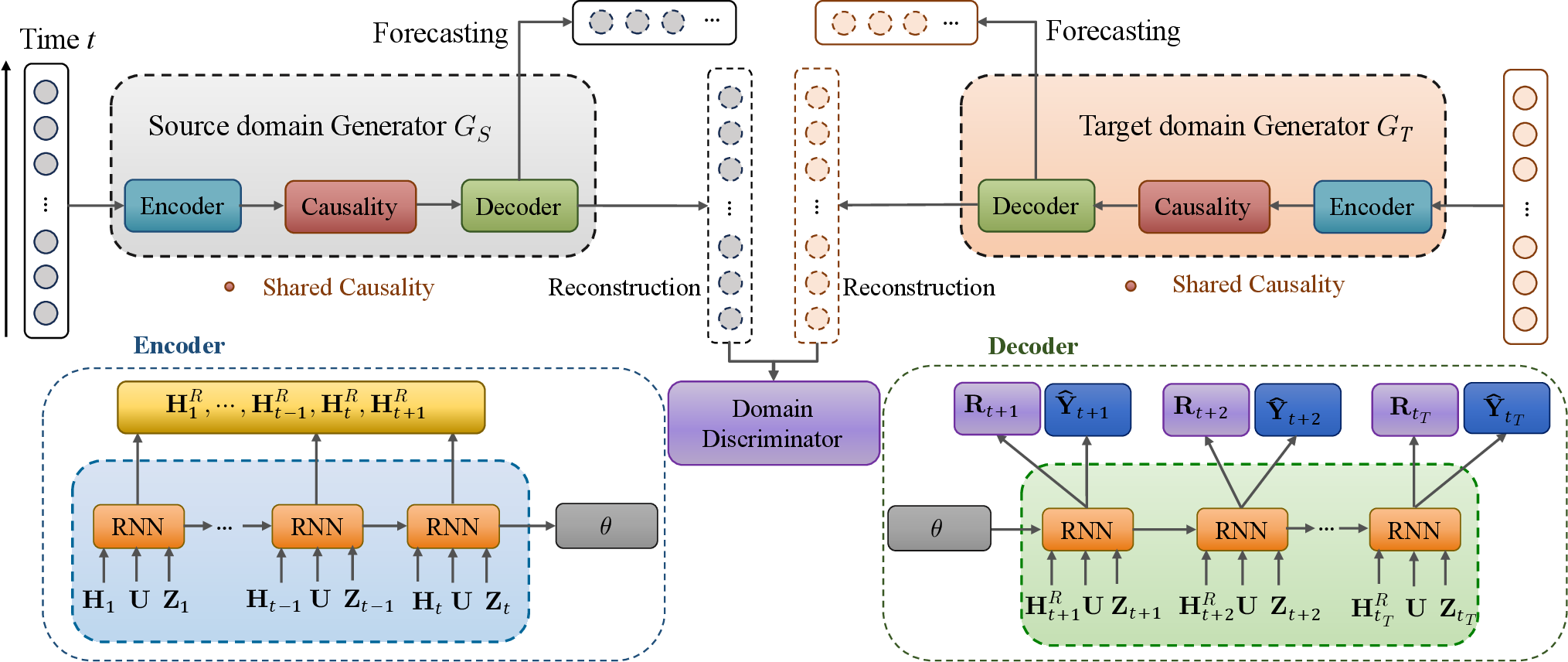}}
    \caption{An architecture overview of CDA forecaster}
    \label{figs:cda}     
\end{figure*}
\subsection{Joint Treatment-Outcome Model}
To estimate both statistical terms in Eq.(\ref{eqs:joint-model}), i.e., Treatment Term and Outcome Term, 
from observational time-series, we propose a joint treatment-outcome model, 
combining a counterfactual inference process and conditional domain adaptation process. 

\noindent \textbf{Shared Causality Module in Treatment Term and Outcome Term.}  
We design a causality module to be shared by both source domain and target domain, 
since treatment term and outcome term have different distributions but share same causal structure,
where the data in treatment term and outcome term with subscript $\leq t$ corresponds to source domain, and 
the data with subscript $>t$ corresponds to target domain.

For \textbf{Treatment Term} in Eq.(\ref{eqs:joint-model}), 
it consists of two sub-term, i.e., $P(\mathbf{X}_{>t}|\mathbf{H}_{\leq t},\mathbf{Z}_{>t})$
and $P(\mathbf{Z}_{>t}|\mathbf{H}_{\leq t})$ called treatment-covariate term and treatment-policy term.
For treatment-policy term, treatment policy can be artificially specified in advance, 
thus it is simplified to $\mathbf{Z}_{>t}=\{\mathbf{z}_{t}\}_{t+1}^{t+\tau}$.
For treatment-covariate term in domain adaptation, its different distributions denoted by 
$P(\mathbf{X}_{>t}^T|\mathbf{H}_{\leq t}^S,\mathbf{Z}_{>t}^T)\neq P(\mathbf{X}_{\leq t}^S)$
since the treatment policy $\mathbf{Z}_{<t}^S$ $(\subseteq\mathbf{H}_{\leq t}^S)$ in source domain
may be distinct from the treatment policy $\mathbf{Z}_{>t}^T$ in target domain, e.g., human intervention.

For \textbf{Outcome Term} in Eq.(\ref{eqs:joint-model}) under domain adaptation, 
its different distributions denoted by 
$P(\mathbf{Y}_{>t}^T|\mathbf{H}_{\leq t}^S,\mathbf{X}_{>t}^T,\mathbf{Z}_{>t}^T)\neq P(\mathbf{Y}_{\leq t}^S)$
since $\mathbf{X}_{\leq t}^S$ in source domain is distinct from $\mathbf{X}_{>t}^T$ in target domain
caused by the difference between treatment policies $\mathbf{Z}$.

In industrial time-series forecasting, although there is domain shift existing in treatment term and 
outcome term, domain-invariant representation can be extracted from causal mechanism perspective 
between domains. Shown in Fig.\ref{figs:causality}, the domain-invariant causality mean that the 
intrinsic causality will not change along with time $t$ or domain $S, T$, including 
$\mathbf{H}_{\leq t},\mathbf{Z}_{>t}$ are the direct cause of $\mathbf{X}_{>t}$,
while $\mathbf{X}_{>t}$ is the direct cause of $\mathbf{Y}_{>t}$.
Under this view, these causality indicates that treatment policy $\mathbf{Z}_{>t}^T$ 
is a primary cause for domain drift exlcuding the history-matching $\mathbf{H}_{\leq t}$,. 
Since we cast the industrial time-series forecasting in terms of a domain adaptation problem, 
the query of attention mechanism is accessible refer to the treatment policy $\mathbf{Z}_{>t}$. 
For simplification, the time-varying treatment $\mathbf{Z}_{>t}$ can be seen as 
the answers of queries $\mathbf{A}_{>t}=\{\mathbf{a}_{t}\}_{>t}$, 
recorded as the answer-based attention mechanism. 

\begin{definition}[\textbf{Position-wise CATE}]
    Let $\mathbf{z}_{t}$ be the assigned treatments over in timestep $t$, then the position-wise CATE 
    is defined as average causal effect of $\mathbf{X}_{t+1}$ under conditions 
    $\mathbf{H}_{\leq t}$ and do-operator $\mathrm{do}(\mathbf{z}_{t})$,
    \begin{equation}
            \mathrm{CATE}(\mathbf{H}_{\leq t},\mathbf{z}_{t})=\mathbb{E}[\mathbf{X}_{t+1}|\mathbf{H}_{\leq t},
            \mathrm{do}(\mathbf{z}_{t})]
            -\mathbb{E}[\mathbf{X}_{t+1}|\mathbf{H}_{\leq t},\mathbf{z}_{t}]
        \label{eqs:CATE}
    \end{equation}
    \label{def:CATE}
\end{definition}

\noindent \textbf{Reconstruction of Treatment Term and Outcome Term.} 
Considering the causal effect of $\mathbf{z}_{t}$ impacting on $\mathbf{X}_{t+1}$, a position-wise CATE 
(definition \ref{def:CATE}) is defined by utilizing $\mathbf{H}_{\leq t}$ and can be used to 
provide the keys $\mathbf{k}_{t+1}$ for attention mechanism,
\begin{equation}
    (\mathbf{a}_{t},\mathbf{k}_{t})=\left(\mathbf{z}_{t},\mathrm{CATE}(\mathbf{H}_{\leq t},\mathbf{z}_{t})\right)
    \label{eqs:answer-based}
\end{equation}

Thus a causal score $\alpha$, is computed as the normalized alignment between the answer 
$\mathbf{a}_{t}$ and keys $\mathbf{k}_{t\prime}$ at neighborhood positions $t^{\prime}\in\mathcal{N}(t)$ 
using an exponential position-wise CATE as kernel $\mathcal{K}(\cdot,\cdot)$.
\begin{equation}
    \alpha(\mathbf{a}_{t},\mathbf{k}_{t\prime})=\frac{\mathcal{K}(\mathbf{a}_{t},\mathbf{k}_{t})}
    {\sum_{t\prime\in\mathcal{N}(t)}\mathcal{K}(\mathbf{a}_{t},\mathbf{k}_{t\prime})}
    \label{eqs:causal-score}
\end{equation}
where ${\mathcal{K}}(\mathbf{a}_{t},\mathbf{k}_{t})=\exp\Bigl(\mathbf{k}_{t},(\mathbf{H}_{\leq t},\mathbf{a}_{t})\Bigr)$
and ${\mathcal{K}}(\mathbf{a}_{t},\mathbf{k}_{t})=\exp\Bigl(\mathbf{k}_{t}(\mathbf{H}_{\leq t},\mathbf{a}_{t})\Bigr)$.

For \textbf{Treatment Term}, based on the attention mechanism, 
the reconstruction $\mathbf{R}_{t}$ of $\mathbf{X}_{t}$ is produced as the 
value $\mathbf{X}_{t}$ weighted by causal score $\alpha(\mathbf{a}_{t},\mathbf{k}_{t\prime})$, followed by
\begin{equation}
    \mathbf{R}_{t|\mathbf{a}_{t}}=\alpha(\mathbf{a}_{t},\mathbf{k}_{t^{\prime}})\mathbf{X}_\mathcal{T},t=1,2,\cdots
    \label{eqs:reconstruction-x}
\end{equation}
Shown in Eq.(\ref{eqs:reconstruction-x}), the reconstruction $\mathbf{R}_{t}$ depend on the answer $\mathbf{a}_{t}$, 
namely, treatment policy $\mathbf{z}_{t}$. Thus, by assigned the available weights for rescontructing the 
covariates $\mathbf{X}_{t}$, the domain discrepancy and time-lags caused by treatment policy $\mathbf{z}_{t}$
can be negligible in industrial time-series forecasting.

\begin{definition}[\textbf{Conditional Markov Model for Sequence Outcomes}]
    The treatment-outcome assigned with treatment policy follows a Conditional Markov Model,
    \begin{equation}
        P(\mathbf{Y}_{>t}|\mathbf{H}_{\leq t},\mathbf{X}_{>t},\mathbf{Z}_{>t})=\prod_t
        P(\mathbf{Y}_{t+1}|\mathbf{H}_{\leq t},\mathbf{X}_{t+1},\mathbf{Z}_{t})
        \label{eqs:CMM}
    \end{equation}
    \label{def:CMM}
\end{definition}

For \textbf{Outcome Term}, Conditional Markov Model in definition \ref{def:CMM} is applied to achieve the
reconstruction of the outcome $\mathbf{Y}_{t+1}$, which is conditional on the observed covariates $\mathbf{H}_{\leq t}$, 
the latent covariates $\mathbf{X}_{t+1}$ and possible treatment $\mathbf{z}_{t}$. 
Once the reconstruction of treatment term, i.e., $\mathbf{R}_{t}$, has been obtained, 
the reconstruction of outcome term can be generalized via a sequence model 
\cite{hatt2021sequential} with CMM, shown as
\begin{equation}
    \mathbb{E}[\mathbf{Y}_{t+1}|\mathbf{H}_{\leq t}^\mathbf{R},\mathbf{R}_{t+1},
    \mathbf{Z}_{t}],t=1,2,\cdots 
    \label{eqs:reconstruction-y}
\end{equation}
where $\mathbf{H}_{\leq t}^{\mathbf{R}}=(\mathbf{R}_{1:t},\mathbf{Z}_{0:t-1},\mathbf{Y}_{1:t})$ denote 
the reconstruction of $\mathbf{H}_{\leq t}$. 

Once we obtained the reconstruction of treatment term and outcome term, the Eq.(\ref{eqs:reconstruction-y})
is applied to perform domain adaptation via an time-series model, called causal domain adaptation,
which aims at time-series forecasting based on answer-based attention mechanism 
discussed in following subsection.

\subsection{The Causal Domain Adaptation}
The proposed method Causal Domain Adaptation (CDA) Forecaster 
employs a sequence generator to make causal forecasting for industrial time-sereis. 
The domain-invariant causality shared by both domains guarantee the effectiveness in 
reconstructing treatment term and outcome term, 
while the reconstructions of both terms promote the performance of the learned 
domain-variant representation in time-series forecasting across domains. 
Fig.\ref{figs:cda} illustrates an overview of the proposed architecture.

\noindent \textbf{Adversarial Domain Adaptation.}
To compute the desired target prediction $\mathbf{Y}_{t}$ with limited time-series data, 
the manners of adversarial training on both domains are employed to 
formulize the following minimax problem, 
\begin{equation}
    \begin{split}
        \operatorname*{min}_{\mathcal{G}_\mathcal{S},\mathcal{G}_\mathcal{T}}\operatorname*{max}_{B} & \ \mathcal{L}_{seq}(\mathcal{D}_\mathcal{S};\mathcal{G}_\mathcal{S})+
        \mathcal{L}_{seq}(\mathcal{D}_\mathcal{T};\mathcal{G}_\mathcal{T})- \\ & \lambda\mathcal{L}_{dom}(\mathcal{D}_\mathcal{S},\mathcal{D}_\mathcal{T};B,\mathcal{G}_\mathcal{S},\mathcal{G}_\mathcal{T})       
    \end{split}
    \label{eqs:loss}
\end{equation}
where $\mathcal{G}_\mathcal{S}$ and $\mathcal{G}_\mathcal{T}$ denote the sequence generators that forecast the sequence 
for source domain dataset $\mathcal{D}_\mathcal{S}$ and target domain dataset $\mathcal{D}_\mathcal{T}$, respectively; 
parameter $\lambda\geq 0$ balances the estimation error $\mathcal{L}_{seq}$ and 
domain classification error $\mathcal{L}_{dom}$, and $B$ denotes a discriminator that classfies 
the domain between source and target.

Firstly, the loss term $\mathcal{L}_{seq}(\cdot)$ is induced by a sequence generator G as follows,
\begin{equation}
    \mathcal{L}_{seq}(\mathcal{D};\mathcal{G})=\sum_{i=1}^{N}\left(\frac{1}{T}
    \sum_{t=1}^{T}l(y_{i,t}^{\mathcal{D}},y_{i,t}^\mathcal{G})
    +\frac{1}{\tau}\sum_{t=\tau+1}^{T+\tau}l(y_{i,t}^{\mathcal{D}},y_{i,t}^\mathcal{G})\right)
    \label{eqs:loss_seq}
\end{equation}
where $l(\cdot)$ is a loss function and $y_{i,t}^{\mathcal{D}}, y_{i,t}^\mathcal{\mathcal{G}}$ 
are the actual sequence and estimated sequence provided by generator $\mathcal{G}$, respectively.

\begin{definition}[\textbf{Conditional Maximum Mean Discrepancy}]
    Given source $\mathcal{S}$ and target domain $\mathcal{T}$ 
    on the policy space $\mathbf{X}$ and policy condition $\mathbf{Z}$, 
    leading to the conditional maximum mean discrepancy with reproduction Kernel Hilbert Space $\mathcal{H}_{k}$,
    \begin{equation}
        d_{\mathrm{CMMD}}(\mathcal{S}|\mathbf{Z},\mathcal{T}|\mathbf{Z})=\|\mathbb{E}_{\mathbf{x}\sim\mathcal{S}}
        [\phi(\mathbf{X}|\mathbf{Z})]-\mathbb{E}_{\mathbf{X}\sim\mathcal{T}}[\phi(\mathbf{X}|\mathbf{Z})]
        \|_{\mathcal{H}_{k}}
        \label{eqs:cmmd}
    \end{equation}
    \label{def:cmmd}
    where $\phi(\mathbf{X}|\mathbf{Z}) \in \mathcal{H}$ is feature reconstruction map 
    associated with same condition $\mathbf{Z}$ for source and target domain.
\end{definition}

\begin{theorem}
    \label{the:the1}
    Let $\mu[P]$ be a distribution of $P$ in RKHS $\mathcal{H}_{k}$, 
    then via the reproducing property of RKHS $\mathcal{H}_{k}$, 
    we have 
    $\langle\phi, \mu[P_{\mathcal{T},\mathcal{S}}]\rangle=\mathbb{E}_{\mathbf{X}\sim\mathcal{T},\mathcal{S}}[\phi(\mathbf{X})]$
    for MMD,
    $\langle\phi, \mu[P_{\mathcal{T},\mathcal{S}|\mathbf{Z}}]\rangle=\mathbb{E}_{\mathbf{X}\sim\mathcal{T},\mathcal{S}}[\phi(\mathbf{X}|\mathbf{Z})]$
    for CMMD.
    Thus, the empirical estimate of squared CMMD shown in Definition \ref{def:cmmd} can be further simplifed as
    \begin{equation}
        \label{eqs:the1}
        \begin{aligned} 
            & \ \ \ \ d_{\mathrm{CMMD}}^2(\mathcal{S}|\mathbf{Z}, \mathcal{T}|\mathbf{Z}) \\
            & \leq \frac{1}{4}[d_{\mathrm{CMMD}}^2(\mathcal{S}|\mathbf{Z}, \mathcal{T}|\mathbf{Z})+2d_{\mathrm{CMMD}}(\mathcal{T}, \mathcal{T}|\mathbf{Z})d_{\mathrm{MMD}}(\mathcal{S}, \mathcal{T}) \\
            & \ \ \ + d_{\mathrm{CMMD}}^2(\mathcal{S}, \mathcal{S}| \mathbf{Z}) + d_{\mathrm{CMMD}}^2(\mathcal{T}, \mathcal{T}| \mathbf{Z}) + d_{\mathrm{MMD}}^2(\mathcal{S}, \mathcal{T})]
        \end{aligned} \nonumber
    \end{equation}
    where $d_{\mathrm{MMD}}$ is the standard Maximum Mean Distance (MMD),
    $d_{\mathrm{CMMD}}(\mathcal{S}, \mathcal{S}|\mathbf{Z})$ is applied to measure the closeness of source domain
    between the raw distributions $P_{\mathcal{S}}$ and condition distribution $P_{\mathcal{S}|\mathbf{Z}}$, and 
    $d_{\mathrm{CMMD}}(\mathcal{T}, \mathcal{T}|\mathbf{Z})$ is applied to measure the closeness of target domain
    between the raw distributions $P_{\mathcal{T}}$ and condition distribution $P_{\mathcal{T}|\mathbf{Z}}$.
\end{theorem}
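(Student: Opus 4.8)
The plan is to reduce the whole statement to elementary Hilbert-space geometry in the RKHS $\mathcal{H}_k$, using the reproducing property that is recalled in the first half of the theorem. First I would fix notation for the four kernel mean embeddings that appear: write $\mu_{\mathcal S}=\mathbb{E}_{\mathbf X\sim\mathcal S}[\phi(\mathbf X)]$, $\mu_{\mathcal T}=\mathbb{E}_{\mathbf X\sim\mathcal T}[\phi(\mathbf X)]$, and $\mu_{\mathcal S|\mathbf Z}=\mathbb{E}_{\mathbf X\sim\mathcal S}[\phi(\mathbf X|\mathbf Z)]$, $\mu_{\mathcal T|\mathbf Z}=\mathbb{E}_{\mathbf X\sim\mathcal T}[\phi(\mathbf X|\mathbf Z)]$. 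By the reproducing property $\langle\phi,\mu[P]\rangle=\mathbb{E}[\phi(\mathbf X)]$, Definition~\ref{def:cmmd} gives $d_{\mathrm{CMMD}}(\mathcal S|\mathbf Z,\mathcal T|\mathbf Z)=\|\mu_{\mathcal S|\mathbf Z}-\mu_{\mathcal T|\mathbf Z}\|_{\mathcal{H}_k}$ and, analogously, $d_{\mathrm{MMD}}(\mathcal S,\mathcal T)=\|\mu_{\mathcal S}-\mu_{\mathcal T}\|_{\mathcal{H}_k}$, $d_{\mathrm{CMMD}}(\mathcal S,\mathcal S|\mathbf Z)=\|\mu_{\mathcal S}-\mu_{\mathcal S|\mathbf Z}\|_{\mathcal{H}_k}$ and $d_{\mathrm{CMMD}}(\mathcal T,\mathcal T|\mathbf Z)=\|\mu_{\mathcal T}-\mu_{\mathcal T|\mathbf Z}\|_{\mathcal{H}_k}$. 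Passing from the population embeddings to their plug-in estimators $\widehat\mu$ only replaces expectations by sample averages and is compatible with every inequality used below, so it suffices to argue at the population level.

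The core step is a \emph{bridging decomposition}: insert the two marginal embeddings $\mu_{\mathcal S}$ and $\mu_{\mathcal T}$ between $\mu_{\mathcal S|\mathbf Z}$ and $\mu_{\mathcal T|\mathbf Z}$, giving the exact identity
\begin{equation}
\mu_{\mathcal S|\mathbf Z}-\mu_{\mathcal T|\mathbf Z}=\underbrace{(\mu_{\mathcal S|\mathbf Z}-\mu_{\mathcal S})}_{u}+\underbrace{(\mu_{\mathcal S}-\mu_{\mathcal T})}_{v}+\underbrace{(\mu_{\mathcal T}-\mu_{\mathcal T|\mathbf Z})}_{w},\nonumber
\end{equation}
with $\|u\|=d_{\mathrm{CMMD}}(\mathcal S,\mathcal S|\mathbf Z)$, $\|v\|=d_{\mathrm{MMD}}(\mathcal S,\mathcal T)$, $\|w\|=d_{\mathrm{CMMD}}(\mathcal T,\mathcal T|\mathbf Z)$. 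Expanding $\|u+v+w\|^2$ produces the diagonal part $\|u\|^2+\|v\|^2+\|w\|^2$ together with the three cross terms $2\langle u,v\rangle$, $2\langle u,w\rangle$, $2\langle v,w\rangle$. I would then control the cross terms asymmetrically: the term $2\langle v,w\rangle$ is bounded by Cauchy--Schwarz directly, $2\langle v,w\rangle\le 2\|v\|\|w\|=2\,d_{\mathrm{MMD}}(\mathcal S,\mathcal T)\,d_{\mathrm{CMMD}}(\mathcal T,\mathcal T|\mathbf Z)$, which is exactly the mixed term appearing in the claimed bound; the remaining contribution $2\langle u,v\rangle+2\langle u,w\rangle=2\langle u,\,v+w\rangle$ is re-expressed through $v+w=(\mu_{\mathcal S|\mathbf Z}-\mu_{\mathcal T|\mathbf Z})-u$ and then bounded by a Young-type inequality $2\langle u,x\rangle\le\varepsilon\|x\|^2+\varepsilon^{-1}\|u\|^2$, so that a fraction of $d_{\mathrm{CMMD}}^2(\mathcal S|\mathbf Z,\mathcal T|\mathbf Z)$ and multiples of $d_{\mathrm{CMMD}}^2(\mathcal S,\mathcal S|\mathbf Z)$ reappear on the right-hand side. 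Collecting the resulting terms is then meant to yield the stated inequality.

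The step I expect to be the main obstacle is precisely this last bookkeeping: the constant $\tfrac14$ in front of the bracket and, in particular, the fact that $d_{\mathrm{CMMD}}^2(\mathcal S|\mathbf Z,\mathcal T|\mathbf Z)$ itself reappears on the right force a very specific choice of the split parameter $\varepsilon$ in the Young inequality and of which copy of the bridging identity is re-inserted where; a naive triangle-inequality-then-square argument only delivers the cruder bound $d_{\mathrm{CMMD}}^2(\mathcal S|\mathbf Z,\mathcal T|\mathbf Z)\le\big(d_{\mathrm{CMMD}}(\mathcal S,\mathcal S|\mathbf Z)+d_{\mathrm{MMD}}(\mathcal S,\mathcal T)+d_{\mathrm{CMMD}}(\mathcal T,\mathcal T|\mathbf Z)\big)^2$, and one has to work to massage it into the self-referential form quoted here (that form is best read as the pre-collected statement, from which $\tfrac34\,d_{\mathrm{CMMD}}^2(\mathcal S|\mathbf Z,\mathcal T|\mathbf Z)\le\tfrac14[\cdots]$ follows by transposing the repeated term). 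A secondary point worth checking carefully is that the conditional feature reconstruction map $\phi(\cdot|\mathbf Z)$ of Definition~\ref{def:cmmd} lives in the same $\mathcal{H}_k$ as $\phi(\cdot)$, which is what legitimizes using $\mu_{\mathcal S}$ and $\mu_{\mathcal T}$ as bridge points in the first place.
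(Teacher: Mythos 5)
Your setup (mean embeddings, the bridging decomposition $u+v+w$, triangle and Cauchy--Schwarz on the cross terms) is the same skeleton the paper uses, but the step you flag as ``the main obstacle'' is not a bookkeeping nuisance --- it is a genuine gap that cannot be closed, because the inequality as stated is false as a Hilbert-space fact. Writing $D=\|u+v+w\|=d_{\mathrm{CMMD}}(\mathcal S|\mathbf Z,\mathcal T|\mathbf Z)$, $a=\|u\|=d_{\mathrm{CMMD}}(\mathcal S,\mathcal S|\mathbf Z)$, $b=\|w\|=d_{\mathrm{CMMD}}(\mathcal T,\mathcal T|\mathbf Z)$, $c=\|v\|=d_{\mathrm{MMD}}(\mathcal S,\mathcal T)$, the claimed bound reads $D^2\le\tfrac14[D^2+a^2+b^2+c^2+2bc]$, i.e.\ $3D^2\le a^2+(b+c)^2$. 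Take $u=0$ and $v,w$ nonzero and parallel: then $D=b+c$ and the claim becomes $3(b+c)^2\le(b+c)^2$, which fails. Your Young-inequality route, carried out honestly, yields at best something like $D^2\le 4a^2+\tfrac43(b+c)^2$ (taking $\varepsilon=\tfrac14$), never the uniform $\tfrac14$ coefficients of the statement; no choice of $\varepsilon$, and no re-insertion of the bridging identity, repairs this, for the reason above. So your closing admission that the collection of terms does not obviously go through is exactly right, and the defect is in the theorem rather than in your argument.

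For comparison, the paper's own proof takes a different and equally unsound shortcut. It first establishes the triangle bound $D\le a+b+c$ (with a sign slip in the intermediate decomposition, whose second bracket should be $\mu(P_{\mathcal T})-\mu(P_{\mathcal T|\mathbf Z})$ rather than a second copy of $\mu(P_{\mathcal S})-\mu(P_{\mathcal T|\mathbf Z})$ alongside $\mu(P_{\mathcal S})-\mu(P_{\mathcal T})$), converts it to $D^2\le\tfrac14(D+a+b+c)^2$ by adding $D$ to both sides and squaring, expands the square into four diagonal and six cross terms, and then simply declares five of the six nonnegative cross terms to be zero on the grounds that ``the mixed MMD is unvalid in domain adversarial learning.'' Deleting nonnegative terms from the right-hand side of an upper bound reverses the inequality, so this is not a proof; the only statement actually established there is the triangle inequality itself. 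Note also that the proof's internal logic retains the cross term $2D\,d_{\mathrm{MMD}}(\mathcal S,\mathcal T)$ (as in the appendix restatement and in Corollary~\ref{cor:cor1}), whereas the main-text statement you were given retains $2\,d_{\mathrm{CMMD}}(\mathcal T,\mathcal T|\mathbf Z)\,d_{\mathrm{MMD}}(\mathcal S,\mathcal T)$ instead; the two versions are inconsistent with each other, which is further evidence that the bound is an artifact of term-dropping rather than a derivable inequality. What \emph{can} be salvaged by your method is precisely $D\le a+b+c$ and its legitimate squared consequences, e.g.\ $D^2\le\tfrac14(D+a+b+c)^2$ with all ten expansion terms kept.
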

\noindent The proof of Theorem \ref{the:the1} has been presented in Appendix.

\begin{remark}
    The property of CMMD shown in Theorem \ref{the:the1} ensure that CMMD estimate the 4 types of closeness, including
    conditional closeness between source and target domain, i.e., $d_{\mathrm{CMMD}}(\mathcal{S}|\mathbf{Z}, \mathcal{T}|\mathbf{Z})$,
    partial conditional closeness between source and target domain, i.e., $d_{\mathrm{CMMD}}(\mathcal{S}, \mathcal{S}|\mathbf{Z}), d_{\mathrm{CMMD}}(\mathcal{T}, \mathcal{T}|\mathbf{Z})$
    and non-condition loseness between source and target domain, i.e., $d_{\mathrm{MMD}}(\mathcal{S}, \mathcal{T})$.
\end{remark}

\begin{corollary}
    \label{cor:cor1}
    In domain adversarial learning, the MMD can be transformed into a more tractable form 
    from the perspective of loss function \cite{jin2022domain}, that is,
    \begin{equation}
        \begin{aligned}
            & \ \ \ \ d_{\mathrm{CMMD}}^2(\mathcal{S}|\mathbf{Z}, \mathcal{T}|\mathbf{Z}) \\
            & \Rightarrow \mathcal{L}(\mathcal{D}_{\mathcal{S}}, \mathcal{D}_{\mathcal{T}},\mathcal{D}_{\mathcal{S}|\mathbf{Z}}, \mathcal{D}_{\mathcal{T}|\mathbf{Z}};\mathcal{G}_{\mathcal{S}}, \mathcal{G}_{\mathcal{T}}) \\
            & 
            = \mathcal{L}_{1}(\mathcal{D}_{\mathcal{S}}, \mathcal{D}_{\mathcal{T}};\mathcal{G}_{\mathcal{S}}, \mathcal{G}_{\mathcal{T}})
            + \mathcal{L}_{2}(\mathcal{D}_{\mathcal{S}|\mathbf{Z}}, \mathcal{D}_{\mathcal{T}|\mathbf{Z}};\mathcal{G}_{\mathcal{S}}, \mathcal{G}_{\mathcal{T}}) \\
            & 
            + \mathcal{L}_{3}(\mathcal{D}_{\mathcal{S}},\mathcal{D}_{\mathcal{S}|\mathbf{Z}};\mathcal{G}_{\mathcal{S}})
            + \mathcal{L}_{4}(\mathcal{D}_{\mathcal{T}},\mathcal{D}_{\mathcal{T}|\mathbf{Z}};\mathcal{G}_{\mathcal{T}}) \\
            & 
            + \mathcal{L}_{1}(\mathcal{D}_{\mathcal{S}}, \mathcal{D}_{\mathcal{T}};\mathcal{G}_{\mathcal{S}}, \mathcal{G}_{\mathcal{T}})^{\frac{1}{2}}
            \mathcal{L}_{2}(\mathcal{D}_{\mathcal{S}|\mathbf{Z}}, \mathcal{D}_{\mathcal{T}|\mathbf{Z}};\mathcal{G}_{\mathcal{S}}, \mathcal{G}_{\mathcal{T}})^{\frac{1}{2}}
        \end{aligned}
    \end{equation}
    where $\mathcal{D}_{\mathcal{S}}=\{\mathcal{S}, \mathcal{\hat{S}}\}, \mathcal{D}_{\mathcal{T}}=\{\mathcal{T}, \mathcal{\hat{T}}\}$, 
    $\mathcal{D}_{\mathcal{S}|\mathbf{Z}}=\{\mathcal{S}, \mathcal{\hat{S}}\}_{|\mathbf{Z}}$
    and $\mathcal{D}_{\mathcal{T}|\mathbf{Z}}=\{\mathcal{T}, \mathcal{\hat{T}}\}_{|\mathbf{Z}}$,
    which $\mathcal{\hat{S}}$ and $\mathcal{\hat{T}}$ 
    denote the generated source and target domains by sequence generators of source domain $\mathcal{G}_{\mathcal{S}}$
    and target domain $\mathcal{G}_{\mathcal{T}}$.
    \begin{equation}
        \left\{
        \begin{aligned}
        \mathcal{L}_{1}
        & =\beta_{1}\left\|\frac{1}{|\mathcal{D}_{\mathcal{S}}|}\sum_{\mathbf{X}\in{\mathcal{D}_{\mathcal{S}}}} \mathbf{X}
        -\frac{1}{|\mathcal{D}_{\mathcal{T}}|}\sum_{\mathbf{X}\in\mathcal{D}_{\mathcal{T}}} \mathbf{X}\right\|_{2}^{2} \\
        \mathcal{L}_{2}
        & =\beta_{2}\left\|\frac{1}{|\mathcal{D}_{\mathcal{S}}|}\sum_{\mathbf{a}\in\mathbf{Z}}\sum_{\mathbf{X}\in{\mathcal{D}_{\mathcal{S}|\mathbf{Z}}}} \mathbf{R}_{\mathbf{a}}
        -\frac{1}{|\mathcal{D}_{\mathcal{T}}|}\sum_{\mathbf{a}\in\mathbf{Z}}\sum_{\mathbf{X}\in\mathcal{D}_{\mathcal{T}|\mathbf{Z}}} \mathbf{R}_{\mathbf{a}}\right\|_{2}^{2}\\
        \mathcal{L}_{3}
        &=\beta_{3}\left\|\frac{1}{|\mathcal{D}_{\mathcal{S}}|}\sum_{\mathbf{X}\in{\mathcal{D}_{\mathcal{S}}}} \mathbf{X}
        -\frac{1}{|\mathcal{D}_{\mathcal{S}|\mathbf{Z}}|}\sum_{\mathbf{a}\in\mathbf{Z}}\sum_{\mathbf{X}\in\mathcal{D}_{\mathcal{S}|\mathbf{Z}}} \mathbf{R}_{\mathbf{a}}\right\|_{2}^{2}\\
        \mathcal{L}_{4}
        &=\beta_{4}\left\|\frac{1}{|\mathcal{D}_{\mathcal{T}}|}\sum_{\mathbf{X}\in{\mathcal{D}_{\mathcal{T}}}} \mathbf{X}
        -\frac{1}{|\mathcal{D}_{\mathcal{T}|\mathbf{Z}}|}\sum_{\mathbf{a}\in\mathbf{Z}}\sum_{\mathbf{X}\in\mathcal{D}_{\mathcal{T}|\mathbf{Z}}} \mathbf{R}_{\mathbf{a}}\right\|_{2}^{2}      
        \end{aligned}
        \right.
    \end{equation}
    where $\mathbf{R}_{\mathbf{a}}:=\alpha(\mathbf{a}, \mathbf{k})\mathbf{X}$, 
    $\mathbf{k}$ denote the keys in transfer learning,
    $|\cdot|$ is the cardinality function.
    and the constants $\beta$ are the balance parameters.
\end{corollary}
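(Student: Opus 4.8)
\emph{Proof proposal.} The plan is to obtain the corollary as a direct consequence of the upper bound in Theorem~\ref{the:the1}, by specialising the reproducing kernel of $\mathcal{H}_{k}$ to the linear (identity) feature map so that mean embeddings reduce to first moments, and then replacing every population moment by its empirical average over the datasets $\mathcal{D}_{\mathcal{S}},\mathcal{D}_{\mathcal{T}},\mathcal{D}_{\mathcal{S}|\mathbf{Z}},\mathcal{D}_{\mathcal{T}|\mathbf{Z}}$, which collect the observed samples together with the generator outputs $\hat{\mathcal{S}},\hat{\mathcal{T}}$. I would start from the right-hand side of Theorem~\ref{the:the1}, which already writes $d_{\mathrm{CMMD}}^{2}(\mathcal{S}|\mathbf{Z},\mathcal{T}|\mathbf{Z})$ as a weighted sum of the four closeness quantities plus one cross term, and handle each summand in turn, following the moment-matching surrogate used in \cite{jin2022domain}.

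First I would recall that with $\phi$ the identity map the mean embedding $\mu[P]$ of a distribution is its first moment, so $\mu[\hat{P}]=\tfrac{1}{|\mathcal{D}|}\sum_{\mathbf{X}\in\mathcal{D}}\mathbf{X}$ for an empirical distribution, and, by Definition~\ref{def:cmmd}, the conditional embedding $\mathbb{E}[\phi(\mathbf{X}|\mathbf{Z})]$ becomes the average over conditions $\mathbf{a}\in\mathbf{Z}$ of the answer-based reconstruction $\mathbf{R}_{\mathbf{a}}=\alpha(\mathbf{a},\mathbf{k})\mathbf{X}$ of Eq.~(\ref{eqs:reconstruction-x}). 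Substituting these identities turns $d_{\mathrm{MMD}}^{2}(\mathcal{S},\mathcal{T})$ into the squared Euclidean distance between the raw source and target sample means, i.e. $\mathcal{L}_{1}$; turns $d_{\mathrm{CMMD}}^{2}(\mathcal{S}|\mathbf{Z},\mathcal{T}|\mathbf{Z})$ into the squared distance between the reconstruction means, i.e. $\mathcal{L}_{2}$; and turns the two partial terms $d_{\mathrm{CMMD}}^{2}(\mathcal{S},\mathcal{S}|\mathbf{Z})$ and $d_{\mathrm{CMMD}}^{2}(\mathcal{T},\mathcal{T}|\mathbf{Z})$ into $\mathcal{L}_{3}$ and $\mathcal{L}_{4}$. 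The prefactor $\tfrac14$ and the per-domain sample-size normalisations are absorbed into the balance constants $\beta_{1},\dots,\beta_{4}$, while the copy of $d_{\mathrm{CMMD}}^{2}(\mathcal{S}|\mathbf{Z},\mathcal{T}|\mathbf{Z})$ that reappears on the right is moved to the left and its residual coefficient folded into $\beta_{2}$.

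For the cross term $2\,d_{\mathrm{CMMD}}(\mathcal{T},\mathcal{T}|\mathbf{Z})\,d_{\mathrm{MMD}}(\mathcal{S},\mathcal{T})$ I would use that, under the linear kernel, each distance is exactly the square root of the matching squared surrogate, so the product collapses to a term $\mathcal{L}_{i}^{1/2}\mathcal{L}_{j}^{1/2}$ whose index pair is fixed by the closeness quantities entering the cross term; expressing all averages over the augmented sets $\mathcal{D}_{\mathcal{S}}=\{\mathcal{S},\hat{\mathcal{S}}\}$ and $\mathcal{D}_{\mathcal{T}}=\{\mathcal{T},\hat{\mathcal{T}}\}$ makes $\mathcal{G}_{\mathcal{S}}$ and $\mathcal{G}_{\mathcal{T}}$ enter explicitly, giving the objective $\mathcal{L}(\mathcal{D}_{\mathcal{S}},\mathcal{D}_{\mathcal{T}},\mathcal{D}_{\mathcal{S}|\mathbf{Z}},\mathcal{D}_{\mathcal{T}|\mathbf{Z}};\mathcal{G}_{\mathcal{S}},\mathcal{G}_{\mathcal{T}})$. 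Collecting the five matched pieces then reproduces the stated decomposition $\mathcal{L}=\mathcal{L}_{1}+\mathcal{L}_{2}+\mathcal{L}_{3}+\mathcal{L}_{4}+\mathcal{L}_{1}^{1/2}\mathcal{L}_{2}^{1/2}$.

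The step I expect to be the main obstacle is justifying the arrow ``$\Rightarrow$'': it is a tractable-surrogate relation rather than an equality, so I must argue (i) that replacing the RKHS norm by first-moment matching is legitimate --- it is exact for the linear kernel and otherwise a controlled relaxation, as in \cite{jin2022domain} --- and (ii) that the empirical conditional embedding built from $\mathbf{R}_{\mathbf{a}}$ is an asymptotically unbiased estimator of $\mathbb{E}[\phi(\mathbf{X}|\mathbf{Z})]$, which rests on the shared-causality assumption so that $\phi(\cdot|\mathbf{Z})$ is genuinely the same feature map for source and target. The remaining work --- expanding squared norms, one more application of the triangle inequality for the cross term, and renaming constants --- is routine bookkeeping.
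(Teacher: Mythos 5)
Your proposal follows essentially the same route as the paper: take the bound of Theorem~\ref{the:the1}, replace each (squared) MMD/CMMD by its empirical moment-matching surrogate as in \cite{jin2022domain} (the unconditional ones becoming squared Euclidean distances between sample means, the conditional ones becoming distances between means of the reconstructions $\mathbf{R}_{\mathbf{a}}=\alpha(\mathbf{a},\mathbf{k})\mathbf{X}$ from Eq.~(\ref{eqs:reconstruction-x})), and absorb the $\tfrac14$ and $\tfrac12$ prefactors into the balance constants $\beta_{i}$ and $\gamma$. Your extra care about why the surrogate is legitimate (linear feature map, unbiasedness of the conditional embedding) and your suggestion to move the self-referential copy of $d_{\mathrm{CMMD}}^{2}(\mathcal{S}|\mathbf{Z},\mathcal{T}|\mathbf{Z})$ to the left-hand side go beyond what the paper does, which simply cites the surrogate relation and keeps the term on the right.

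The one point to fix is the cross term. You quote it as $2\,d_{\mathrm{CMMD}}(\mathcal{T},\mathcal{T}|\mathbf{Z})\,d_{\mathrm{MMD}}(\mathcal{S},\mathcal{T})$ (the form appearing in the main-text statement of Theorem~\ref{the:the1}) and then say the index pair of $\mathcal{L}_{i}^{1/2}\mathcal{L}_{j}^{1/2}$ is ``fixed by the closeness quantities entering the cross term''; applied literally, that rule yields $\mathcal{L}_{4}^{1/2}\mathcal{L}_{1}^{1/2}$, not the $\mathcal{L}_{1}^{1/2}\mathcal{L}_{2}^{1/2}$ you conclude with. The version of Theorem~\ref{the:the1} proved in the appendix retains the cross term $2\,d_{\mathrm{CMMD}}(\mathcal{S}|\mathbf{Z},\mathcal{T}|\mathbf{Z})\,d_{\mathrm{MMD}}(\mathcal{S},\mathcal{T})$ --- this is the only product not annihilated in that proof --- and it is this form whose surrogate is $\mathcal{L}_{2}^{1/2}\mathcal{L}_{1}^{1/2}$, matching the corollary. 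So your derivation goes through once you start from the appendix form of the theorem; as written, your cross-term step and your stated conclusion are inconsistent with each other.
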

\noindent The proof of Corollary \ref{cor:cor1} has been presented in Appendix.

\begin{remark}
    The unified loss function 
    $\mathcal{L}(\mathcal{D}_{\mathcal{S}}, \mathcal{D}_{\mathcal{T}},\mathcal{D}_{\mathcal{S}|\mathbf{Z}}, \mathcal{D}_{\mathcal{T}|\mathbf{Z}};\mathcal{G}_{\mathcal{S}}, \mathcal{G}_{\mathcal{T}})$
    in Corollary \ref{cor:cor1} consists of 4 types independent loss function, 
    $\mathcal{L}_{1}, \mathcal{L}_{2}, \mathcal{L}_{3}$ and $\mathcal{L}_{4}$.
    The $\mathcal{L}_{1}$ estimate the error between source and target domain,
    the $\mathcal{L}_{2}$ estimate the error between source and target domain with policy,
    the $\mathcal{L}_{3}$ estimate the error between source domain with policy and source domain without policy,
    the $\mathcal{L}_{4}$ estimate the error between target domain with policy and target domain without policy.
    Thus, $\mathcal{L}(\mathcal{D}_{\mathcal{S}}, \mathcal{D}_{\mathcal{T}},\mathcal{D}_{\mathcal{S}|\mathbf{Z}}, \mathcal{D}_{\mathcal{T}|\mathbf{Z}};\mathcal{G}_{\mathcal{S}}, \mathcal{G}_{\mathcal{T}})$ 
    can estimate the complete domain classification error.
\end{remark}

Subsequently, since the domain-invariant causality shared by both domains, 
and both terms has been reconstructed, 
a domain discriminator is introduced to recognize the decision boundary for source and target domains 
by given the reconstruction $\mathbf{R}_{t_{|\mathbf{a}_{t}}}$. 
Shown in Corollary \ref{cor:cor1}, a temporal loss function for domain classification error $\mathcal{L}_{dom}$ 
is defined to compute the distance between source and target domains under treatment policy $\mathbf{a}_{t}$,
\begin{equation}
    \label{eqs:loss_dis}
    \begin{aligned}
        & \ \ \ \ \mathcal{L}_{dom}(\mathcal{D}_{\mathcal{S}}, \mathcal{D}_{\mathcal{T}},\mathcal{D}_{\mathcal{S}|\mathbf{Z}}, \mathcal{D}_{\mathcal{T}|\mathbf{Z}};\mathcal{G}_{\mathcal{S}}, \mathcal{G}_{\mathcal{T}}) \\
        &=\left\|\frac{1}{|\mathcal{D}_{\mathcal{S}}|}\sum_{\mathbf{X}_{t}\in{\mathcal{D}_{\mathcal{S}}}} \mathbf{X}_{t}
        -\frac{1}{|\mathcal{D}_{\mathcal{T}}|}\sum_{\mathbf{X}_{t}\in\mathcal{D}_{\mathcal{T}}} \mathbf{X}_{t}\right\|_{2}^{2} \\
        &+\left\|\frac{1}{|\mathcal{D}_{\mathcal{S}}|}\sum_{\mathbf{a}_{t}\in\mathbf{Z}}\sum_{\mathbf{X}_{t}\in{\mathcal{D}_{\mathcal{S}|\mathbf{Z}}}} \mathbf{R}_{t|{\mathbf{a}_{t}}}
        -\frac{1}{|\mathcal{D}_{\mathcal{T}}|}\sum_{\mathbf{a}_{t}\in\mathbf{Z}}\sum_{\mathbf{X}_{t}\in\mathcal{D}_{\mathcal{T}|\mathbf{Z}}} \mathbf{R}_{t|{\mathbf{a}_{t}}}\right\|_{2}^{2}\\    
        &+\left\|\frac{1}{|\mathcal{D}_{\mathcal{S}}|}\sum_{\mathbf{X}_{t}\in{\mathcal{D}_{\mathcal{S}}}} \mathbf{X}_{t}
        -\frac{1}{|\mathcal{D}_{\mathcal{S}|\mathbf{Z}}|}\sum_{\mathbf{a}_{t}\in\mathbf{Z}}\sum_{\mathbf{X}_{t}\in\mathcal{D}_{\mathcal{S}|\mathbf{Z}}} \mathbf{R}_{t|{\mathbf{a}_{t}}}\right\|_{2}^{2}\\
        &+\left\|\frac{1}{|\mathcal{D}_{\mathcal{T}}|}\sum_{\mathbf{X}_{t}\in{\mathcal{D}_{\mathcal{T}}}} \mathbf{X}_{t}
        -\frac{1}{|\mathcal{D}_{\mathcal{T}|\mathbf{Z}}|}\sum_{\mathbf{a}_{t}\in\mathbf{Z}}\sum_{\mathbf{X}_{t}\in\mathcal{D}_{\mathcal{T}|\mathbf{Z}}} \mathbf{R}_{t|{\mathbf{a}_{t}}}\right\|_{2}^{2}      
    \end{aligned}
\end{equation}
where $|\mathcal{X}^{S}|$ and $|\mathcal{X}^{T}|$ denote the cardinality associated with 
the source $\mathcal{D}_\mathcal{S}$ and target $\mathcal{D}_\mathcal{T}$.

\noindent \textbf{Domain Adversarial Learning.} 
Recalling the problem formulation, we have defined a joint treatment-outcome model Eq.(\ref{eqs:joint-model}) 
based on the shared causality and Markov model. 
The shared causality induces the domain-invariant features $\mathbf{R}_{t_{|\mathbf{a}_{t}}}$ calculated by 
the answers of queries $\mathbf{A}|\mathbf{Z}$ and keys $\mathbf{K}|\mathrm{CATE}$ across domains. 
While a domain discriminator tries to classify the domain between source and target data, 
the defined sequence generators in this section are trained to confuse the discriminator, 
aiming at the better generalization to a worst-case set of discriminator. 
By adopting the MSE loss for $l(\cdot)$ , 
the minimax objective in Eq.(\ref{eqs:loss}) is now formally defined over generators $\mathcal{G}_\mathcal{S},\mathcal{G}_\mathcal{T}$
with parameter $\Theta_\mathcal{G}$ and domain discriminator $B$ with parameter $\Theta_{B}$. 
Algorithm \ref{alg:cda} summarized the training routine of CDA. 
The CDA alternately update the $\Theta_\mathcal{G}$ and $\Theta_{B}$ in opposite directions 
so that $\mathcal{G}_\mathcal{S},\mathcal{G}_\mathcal{T}$ and $B$ are trained adversarially. 
Here, $\widehat{\mathbf{X}},\widehat{\mathbf{Y}}$ denote the predictive value of $\mathbf{X}, \mathbf{Y}$
for sequence generators.

\begin{algorithm*}[!t] 
    \caption{\textbf{Adversarial Training of CDA}} 
    \label{alg:cda}
    \begin{algorithmic}[1]
    \Require datasets $\mathcal{D}_\mathcal{S},\mathcal{D}_\mathcal{T}$, epoches $E$, setp sizes
    \Ensure Trained CDA forecaster parameterized by $\Theta_\mathcal{G}$ and $\Theta_{B}$
    \State \textbf{Initialization:} parameters $\Theta_\mathcal{G}$ and $\Theta_{B}$
    \For {$\mathrm{epoch}=1$ to $E$} 
        \Repeat
            \State sample $(\mathbf{X}_\mathcal{S},\mathbf{Y}_\mathcal{S},\mathbf{Z}_\mathcal{S})\sim \mathcal{D}_\mathcal{S}$ 
                    and $(\mathbf{X}_\mathcal{T},\mathbf{Y}_\mathcal{T},\mathbf{Z}_\mathcal{T})\sim \mathcal{D}_\mathcal{T}$
            \State generate $\mathbf{R}_\mathcal{S},\widehat{\mathbf{Y}}_\mathcal{S}=\mathcal{G}_\mathcal{S}(\mathbf{X}_\mathcal{S},\mathbf{Z}_\mathcal{S})$ 
                    and $\mathbf{R}_\mathcal{T},\widehat{\mathbf{Y}}_\mathcal{T}=\mathcal{G}_\mathcal{T}(\mathbf{X}_\mathcal{T},\mathbf{Z}_\mathcal{T})$
            \State compute $\mathcal{L}_{seq}(\mathbf{Y}_\mathcal{S}, \widehat{\mathbf{Y}}_\mathcal{S})$, 
                    $\mathcal{L}_{seq}(\mathbf{Y}_\mathcal{T}, \widehat{\mathbf{Y}}_\mathcal{T})$ for $\mathcal{G}_\mathcal{S}, \mathcal{G}_\mathcal{T}$,        
                    $\mathcal{L}_{dom}(\mathbf{X}_\mathcal{S}, \mathbf{X}_\mathcal{T}, \mathbf{Z}_\mathcal{S}, \mathbf{Z}_\mathcal{T})$ for $B$ 
                    \Comment{Using Eq.(\ref{eqs:loss_dis}) and (\ref{eqs:loss_seq})}
            \State compute $\mathcal{L}=\mathcal{L}_{seq}(\mathbf{Y}_\mathcal{S}, \widehat{\mathbf{Y}}_\mathcal{S})
                    +\mathcal{L}_{seq}(\mathbf{Y}_\mathcal{T}, \widehat{\mathbf{Y}}_\mathcal{T})
                    +\lambda\mathcal{L}_{dom}(\mathbf{R}_\mathcal{S}, \mathbf{R}_\mathcal{T}, \mathbf{Z}_\mathcal{S}, \mathbf{Z}_\mathcal{T})$ 
                    \Comment{Using Eq.(\ref{eqs:loss})}
            \State gradient descent with $\nabla_{\Theta_\mathcal{G}}\mathcal{L}$ to update $\mathcal{G}_\mathcal{S}, \mathcal{G}_\mathcal{T}$
            \State gradient descent with $\nabla_{\Theta_{B}}\mathcal{L}$ to update $B$
        \Until $\mathcal{D}_\mathcal{T}$ is exhausted
    \EndFor
  \end{algorithmic}
\end{algorithm*}

\section{Experimental Verification}
\label{Experiments}
In this section, we verify the effectiveness of the proposed CDA in adapting from source domain to 
target domain, and then estimate counterfactual outcome trajectories under treatment policy. 
The experiments proceeding in real-word oilfield dataset are composed of two parts, 
that is, 
time-series forecasting and the optimal policy exploiting for oil production, respectively.

\begin{figure}[!t]
    \centering
    \includegraphics[width=0.5\textwidth, height=5cm]{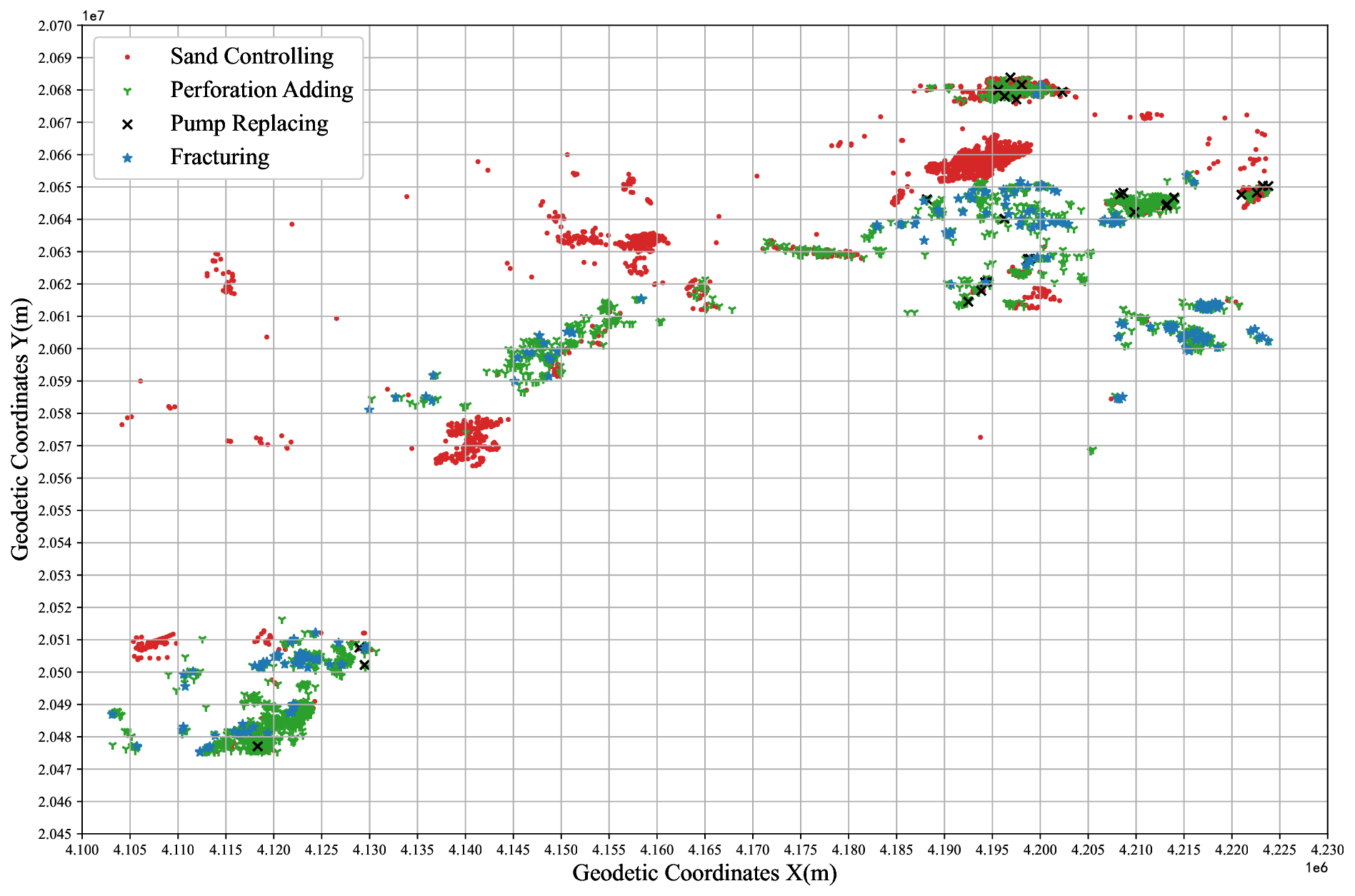}
    \caption{The location of oil wells.}
    \label{figs:platform}     
\end{figure}

\subsection{Experimental Apparatus}
\noindent \textbf{Datasets in oil and gas industry.} 
The target oil block , GD-Ng52+3, is a typical sample of waterflooding oilfields in China . 
It is located  in East China, which has well developed sand body and thick oil layer. 
The area of this block is about 9.1 $\mathrm{km}^{2}$, with an average effective thickness of 7.9 $\mathrm{m}$, 
holding the geological reserves of 13150 million tons. 
This block was developed by water flooding since 1986, 
which has a long history and adequate development data. 
Now it is at the stage of high water cut, 
which faces a critical issue about how to optimze the water inject/oil product scheme.
The real-world oilfield datasets are sampled from the above oil block in Shandong Province, China. 
Notably, the oilfield dataset are synthetic 
exploitation and production monthly time-series of 1474 wells from 2001 to 2020, 
comprising a total of 353760 recods. 
The dataset contains 14 indicators, covering dynamical engineering variable $\mathbf{X}$,
dynamical policy variable $\mathbf{Z}$, dynamical production variable $\mathbf{Y}$ 
and static geological variable $\mathbf{U}$, shown in Table \ref{tab:datasets}. 
In oil and gas industry, the policy can be dividided into the 4 typical classification,
i.e., \textit{Sand Controlling, Perforation Adding, Pump Replacing} and \textit{Fracturing}, 
where the top 3 policies are the productive measure and the last policy is the protective policy.
Thus, the dataset can be dividide to 4 partions according to the property of policy $\mathbf{Z}$.
The characteristics of variables w.r.t. 4 partions are described in Table \ref{tab:statistic}

The ubiquity of temporal causality exists in oil and gas industry, but it is accompanied by the time lags 
and indirect causal effect due to the properties of oilfield exploitation. 
Specifically, oil production $\mathbf{Y}$ has been demonstrated as lags in response when one of treatment 
policy $\mathbf{Z}$ is applied in activity, and policy indirectly affects oil production $\mathbf{Y}$ by 
directly influencing engineering factors $\mathbf{X}$.

\noindent \textbf{Experimental settings} 
The experiments consist of 3 independent phase for oil production $\mathbf{Y}$, including
inside-well time-sereis forecasting, cross-well time-sereis predicting 
and optimal policy determining.
\begin{itemize}
    \item The experiment, inside-well time-sereis forecasting, 
    is to forecast the oil production in the coming months
    by utilizing the history time-series of whole oil wells. 
    \item The experiment, cross-well time-sereis predicting, 
    is to predict the oil production in the entire period 
    by utilizing the history time-series of partial oil wells.
    \item The experiment, optimal policy determining, 
    is to determine the optimal policy in improving oil production for a specifical well.
\end{itemize}
For inside-well time-sereis forecasting and cross-well time-sereis predicting,
we compare CDA with the single-domain and cross-domain forecaster. 
The conventional single-domain forecasters trained only on source or target domain, 
including DecoderMLP, LSTM, GRU, Bi-LSTM, Bi-GRU, DeepAR and N-Beats. 
The cross-domain forecasters trained on both source and target domain, 
including Transformer, TFT, DANN, MMDA, SASA and our CDA. 
Specifically, for each experiment, 
the source domain $\mathcal{D}_\mathcal{S}$ is $\left[\mathbf{X}, \mathbf{Z}, \mathbf{U}\right]$
for the models TFT and ours, 
while the source domain of others $\mathcal{D}_\mathcal{S}$ is $\left[\mathbf{X}, \mathbf{Z}\right]$.
In addition, $R^{2}$ score, Root Mean Square Error (RMSE) and Mean Absolute Error (MAE) are employed as 
evaluation metrics to assess the performance of models, which are as follows,
\begin{equation}
    \begin{aligned}
        \mathrm{R^{2}}&=1-\mathrm{SSE/SST} \\
        \mathrm{RMSE}&=\left(\sum\nolimits_{t=1}^{n}(\hat{y}_{t}-y_{t})/n\right)^{1/2} \\
        \mathrm{MAE}&=\frac{1}{n}\sum\nolimits_{t=1}^{n}|\hat{y}_{t}-y_{t}| \\
    \end{aligned}
    \label{eqs:metrics}
\end{equation}
where $\mathrm{SSE}=\sum\nolimits_{t=1}^n(\hat{y}_\mathcal{T}-y_\mathcal{T})^2,
\mathrm{SST}=\sum\nolimits_{t=1}^n(\bar{y}_\mathcal{T}-y_\mathcal{T})^2$; 
$y_{t},{\widehat{y}}_{t}$, and ${\bar{y}}_{t}$ denote the actual value, predictive value, 
and mean value at timestep $t$.

\begin{table*}[!t]
    \caption{Categorization and variables of datasets}
    \centering
    \label{tab:datasets}
    \begin{threeparttable}
    \begin{tabular}{c|c|c}
    \bottomrule
    \textbf{Category}    & \textbf{Variables \tnote{1}}        & \textbf{The Remarks of $\mathbf{Z}$ \tnote{2}} \\ \hline 
    \multirow{3}{*}{Dynamical} & \makecell[l]{
    $\mathbf{X}_{1}$, Days of produced $(\mathrm{d})$,\\ 
    $\mathbf{X}_{2}$, Effective thickness $(\mathrm{m})$,\\
    $\mathbf{X}_{3}$, Pump depth $(\mathrm{m})$,\\ 
    $\mathbf{X}_{4}$, Pump diameter $(\mathrm{m})$,\\ 
    $\mathbf{X}_{5}$, Pump efficiency $(\mathrm{\%})$,\\
    $\mathbf{X}_{6}$, Displacement $(\mathrm{t})$ \\ 
    $\mathbf{X}_{7}$, Dynamic fluid interface $(\mathrm{m})$ \\
    $\mathbf{X}_{8}$, Stroke $(\mathrm{m})$,\\ 
    $\mathbf{X}_{9}$, Frequency of stroke $(\mathrm{SPM})$,\\
    $\mathbf{X}_{10}$, Casing pressure $(\mathrm{MPa})$
    } & \multirow{4}{*}{\makecell[l]{\textbf{Sand Controlling}, i.e., 
    Protecting the relevant instruments of oil wells, \\
    and slightly improving the capacity of producing ; \\
    \textbf{Perforation Adding}, i.e., Enlarging the contact area of reservoir,  \\ 
    and improving the skin factor; \\
    \textbf{Pump Replacing}, i.e., Changing the pressure difference in producing, \\
    and improving the capacity of supplying liquid; \\
    \textbf{Fracturing}, i.e., Improving the permeability, and enhancing the capacity \\ 
    of liquid flowing in the formation. \\
    }} \\ \cline{2-2}
                       & \makecell[l]{
                       $\mathbf{Z}_{1}$, Sand Controlling, \\ 
                       $\mathbf{Z}_{2}$, Perforation Adding, \\ 
                       $\mathbf{Z}_{3}$, Pump Replacing, \\ 
                       $\mathbf{Z}_{4}$, Fracturing} &                    \\ \cline{2-2}
                       & \makecell[l]{$\mathbf{Y}$, Monthly Oil Production $(\mathrm{t})$} &                 \\ \cline{1-2}
    {Static}                  & \makecell[l]{
        $\mathbf{U}_{1}$,  Formation temperature (\textcelsius),\\
        $\mathbf{U}_{2}$,  Formation pressure$(\mathrm{MPa})$} &                    \\ 
    \toprule
    \end{tabular}
    \begin{tablenotes}
    \footnotesize
    \item[1] {The vectors $\mathbf{X}=\{\mathbf{X}_{1}, \mathbf{X}_{2}, \cdots, \mathbf{X}_{10}\}$, 
    $\mathbf{Z}=\{\mathbf{Z}_{1}, \mathbf{Z}_{2}, \mathbf{Z}_{3}, \mathbf{Z}_{4}\}$, 
    $\mathbf{U}=\{\mathbf{U}_{1}, \mathbf{U}_{2}\}$.}
    \item[2] {(Can be classified into four types)
    The policies, including Perforation Adding, Pump Replacing and Fracturing, 
    aim at improving the production capacity, 
    while Sand Controlling is utilized to protect the instruments of oil wells.}
    \end{tablenotes}
    \end{threeparttable}
\end{table*}

\begin{table*}[!t]
    \caption{The characteristics of oilfield datasets for 4 policies}
    \centering
    \label{tab:statistic}
    \begin{tabular}{c|cccccc|cccccc}
    \bottomrule
    \multirow{2}{*}{Variables} & \multicolumn{6}{c|}{\textbf{Sand Controlling}}                                                                                                                                                       & \multicolumn{6}{c}{\textbf{Perforation Adding}}                                                                                                                                                     \\ \cline{2-13} 
                                     & \multicolumn{1}{c}{\textbf{min}} & \multicolumn{1}{c}{\textbf{max}} & \multicolumn{1}{c}{\textbf{mean}} & \multicolumn{1}{c}{\textbf{std}} & \multicolumn{1}{c}{\textbf{count}} & \multicolumn{1}{c|}{\textbf{num}} & \multicolumn{1}{c}{\textbf{min}} & \multicolumn{1}{c}{\textbf{max}} & \multicolumn{1}{c}{\textbf{mean}} & \multicolumn{1}{c}{\textbf{std}} & \multicolumn{1}{c}{\textbf{count}} & \multicolumn{1}{c}{\textbf{num}} \\ \hline
    $\mathbf{X}_{1}$                 & \multicolumn{1}{c}{0}            & \multicolumn{1}{c}{31}           & \multicolumn{1}{c}{20}            & \multicolumn{1}{c}{12}           & \multicolumn{1}{c}{234000}         & 975           & \multicolumn{1}{c}{0}            & \multicolumn{1}{c}{31}           & \multicolumn{1}{c}{12}            & \multicolumn{1}{c}{22}           & \multicolumn{1}{c}{101760}         & 424           \\ 
    $\mathbf{X}_{2}$                 & \multicolumn{1}{c}{0}            & \multicolumn{1}{c}{2017.6}       & \multicolumn{1}{c}{8.24}          & \multicolumn{1}{c}{15.12}        & \multicolumn{1}{c}{234000}         & 975           & \multicolumn{1}{c}{0}            & \multicolumn{1}{c}{455}          & \multicolumn{1}{c}{17.59}         & \multicolumn{1}{c}{12.1}         & \multicolumn{1}{c}{101760}         & 424           \\ 
    $\mathbf{X}_{3}$                 & \multicolumn{1}{c}{0}            & \multicolumn{1}{c}{14149.3}      & \multicolumn{1}{c}{950}           & \multicolumn{1}{c}{281.1}        & \multicolumn{1}{c}{234000}         & 975           & \multicolumn{1}{c}{0}            & \multicolumn{1}{c}{8960.6}       & \multicolumn{1}{c}{452.2}         & \multicolumn{1}{c}{1311}         & \multicolumn{1}{c}{101760}         & 424           \\ 
    $\mathbf{X}_{4}$                 & \multicolumn{1}{c}{0}            & \multicolumn{1}{c}{997}          & \multicolumn{1}{c}{62.05}         & \multicolumn{1}{c}{27.4}         & \multicolumn{1}{c}{234000}         & 975           & \multicolumn{1}{c}{0}            & \multicolumn{1}{c}{150}          & \multicolumn{1}{c}{13.26}         & \multicolumn{1}{c}{52.91}        & \multicolumn{1}{c}{101760}         & 424           \\ 
    $\mathbf{X}_{5}$                 & \multicolumn{1}{c}{0}            & \multicolumn{1}{c}{212970}       & \multicolumn{1}{c}{44.5}          & \multicolumn{1}{c}{452.4}        & \multicolumn{1}{c}{234000}         & 975           & \multicolumn{1}{c}{0}            & \multicolumn{1}{c}{48690}        & \multicolumn{1}{c}{355.1}         & \multicolumn{1}{c}{54.6}         & \multicolumn{1}{c}{101760}         & 424           \\ 
    $\mathbf{X}_{6}$                 & \multicolumn{1}{c}{0}            & \multicolumn{1}{c}{17854.6}      & \multicolumn{1}{c}{60.8}          & \multicolumn{1}{c}{124.4}        & \multicolumn{1}{c}{234000}         & 975           & \multicolumn{1}{c}{0}            & \multicolumn{1}{c}{4003.2}       & \multicolumn{1}{c}{45.28}         & \multicolumn{1}{c}{49.11}        & \multicolumn{1}{c}{101760}         & 424           \\ 
    $\mathbf{X}_{7}$                 & \multicolumn{1}{c}{1}            & \multicolumn{1}{c}{8444}         & \multicolumn{1}{c}{578.9}         & \multicolumn{1}{c}{365.8}        & \multicolumn{1}{c}{234000}         & 975           & \multicolumn{1}{c}{1}            & \multicolumn{1}{c}{8756}         & \multicolumn{1}{c}{544.5}         & \multicolumn{1}{c}{829.6}        & \multicolumn{1}{c}{101760}         & 424           \\ 
    $\mathbf{X}_{8}$                 & \multicolumn{1}{c}{0}            & \multicolumn{1}{c}{99}           & \multicolumn{1}{c}{3.46}          & \multicolumn{1}{c}{6.27}         & \multicolumn{1}{c}{234000}         & 975           & \multicolumn{1}{c}{0}            & \multicolumn{1}{c}{99}           & \multicolumn{1}{c}{2.01}          & \multicolumn{1}{c}{3.44}         & \multicolumn{1}{c}{101760}         & 424           \\ 
    $\mathbf{X}_{9}$                 & \multicolumn{1}{c}{0}            & \multicolumn{1}{c}{306}          & \multicolumn{1}{c}{10.5}          & \multicolumn{1}{c}{21.94}        & \multicolumn{1}{c}{234000}         & 975           & \multicolumn{1}{c}{0}            & \multicolumn{1}{c}{120}          & \multicolumn{1}{c}{8.19}          & \multicolumn{1}{c}{5.89}         & \multicolumn{1}{c}{101760}         & 424           \\ 
    $\mathbf{X}_{10}$                & \multicolumn{1}{c}{0}            & \multicolumn{1}{c}{240}          & \multicolumn{1}{c}{1.14}          & \multicolumn{1}{c}{3.64}         & \multicolumn{1}{c}{234000}         & 975           & \multicolumn{1}{c}{0}            & \multicolumn{1}{c}{83}           & \multicolumn{1}{c}{2.91}          & \multicolumn{1}{c}{1.30}         & \multicolumn{1}{c}{101760}         & 424           \\ 
    $\mathbf{Y}$                     & \multicolumn{1}{c}{0}            & \multicolumn{1}{c}{8221}         & \multicolumn{1}{c}{151.7}         & \multicolumn{1}{c}{238.7}        & \multicolumn{1}{c}{234000}         & 975           & \multicolumn{1}{c}{0}            & \multicolumn{1}{c}{9135}         & \multicolumn{1}{c}{206.1}         & \multicolumn{1}{c}{141.3}        & \multicolumn{1}{c}{101760}         & 424           \\ \hline
    \multirow{2}{*}{Variables} & \multicolumn{6}{c}{\textbf{Pump Replacing}}                                                                                                                                                         & \multicolumn{6}{c}{\textbf{Fracturing}}                                                                                                                                                             \\ \cline{2-13} 
                                     & \multicolumn{1}{c}{\textbf{min}} & \multicolumn{1}{c}{\textbf{max}} & \multicolumn{1}{c}{\textbf{mean}} & \multicolumn{1}{c}{\textbf{std}} & \multicolumn{1}{c}{\textbf{count}} & \multicolumn{1}{c|}{\textbf{num}} & \multicolumn{1}{c}{\textbf{min}} & \multicolumn{1}{c}{\textbf{max}} & \multicolumn{1}{c}{\textbf{mean}} & \multicolumn{1}{c}{\textbf{std}} & \multicolumn{1}{c}{\textbf{count}} & \multicolumn{1}{c}{\textbf{num}} \\ \hline
    $\mathbf{X}_{1}$                 & \multicolumn{1}{c}{0}            & \multicolumn{1}{c}{31}           & \multicolumn{1}{c}{25}            & \multicolumn{1}{c}{10}           & \multicolumn{1}{c}{4560}           & 19            & \multicolumn{1}{c}{0}            & \multicolumn{1}{c}{31}           & \multicolumn{1}{c}{22}            & \multicolumn{1}{c}{12}           & \multicolumn{1}{c}{13440}          & 56            \\ 
    $\mathbf{X}_{2}$                 & \multicolumn{1}{c}{0}            & \multicolumn{1}{c}{54.8}         & \multicolumn{1}{c}{12.73}         & \multicolumn{1}{c}{13.23}        & \multicolumn{1}{c}{4560}           & 19            & \multicolumn{1}{c}{0}            & \multicolumn{1}{c}{133.2}        & \multicolumn{1}{c}{15.74}         & \multicolumn{1}{c}{18.7}         & \multicolumn{1}{c}{13440}          & 56            \\ 
    $\mathbf{X}_{3}$                 & \multicolumn{1}{c}{100}          & \multicolumn{1}{c}{2938}         & \multicolumn{1}{c}{1007}          & \multicolumn{1}{c}{354.0}        & \multicolumn{1}{c}{4560}           & 19            & \multicolumn{1}{c}{0}            & \multicolumn{1}{c}{2294}         & \multicolumn{1}{c}{1694.3}        & \multicolumn{1}{c}{285.2}        & \multicolumn{1}{c}{13440}          & 56            \\ 
    $\mathbf{X}_{4}$                 & \multicolumn{1}{c}{0}            & \multicolumn{1}{c}{200}          & \multicolumn{1}{c}{66.04}         & \multicolumn{1}{c}{21.91}        & \multicolumn{1}{c}{4560}           & 19            & \multicolumn{1}{c}{0}            & \multicolumn{1}{c}{83}           & \multicolumn{1}{c}{45.01}         & \multicolumn{1}{c}{7.11}         & \multicolumn{1}{c}{13440}          & 56            \\ 
    $\mathbf{X}_{5}$                 & \multicolumn{1}{c}{0}            & \multicolumn{1}{c}{603.7}        & \multicolumn{1}{c}{51.95}         & \multicolumn{1}{c}{51.49}        & \multicolumn{1}{c}{4560}           & 19            & \multicolumn{1}{c}{0}            & \multicolumn{1}{c}{4981}         & \multicolumn{1}{c}{31.48}         & \multicolumn{1}{c}{148.8}        & \multicolumn{1}{c}{13440}          & 56            \\ 
    $\mathbf{X}_{6}$                 & \multicolumn{1}{c}{0}            & \multicolumn{1}{c}{275.4}        & \multicolumn{1}{c}{88.97}         & \multicolumn{1}{c}{55.40}        & \multicolumn{1}{c}{4560}           & 19            & \multicolumn{1}{c}{0}            & \multicolumn{1}{c}{433.3}        & \multicolumn{1}{c}{28.62}         & \multicolumn{1}{c}{25.82}        & \multicolumn{1}{c}{13440}          & 56            \\ 
    $\mathbf{X}_{7}$                 & \multicolumn{1}{c}{1}            & \multicolumn{1}{c}{4981}         & \multicolumn{1}{c}{578.1}         & \multicolumn{1}{c}{414}          & \multicolumn{1}{c}{4560}           & 19            & \multicolumn{1}{c}{2}            & \multicolumn{1}{c}{9107}         & \multicolumn{1}{c}{1226.8}        & \multicolumn{1}{c}{546.3}        & \multicolumn{1}{c}{13440}          & 56            \\ 
    $\mathbf{X}_{8}$                 & \multicolumn{1}{c}{0.6}          & \multicolumn{1}{c}{7.4}          & \multicolumn{1}{c}{3.39}          & \multicolumn{1}{c}{1.01}         & \multicolumn{1}{c}{4560}           & 19            & \multicolumn{1}{c}{0}            & \multicolumn{1}{c}{7}            & \multicolumn{1}{c}{3.66}          & \multicolumn{1}{c}{1.29}         & \multicolumn{1}{c}{13440}          & 56            \\ 
    $\mathbf{X}_{9}$                 & \multicolumn{1}{c}{0}            & \multicolumn{1}{c}{120}          & \multicolumn{1}{c}{9.06}          & \multicolumn{1}{c}{16.5}         & \multicolumn{1}{c}{4560}           & 19            & \multicolumn{1}{c}{0}            & \multicolumn{1}{c}{73}           & \multicolumn{1}{c}{6.34}          & \multicolumn{1}{c}{10.04}        & \multicolumn{1}{c}{13440}          & 56            \\ 
    $\mathbf{X}_{10}$                & \multicolumn{1}{c}{0}            & \multicolumn{1}{c}{7.1}          & \multicolumn{1}{c}{0.84}          & \multicolumn{1}{c}{1}            & \multicolumn{1}{c}{4560}           & 19            & \multicolumn{1}{c}{0}            & \multicolumn{1}{c}{60}           & \multicolumn{1}{c}{1.59}          & \multicolumn{1}{c}{3.22}         & \multicolumn{1}{c}{13440}          & 56            \\ 
    $\mathbf{Y}$                     & \multicolumn{1}{c}{0}            & \multicolumn{1}{c}{4112}         & \multicolumn{1}{c}{300.2}         & \multicolumn{1}{c}{367.7}        & \multicolumn{1}{c}{4560}           & 19            & \multicolumn{1}{c}{0}            & \multicolumn{1}{c}{3237}         & \multicolumn{1}{c}{108.14}        & \multicolumn{1}{c}{126.9}        & \multicolumn{1}{c}{13440}          & 56            \\ 
    \toprule
    \end{tabular}
\end{table*}

\subsection{Inside-well Time-series Forecasting}
In this section, we mainly study the performance of temporal models in inside-wells time-series forecasting,
which results are shown in Table \ref{tab:inside}.

\noindent \textbf{Experimental setup}.
In the phase of inside-well time-sereis forecasting, 
we characterize the last $\tau=36, 24, 12, 6$ monthly time-series of each well 
as the target domain $\mathcal{D}_\mathcal{T}$ respectively,
while the remained monthly time-series are formulized as source domain $\mathcal{D}_\mathcal{S}$. 
The experiments consist of 4 independent moudles,
corresponding to the forecasting task involving 
\textit{Sand Controlling, Perforation Adding, Pump Replacing} and \textit{Fracturing}, respectively.
For example, in the forecasting task of \textit{Sand Controlling} with $\tau=36$,
the knowledge of last 36 monthly time-series involving \textit{Sand Controlling} described in Table \ref{tab:datasets} 
is applied to transferred to the target domain $\mathcal{D}_\mathcal{T}$, 
i.e. conducting the inside-well time-sereis forecasting of oil production.

The results in Table \ref{tab:inside}
demonstrate that the performance of CDA is better than or on par with the baselines under the different 
experimental apparatus. 
We note the following observations to provide a better understanding into inside-well time-series forecasting. 
First, for any forecasting tasks,
the cross-domain forecasters (CDA, TFT, Transformer, DANN, MMDA and SASA) that jointly trained 
end-to-end using source and target domains outperform 
the single-domain forecasters (DecoderMLP, LSTM, GRU, Bi-LSTM, Bi-GRU, DeepAR and N-Beats). 
The finding indicates that jointly training on source and target domain is helpful for inside-well 
time-series forecasting. 
Second, 
in the forecasting tasks of \textit{Sand Controlling, Perforation Adding, Pump Replacing} and \textit{Fracturing},
whatever with $\tau=36, \tau=24, \tau=12$ or $\tau=6$, 
TFT, Transformer and DANN, MMDA and SASA are outperformed by CDA. 
The finding illustrates that time-series forecaster under casualilty and policy information is beneficial to achieve 
the better domain adaption in inside-well time-series forecasting. 

\begin{table*}[!t]
    \centering
    \caption{Performance comparison of multiple source domain knowledge in oil production forecasting of inside-well.}
    \label{tab:inside}
    \begin{tabular}{ccccccccccccc}
    \bottomrule
    \multicolumn{13}{c}{\textbf{Sand Controlling}}                                  \\ \hline
    \multicolumn{1}{c|}{\multirow{2}{*}{\textbf{Method}}} & \multicolumn{3}{c|}{\bm{$\tau=36$}}                                                                        & \multicolumn{3}{c|}{\bm{$\tau=24$}}      & \multicolumn{3}{c|}{\bm{$\tau=12$}}     & \multicolumn{3}{c}{\bm{$\tau=6$}}  \\ \cline{2-13} 
    \multicolumn{1}{c|}{}                                 & {\bm{$R^{2}$}} & {\textbf{RMSE}} & \multicolumn{1}{c|}{\textbf{MAE}} & {\bm{$R^{2}$}}  & {\textbf{RMSE}} & \multicolumn{1}{c|}{\textbf{MAE}} & {\bm{$R^{2}$}}  & {\textbf{RMSE}} & \multicolumn{1}{c|}{\textbf{MAE}} & {\bm{$R^{2}$}}  & {\textbf{RMSE}} & \multicolumn{1}{c}{\textbf{MAE}} \\ \hline
    \multicolumn{1}{c|}{DecoderMLP}                       & {0.219}            & {30.545}        & \multicolumn{1}{c|}{15.592}       & {0.300}            & {28.843}        & \multicolumn{1}{c|}{15.170}       & {0.396}       & {26.370}        & \multicolumn{1}{c|}{15.035}       & {0.479}          & {22.315}            & {18.961}            \\ 
    \multicolumn{1}{c|}{LSTM}                             & {0.240}            & {29.665}        & \multicolumn{1}{c|}{23.089}       & {0.827}            & {14.671}        & \multicolumn{1}{c|}{6.820}        & {0.696}       & {19.848}        & \multicolumn{1}{c|}{11.365}       & {0.751}          & {15.240}            & {12.365}           \\ 
    \multicolumn{1}{c|}{GRU}                              & {0.547}            & {22.897}        & \multicolumn{1}{c|}{13.650}       & {0.644}            & {20.419}        & \multicolumn{1}{c|}{10.783}       & {0.724}       & {18.558}        & \multicolumn{1}{c|}{13.953}       & {0.762}          & {15.078}            & {13.126}           \\ 
    \multicolumn{1}{c|}{Bi-LSTM}                          & {0.236}            & {29.861}        & \multicolumn{1}{c|}{21.059}       & {0.595}            & {22.135}        & \multicolumn{1}{c|}{11.235}       & {0.799}       & {15.574}        & \multicolumn{1}{c|}{10.182}       & {0.749}          & {15.680}            & {12.954}           \\ 
    \multicolumn{1}{c|}{Bi-GRU}                           & {0.597}            & {21.467}        & \multicolumn{1}{c|}{13.585}       & {0.782}            & {15.694}        & \multicolumn{1}{c|}{9.889}        & {0.807}       & {14.556}        & \multicolumn{1}{c|}{6.991}        & {0.819}          & {14.891}            & {12.892}          \\ 
    \multicolumn{1}{c|}{DeepAR}                           & {0.742}            & {18.023}        & \multicolumn{1}{c|}{13.621}       & {0.770}            & {16.621}        & \multicolumn{1}{c|}{10.448}       & {0.812}       & {13.997}        & \multicolumn{1}{c|}{6.630}        & {0.838}          & {13.958}            & {10.743}           \\ 
    \multicolumn{1}{c|}{N-Beats}                          & {0.506}            & {24.741}        & \multicolumn{1}{c|}{12.770}       & {0.733}            & {18.122}        & \multicolumn{1}{c|}{10.038}       & {0.873}       & {12.408}        & \multicolumn{1}{c|}{6.078}        & {0.886}          & {11.122}            & {9.751}           \\ 
    \multicolumn{1}{c|}{Transformer}                      & {0.826}            & {14.565}        & \multicolumn{1}{c|}{9.265}        & {0.839}            & {14.195}        & \multicolumn{1}{c|}{8.782}        & {0.817}       & {13.835}        & \multicolumn{1}{c|}{6.271}        & {0.895}          & {10.742}            & {8.378}           \\ 
    \multicolumn{1}{c|}{DANN}                             & {0.631}            & {20.578}        & \multicolumn{1}{c|}{12.041}       & {0.795}            & {15.178}        & \multicolumn{1}{c|}{9.471}        & {0.837}       & {14.058}        & \multicolumn{1}{c|}{7.648}        & {0.892}          & {10.342}            & {7.971}           \\ 
    \multicolumn{1}{c|}{SASA}                             & {0.672}            & {20.280}        & \multicolumn{1}{c|}{11.612}       & {0.821}            & {14.606}        & \multicolumn{1}{c|}{9.190}        & {0.866}       & {12.523}        & \multicolumn{1}{c|}{6.146}        & {0.911}          & {8.464}             & {6.716}           \\ 
    \multicolumn{1}{c|}{TFT}    & \textbf{0.869}    & \textbf{12.554}        & \multicolumn{1}{c|}{\textbf{7.554}}        & {0.913}            & {9.293}         & \multicolumn{1}{c|}{8.583}        & {0.923}       & {8.763}         & \multicolumn{1}{c|}{5.675}        & {0.935}          & {6.884}             & {4.559}           \\ 
    \multicolumn{1}{c|}{MMDA}                             & {0.769}            & {17.481}        & \multicolumn{1}{c|}{13.304}       & {0.895}            & {10.692}        & \multicolumn{1}{c|}{9.102}        & {0.910}       & {9.136}         & \multicolumn{1}{c|}{5.888}        & {0.940}          & {6.430}             & {4.252}           \\ 
    \multicolumn{1}{c|}{Ours}                             & {0.865}            & {12.641}        & \multicolumn{1}{c|}{7.676}  & \textbf{0.925}   & \textbf{8.741}         & \multicolumn{1}{c|}{\textbf{8.067}}   & \textbf{0.935} & \textbf{7.841}  & \multicolumn{1}{c|}{\textbf{5.150}}  & \textbf{0.949} & \textbf{5.638}  & \textbf{3.908}           \\ \hline
    \multicolumn{13}{c}{\textbf{Perforation Adding}}                                     \\ \hline
    \multicolumn{1}{c|}{\multirow{2}{*}{\textbf{Method}}} & \multicolumn{3}{c|}{\bm{$\tau=36$}}                                                                        & \multicolumn{3}{c|}{\bm{$\tau=24$}}      & \multicolumn{3}{c|}{\bm{$\tau=12$}}     & \multicolumn{3}{c}{\bm{$\tau=6$}}  \\ \cline{2-13}  
    \multicolumn{1}{c|}{}                                 & {\bm{$R^{2}$}}  & {\textbf{RMSE}} & \multicolumn{1}{c|}{\textbf{MAE}} & {\bm{$R^{2}$}}  & {\textbf{RMSE}} & \multicolumn{1}{c|}{\textbf{MAE}} & {\bm{$R^{2}$}}  & {\textbf{RMSE}} & \multicolumn{1}{c|}{\textbf{MAE}} & {\bm{$R^{2}$}}  & {\textbf{RMSE}} & \multicolumn{1}{c}{\textbf{MAE}}\\ \hline
    \multicolumn{1}{c|}{DecoderMLP}                       & {0.341}            & {43.089}        & \multicolumn{1}{c|}{35.202}       & {0.422}            & {35.769}        & \multicolumn{1}{c|}{31.992}        & {0.472}      & {34.203}        & \multicolumn{1}{c|}{30.699}       & {0.482}          & {34.787}             & {30.164}            \\ 
    \multicolumn{1}{c|}{LSTM}                             & {0.449}            & {35.260}        & \multicolumn{1}{c|}{30.290}       & {0.512}            & {31.004}        & \multicolumn{1}{c|}{26.979}        & {0.492}      & {33.642}        & \multicolumn{1}{c|}{29.677}       & {0.495}          & {33.208}             & {28.907}            \\ 
    \multicolumn{1}{c|}{GRU}                              & {0.487}            & {33.168}        & \multicolumn{1}{c|}{28.587}       & {0.537}            & {30.050}        & \multicolumn{1}{c|}{26.685}        & {0.526}      & {30.942}        & \multicolumn{1}{c|}{27.754}       & {0.580}          & {28.181}             & {22.819}           \\ 
    \multicolumn{1}{c|}{Bi-LSTM}                          & {0.399}            & {40.308}        & \multicolumn{1}{c|}{33.890}       & {0.494}            & {32.800}        & \multicolumn{1}{c|}{29.764}        & {0.516}      & {31.013}        & \multicolumn{1}{c|}{28.983}       & {0.593}          & {27.677}             & {21.262}            \\ 
    \multicolumn{1}{c|}{Bi-GRU}                           & {0.561}            & {28.334}        & \multicolumn{1}{c|}{23.887}       & {0.498}            & {32.136}        & \multicolumn{1}{c|}{28.457}        & {0.650}      & {25.501}        & \multicolumn{1}{c|}{21.330}       & {0.573}          & {29.432}             & {23.917}            \\ 
    \multicolumn{1}{c|}{DeepAR}                           & {0.583}            & {26.103}        & \multicolumn{1}{c|}{20.818}       & {0.629}            & {26.153}        & \multicolumn{1}{c|}{23.086}        & {0.671}      & {27.128}        & \multicolumn{1}{c|}{22.279}       & {0.724}          & {22.722}             & {17.889}            \\ 
    \multicolumn{1}{c|}{N-Beats}                          & {0.728}            & {18.353}        & \multicolumn{1}{c|}{14.203}       & {0.787}            & {21.526}        & \multicolumn{1}{c|}{17.402}        & {0.717}      & {24.544}        & \multicolumn{1}{c|}{19.369}       & {0.841}          & {14.785}             & {11.215}            \\ 
    \multicolumn{1}{c|}{Transformer}                      & {0.754}            & {17.035}        & \multicolumn{1}{c|}{13.469}       & {0.805}            & {19.028}        & \multicolumn{1}{c|}{15.297}        & {0.835}      & {16.818}        & \multicolumn{1}{c|}{12.704}       & {0.886}          & {9.510}              & {5.902}           \\ 
    \multicolumn{1}{c|}{DANN}                             & {0.652}            & {22.058}        & \multicolumn{1}{c|}{17.515}       & {0.670}            & {25.482}        & \multicolumn{1}{c|}{21.198}        & {0.863}      & {12.715}        & \multicolumn{1}{c|}{9.426}        & {0.874}          & {10.435}             & {6.356}           \\ 
    \multicolumn{1}{c|}{SASA}                             & {0.821}            & {14.719}        & \multicolumn{1}{c|}{10.658}       & {0.843}            & {14.814}        & \multicolumn{1}{c|}{11.677}        & {0.829}      & {17.255}        & \multicolumn{1}{c|}{13.522}       & {0.891}          & {8.175}              & {4.781}           \\ 
    \multicolumn{1}{c|}{TFT}                              & {0.850}            & {13.714}        & \multicolumn{1}{c|}{10.722}       & {0.867}            & {11.657}        & \multicolumn{1}{c|}{8.397}        & {0.884}      & {9.832}        & \multicolumn{1}{c|}{6.299}        & {0.906}          & {7.263}             & {4.159}           \\ 
    \multicolumn{1}{c|}{MMDA}                             & {0.836}            & {14.159}        & \multicolumn{1}{c|}{11.170}       & {0.851}            & {13.474}        & \multicolumn{1}{c|}{9.422}         & {0.868}      & {12.327}        & \multicolumn{1}{c|}{8.155}        & {0.915}          & {6.043}              & {3.980}           \\ 
    \multicolumn{1}{c|}{Ours}   & \textbf{0.853}  & \textbf{13.270}  & \multicolumn{1}{c|}{\textbf{10.091}}  & \textbf{0.872}   & \textbf{11.292}  & \multicolumn{1}{c|}{\textbf{8.086}}   & \textbf{0.897} & \textbf{8.539} & \multicolumn{1}{c|}{\textbf{5.041}}   & \textbf{0.924}     & \textbf{5.121}     & \textbf{2.501}           \\ \hline
    \multicolumn{13}{c}{\textbf{Pump Replacing}}       \\ \hline
    \multicolumn{1}{c|}{\multirow{2}{*}{\textbf{Method}}} & \multicolumn{3}{c|}{\bm{$\tau=36$}}                                                                        & \multicolumn{3}{c|}{\bm{$\tau=24$}}      & \multicolumn{3}{c|}{\bm{$\tau=12$}}     & \multicolumn{3}{c}{\bm{$\tau=6$}}  \\ \cline{2-13} 
    \multicolumn{1}{c|}{}                                 & {\bm{$R^{2}$}}  & {\textbf{RMSE}} & \multicolumn{1}{c|}{\textbf{MAE}} & {\bm{$R^{2}$}}  & {\textbf{RMSE}} & \multicolumn{1}{c|}{\textbf{MAE}} & {\bm{$R^{2}$}}  & {\textbf{RMSE}} & \multicolumn{1}{c|}{\textbf{MAE}} & {\bm{$R^{2}$}}  & {\textbf{RMSE}} & \multicolumn{1}{c}{\textbf{MAE}}\\ \hline
    \multicolumn{1}{c|}{DecoderMLP}                       & {0.252}            & {28.944}       & \multicolumn{1}{c|}{20.146}           & {0.304}         & {26.093}        & \multicolumn{1}{c|}{17.620}           & {0.405}          & {21.689}           & \multicolumn{1}{c|}{10.905}           & {0.426}           & {16.684}           & {8.739}           \\ 
    \multicolumn{1}{c|}{LSTM}                             & {0.281}            & {27.256}       & \multicolumn{1}{c|}{19.884}           & {0.323}         & {25.731}        & \multicolumn{1}{c|}{16.557}           & {0.458}          & {19.321}           & \multicolumn{1}{c|}{9.611}            & {0.481}           & {15.648}           & {7.510}           \\ 
    \multicolumn{1}{c|}{GRU}                              & {0.435}            & {22.638}       & \multicolumn{1}{c|}{15.154}           & {0.481}         & {22.203}        & \multicolumn{1}{c|}{14.883}           & {0.454}          & {19.745}           & \multicolumn{1}{c|}{9.891}            & {0.502}           & {14.423}           & {7.132}            \\ 
    \multicolumn{1}{c|}{Bi-LSTM}                          & {0.430}            & {22.852}       & \multicolumn{1}{c|}{15.443}           & {0.515}         & {21.009}        & \multicolumn{1}{c|}{13.275}           & {0.550}          & {16.118}           & \multicolumn{1}{c|}{9.214}            & {0.564}           & {12.220}           & {6.528}            \\ 
    \multicolumn{1}{c|}{Bi-GRU}                           & {0.451}            & {22.128}       & \multicolumn{1}{c|}{15.055}           & {0.529}         & {20.988}        & \multicolumn{1}{c|}{12.987}           & {0.563}          & {15.963}           & \multicolumn{1}{c|}{8.871}            & {0.598}           & {11.780}           & {6.186}            \\ 
    \multicolumn{1}{c|}{DeepAR}                           & {0.517}            & {20.882}       & \multicolumn{1}{c|}{14.819}           & {0.578}         & {19.869}        & \multicolumn{1}{c|}{12.866}           & {0.595}          & {15.307}           & \multicolumn{1}{c|}{8.214}            & {0.636}           & {9.921}            & {5.631}            \\ 
    \multicolumn{1}{c|}{N-Beats}                          & {0.590}            & {19.402}       & \multicolumn{1}{c|}{13.333}           & {0.682}         & {16.047}        & \multicolumn{1}{c|}{10.881}           & {0.720}          & {10.980}           & \multicolumn{1}{c|}{5.975}            & {0.779}           & {7.410}            & {4.371}           \\ 
    \multicolumn{1}{c|}{Transformer}                      & {0.633}            & {18.334}       & \multicolumn{1}{c|}{11.682}           & {0.698}         & {15.076}        & \multicolumn{1}{c|}{10.303}           & {0.732}          & {10.497}           & \multicolumn{1}{c|}{5.062}            & {0.807}           & {6.023}            & {3.374}            \\ 
    \multicolumn{1}{c|}{DANN}                             & {0.638}            & {17.191}       & \multicolumn{1}{c|}{10.692}           & {0.601}         & {18.798}        & \multicolumn{1}{c|}{11.667}           & {0.652}          & {12.922}           & \multicolumn{1}{c|}{7.182}            & {0.717}           & {7.925}            & {4.961}            \\ 
    \multicolumn{1}{c|}{SASA}                             & {0.610}            & {18.933}       & \multicolumn{1}{c|}{12.030}           & {0.810}         & {9.982}        & \multicolumn{1}{c|}{5.038}            & {0.765}          & {10.063}           & \multicolumn{1}{c|}{5.180}            & {0.814}           & {5.618}            & {3.002}            \\ 
    \multicolumn{1}{c|}{TFT}                              & {0.673}            & {15.350}       & \multicolumn{1}{c|}{7.757}            & {0.788}         & {12.093}        & \multicolumn{1}{c|}{6.135}            & {0.802}          & {8.621}            & \multicolumn{1}{c|}{3.756}            & {0.830}           & {5.333}            & {2.739}            \\ 
    \multicolumn{1}{c|}{MMDA}                             & {0.682}            & {15.814}       & \multicolumn{1}{c|}{7.933}            & {0.798}         & {11.281}        & \multicolumn{1}{c|}{6.983}            & {0.825}          & {7.201}            & \multicolumn{1}{c|}{3.728}            & {0.841}           & {4.626}            & {2.126}            \\ 
    \multicolumn{1}{c|}{Ours}  & \textbf{0.724}  & \textbf{13.159} & \multicolumn{1}{c|}{\textbf{5.111}}   & \textbf{0.813}  & \textbf{9.514}   & \multicolumn{1}{c|}{\textbf{4.461}}     & \textbf{0.836}     & \textbf{6.412}    & \multicolumn{1}{c|}{\textbf{2.156}}   & \textbf{0.859}   & \textbf{4.098}   & \textbf{1.822}           \\ \hline
    \multicolumn{13}{c}{\textbf{Fracturing}}    \\ \hline
    \multicolumn{1}{c|}{\multirow{2}{*}{\textbf{Method}}} & \multicolumn{3}{c|}{\bm{$\tau=36$}}                                                                        & \multicolumn{3}{c|}{\bm{$\tau=24$}}      & \multicolumn{3}{c|}{\bm{$\tau=12$}}     & \multicolumn{3}{c}{\bm{$\tau=6$}}  \\ \cline{2-13} 
    \multicolumn{1}{c|}{}                                 & {\bm{$R^{2}$}}  & {\textbf{RMSE}} & \multicolumn{1}{c|}{\textbf{MAE}} & {\bm{$R^{2}$}}  & {\textbf{RMSE}} & \multicolumn{1}{c|}{\textbf{MAE}} & {\bm{$R^{2}$}}  & {\textbf{RMSE}} & \multicolumn{1}{c|}{\textbf{MAE}} & {\bm{$R^{2}$}}  & {\textbf{RMSE}} & \multicolumn{1}{c}{\textbf{MAE}} \\ \hline
    \multicolumn{1}{c|}{DecoderMLP}                       & {0.316}            & {22.837}        & \multicolumn{1}{c|}{15.188}           & {0.389}          & {25.080}          & \multicolumn{1}{c|}{15.208}           & {0.453}       & {16.752}            & \multicolumn{1}{c|}{8.873}          & {0.497}         & {14.590}            & {9.806}          \\ 
    \multicolumn{1}{c|}{LSTM}                             & {0.384}            & {21.674}        & \multicolumn{1}{c|}{14.973}           & {0.423}          & {22.540}          & \multicolumn{1}{c|}{14.412}           & {0.503}       & {15.428}            & \multicolumn{1}{c|}{8.180}          & {0.568}         & {12.101}            & {8.704}           \\ 
    \multicolumn{1}{c|}{GRU}                              & {0.493}            & {17.593}        & \multicolumn{1}{c|}{10.629}           & {0.514}          & {19.187}          & \multicolumn{1}{c|}{13.605}           & {0.573}       & {14.215}            & \multicolumn{1}{c|}{7.845}          & {0.589}         & {11.287}            & {8.050}           \\ 
    \multicolumn{1}{c|}{Bi-LSTM}                          & {0.487}            & {18.037}        & \multicolumn{1}{c|}{11.509}           & {0.561}          & {17.979}          & \multicolumn{1}{c|}{11.273}           & {0.586}       & {13.357}            & \multicolumn{1}{c|}{7.236}          & {0.667}         & {9.717}             & {6.287}           \\ 
    \multicolumn{1}{c|}{Bi-GRU}                           & {0.533}            & {16.997}        & \multicolumn{1}{c|}{9.695}            & {0.592}          & {16.649}          & \multicolumn{1}{c|}{10.867}           & {0.677}       & {10.450}            & \multicolumn{1}{c|}{5.762}          & {0.696}         & {9.042}             & {6.127}            \\ 
    \multicolumn{1}{c|}{DeepAR}                           & {0.554}            & {16.329}        & \multicolumn{1}{c|}{9.314}            & {0.569}          & {15.087}          & \multicolumn{1}{c|}{10.050}           & {0.651}       & {11.032}            & \multicolumn{1}{c|}{6.388}          & {0.725}         & {7.297}             & {5.157}           \\ 
    \multicolumn{1}{c|}{N-Beats}                          & {0.683}            & {15.774}        & \multicolumn{1}{c|}{8.991}            & {0.736}          & {12.537}          & \multicolumn{1}{c|}{8.224}            & {0.801}       & {7.678}             & \multicolumn{1}{c|}{4.484}          & {0.827}         & {5.131}             & {3.331}             \\ 
    \multicolumn{1}{c|}{Transformer}                      & {0.749}            & {14.031}        & \multicolumn{1}{c|}{7.041}            & {0.808}          & {10.551}          & \multicolumn{1}{c|}{6.954}            & {0.816}       & {6.687}             & \multicolumn{1}{c|}{3.867}          & {0.837}         & {4.874}             & {2.981}           \\ 
    \multicolumn{1}{c|}{DANN}                             & {0.728}            & {14.835}        & \multicolumn{1}{c|}{7.238}            & {0.759}          & {11.552}          & \multicolumn{1}{c|}{7.760}            & {0.771}       & {9.180}             & \multicolumn{1}{c|}{5.349}          & {0.808}         & {6.439}             & {4.606}            \\ 
    \multicolumn{1}{c|}{SASA}                             & {0.769}            & {13.869}        & \multicolumn{1}{c|}{6.784}            & {0.816}          & {9.698}           & \multicolumn{1}{c|}{6.679}            & {0.845}       & {5.093}             & \multicolumn{1}{c|}{2.687}          & {0.859}         & {4.682}             & {2.848}           \\ 
    \multicolumn{1}{c|}{TFT}                              & {0.812}            & {11.811}        & \multicolumn{1}{c|}{5.973}            & {0.869}          & {7.853}           & \multicolumn{1}{c|}{4.704}            & {0.887}       & {5.059}             & \multicolumn{1}{c|}{2.476}          & {0.872}         & {4.135}             & {2.095}           \\ 
    \multicolumn{1}{c|}{MMDA}                             & {0.824}            & {11.241}        & \multicolumn{1}{c|}{5.477}            & {0.859}          & {7.680}           & \multicolumn{1}{c|}{4.679}            & {0.866}       & {5.340}             & \multicolumn{1}{c|}{2.340}          & {0.887}         & {3.447}             & {1.840}           \\ 
    \multicolumn{1}{c|}{Ours}  & \textbf{0.842}  & \textbf{11.003} & \multicolumn{1}{c|}{\textbf{4.307}}  & \textbf{0.870} & \textbf{7.004}   & \multicolumn{1}{c|}{\textbf{4.420}}  & \textbf{0.897}  & \textbf{4.362}  & \multicolumn{1}{c|}{\textbf{2.059}}  & \textbf{0.912}   & \textbf{3.073}   & \textbf{1.339}            \\ 
    \toprule
    \end{tabular}
\end{table*}

\subsection{Cross-well Time-sereis Predicting}
In this section, we perform the extensive experiments to compare the proposed CDA
with other models in cross-wells time-sereis predicting, 
which results are presented in Table \ref{tab:cross}.

\noindent \textbf{Experimental setup}.
In the phase of cross-well time-sereis predicting,
the entire time-series of partial wells are denoted as the source domain $\mathcal{D}_\mathcal{S}$, 
while the remained wells' entire time-series are remarked as target domain $\mathcal{D}_\mathcal{T}$.
The experiments also consist of 4 independent moudles,
corresponding to the forecasting task involving 
\textit{Sand Controlling, Perforation Adding, Pump Replacing} and \textit{Fracturing}, respectively.
For the instance, in the forecasting task of \textit{Sand Controlling},
the target domain $\mathcal{D}_\mathcal{T}$ is the dataset described in Table \ref{tab:datasets} 
involving \textit{Sand Controlling}. And for its source domain $\mathcal{D}_\mathcal{S}$, 
$\nsubseteq \mathbf{Z}$ represents that 
the source domain $\mathcal{D}_\mathcal{S}$ does not contain any knowledge of \textit{Sand Controlling},
while $\subseteq \mathbf{Z}$ represents that 
the certain knowledge of \textit{Sand Controlling} in source domain $\mathcal{D}_\mathcal{S}$ is
transferred to the target domain $\mathcal{D}_\mathcal{T}$, i.e.,
conducting the cross-well time-sereis predicting of oil production.

\begin{table*}[t!]
    \centering
    \caption{Performance comparison of multiple source domain knowledge in oil production forecasting of cross-well.}
    \label{tab:cross}
    \begin{tabular}{c|cccccc|cccccc}
    \bottomrule
    \textbf{Measure}                 & \multicolumn{6}{c|}{\textbf{Sand Controlling}}                                                                                                                                                             & \multicolumn{6}{c}{\textbf{Perforation Adding}}                                                                                                                                                           \\ \hline
    \multirow{2}{*}{\textbf{Method}} & \multicolumn{3}{c|}{$\nsubseteq \mathbf{Z}$}                                                                           & \multicolumn{3}{c|}{$\subseteq \mathbf{Z}$}                                                         & \multicolumn{3}{c|}{$\nsubseteq \mathbf{Z}$}               & \multicolumn{3}{c}{\textbf{$\subseteq \mathbf{Z}$}}   \\ \cline{2-13} 
                                     & {\bm{$R^{2}$}} & {\textbf{RMSE}} & \multicolumn{1}{c|}{\textbf{MAE}} & {\bm{$R^{2}$}} & {\textbf{RMSE}} & \textbf{MAE} & {\bm{$R^{2}$}} & {\textbf{RMSE}} & \multicolumn{1}{c|}{\textbf{MAE}} & {\bm{$R^{2}$}} & {\textbf{RMSE}} & \textbf{MAE} \\ \hline
    DecoderMLP                       & {0.372}            & {41.911}        & \multicolumn{1}{c|}{22.941}       & {0.829}         & {16.844}             & \multicolumn{1}{c|}{12.043}           & {0.528}         & {32.660}       & \multicolumn{1}{c|}{26.217}       & {0.824}       & {23.064}        & {15.340}           \\ 
    LSTM                             & {0.535}            & {33.500}        & \multicolumn{1}{c|}{13.888}       & {0.835}         & {16.483}             & \multicolumn{1}{c|}{11.971}           & {0.539}         & {30.957}       & \multicolumn{1}{c|}{25.731}       & {0.906}       & {16.580}        & {14.518}           \\ 
    GRU                              & {0.571}            & {31.367}        & \multicolumn{1}{c|}{16.357}       & {0.837}         & {15.267}             & \multicolumn{1}{c|}{11.187}           & {0.381}         & {43.327}       & \multicolumn{1}{c|}{35.742}       & {0.908}       & {16.638}        & {14.333}           \\ 
    Bi-LSTM                          & {0.448}            & {38.270}        & \multicolumn{1}{c|}{20.489}       & {0.844}         & {13.628}             & \multicolumn{1}{c|}{10.217}           & {0.580}         & {28.884}       & \multicolumn{1}{c|}{22.240}       & {0.850}       & {20.241}        & {11.275}           \\ 
    Bi-GRU                           & {0.556}            & {32.259}        & \multicolumn{1}{c|}{15.485}       & {0.862}         & {12.408}             & \multicolumn{1}{c|}{10.063}           & {0.563}         & {21.780}       & \multicolumn{1}{c|}{17.045}       & {0.859}       & {20.159}        & {11.111}           \\ 
    DeepAR                           & {0.849}            & {20.480}        & \multicolumn{1}{c|}{14.319}       & {0.881}         & {11.792}             & \multicolumn{1}{c|}{8.775}            & {0.584}         & {26.484}       & \multicolumn{1}{c|}{20.351}       & {0.912}       & {15.425}        & {10.002}           \\ 
    N-Beats                          & {0.775}            & {29.388}        & \multicolumn{1}{c|}{17.849}       & {0.923}         & {8.692}              & \multicolumn{1}{c|}{7.820}            & {0.774}         & {18.837}       & \multicolumn{1}{c|}{14.240}       & {0.945}       & {12.272}        & {8.980}            \\ 
    Transformer                      & {0.856}            & {19.716}        & \multicolumn{1}{c|}{13.689}       & {0.851}         & {9.938}              & \multicolumn{1}{c|}{7.830}            & {0.860}         & {16.270}       & \multicolumn{1}{c|}{21.992}       &  \textbf{0.953}   & {12.256}  &  \textbf{6.677}            \\ 
    DANN                             & {0.672}            & {24.988}        & \multicolumn{1}{c|}{12.881}       & {0.890}         & {10.214}             & \multicolumn{1}{c|}{7.772}            & {0.681}         & {20.729}       & \multicolumn{1}{c|}{17.144}       & {0.948}       & {12.071}        & {8.523}            \\ 
    SASA                             & {0.827}            & {22.947}        & \multicolumn{1}{c|}{9.738}        & {0.919}         & {8.858}              & \multicolumn{1}{c|}{6.080}            & {0.740}         & {18.860}       & \multicolumn{1}{c|}{14.455}       & {0.941}       & {13.645}        & {7.861}            \\ 
    TFT                              & {0.834}            & {21.456}        & \multicolumn{1}{c|}{8.972}        & {0.924}         & {8.866}              & \multicolumn{1}{c|}{6.052}            & {0.642}         & {24.622}       & \multicolumn{1}{c|}{20.690}       & {0.950}       & {12.596}        & {7.818}            \\ 
    MMDA                             & {0.861}            & {17.962}        & \multicolumn{1}{c|}{7.167}        & {0.930}         & {8.802}              & \multicolumn{1}{c|}{5.584}            & {0.832}         & {16.437}       & \multicolumn{1}{c|}{12.825}       & {0.952}       & {12.055}        & {7.529}            \\ 
    Ours                & \textbf{0.875} & \textbf{14.771} & \multicolumn{1}{c|}{\textbf{6.026}}  & \textbf{0.935}    & \textbf{7.694}   & \multicolumn{1}{c|}{\textbf{5.196}} & \textbf{0.886} & \textbf{16.110} & \multicolumn{1}{c|}{\textbf{12.366}}   & \textbf{0.953}  & \textbf{11.719}  & {6.679}            \\ \hline
    \textbf{Measure}                 & \multicolumn{6}{c|}{\textbf{Pump Replacing}}                                                                                                                                                 & \multicolumn{6}{c}{\textbf{Fracturing}}                                                                                                                                                                   \\ \hline
    \multirow{2}{*}{\textbf{Method}} & \multicolumn{3}{c|}{$\nsubseteq \mathbf{Z}$}                                                                           & \multicolumn{3}{c|}{$\subseteq \mathbf{Z}$}                                                         & \multicolumn{3}{c|}{$\nsubseteq \mathbf{Z}$}              & \multicolumn{3}{c}{$\subseteq \mathbf{Z}$}    \\ \cline{2-13} 
                                     & {\bm{$R^{2}$}} & {\textbf{RMSE}} & \multicolumn{1}{c|}{\textbf{MAE}} & {\bm{$R^{2}$}} & {\textbf{RMSE}} & \textbf{MAE} & {\bm{$R^{2}$}} & {\textbf{RMSE}} & \multicolumn{1}{c|}{\textbf{MAE}} & {\bm{$R^{2}$}} & {\textbf{RMSE}} & \textbf{MAE} \\ \hline
    DecoderMLP                       & {0.353}            & {38.545}        & \multicolumn{1}{c|}{30.199}           & {0.919}         & {13.381}             & \multicolumn{1}{c|}{6.753}        & {0.413}         & {23.760}       & \multicolumn{1}{c|}{17.329}           & {0.809}             & {11.405}        & {5.668}            \\ 
    LSTM                             & {0.471}            & {27.605}        & \multicolumn{1}{c|}{20.148}           & {0.947}         & {12.017}             & \multicolumn{1}{c|}{6.114}        & {0.501}         & {18.313}       & \multicolumn{1}{c|}{14.682}           & {0.937}             & {9.893}         & {4.674}            \\ 
    GRU                              & {0.484}            & {25.301}        & \multicolumn{1}{c|}{18.909}           & {0.941}         & {12.176}             & \multicolumn{1}{c|}{5.612}        & {0.518}         & {17.844}       & \multicolumn{1}{c|}{11.957}           & {0.955}             & {7.178}         & {3.462}            \\ 
    Bi-LSTM                          & {0.408}            & {33.401}        & \multicolumn{1}{c|}{24.153}           & {0.932}         & {11.325}             & \multicolumn{1}{c|}{5.366}        & {0.465}         & {20.884}       & \multicolumn{1}{c|}{14.257}           & {0.959}             & {7.803}         & {3.213}            \\ 
    Bi-GRU                           & {0.484}            & {25.587}        & \multicolumn{1}{c|}{17.779}           & {0.936}         & {11.357}             & \multicolumn{1}{c|}{4.818}        & {0.532}         & {17.287}       & \multicolumn{1}{c|}{12.891}           & {0.945}             & {9.441}         & {4.660}            \\ 
    DeepAR                           & {0.502}            & {23.173}        & \multicolumn{1}{c|}{15.129}           & {0.966}         & {10.684}             & \multicolumn{1}{c|}{4.804}        & {0.621}         & {16.005}       & \multicolumn{1}{c|}{11.707}           & {0.947}             & {9.265}         & {4.024}            \\ 
    N-Beats                          & {0.740}            & {18.839}        & \multicolumn{1}{c|}{12.650}           & {0.962}         & {9.205}              & \multicolumn{1}{c|}{3.914}        & {0.754}         & {14.062}       & \multicolumn{1}{c|}{9.211}            & {0.969}             & {7.697}         & {3.386}            \\ 
    Transformer                      & {0.781}            & {18.470}        & \multicolumn{1}{c|}{11.068}           & {0.962}         & {8.727}              & \multicolumn{1}{c|}{3.987}        & {0.765}         & {13.821}       & \multicolumn{1}{c|}{8.659}            & {0.966}             & {7.893}         & {3.414}            \\ 
    DANN                             & {0.639}            & {20.430}        & \multicolumn{1}{c|}{13.897}           & {0.968}         & {9.190}              & \multicolumn{1}{c|}{3.733}        & {0.681}         & {15.206}       & \multicolumn{1}{c|}{10.455}           & {0.974}             & {6.537}         & {2.029}            \\ 
    SASA                             & {0.751}            & {18.605}        & \multicolumn{1}{c|}{11.672}           & {0.969}         & {8.247}              & \multicolumn{1}{c|}{3.505}        & {0.787}         & {13.664}       & \multicolumn{1}{c|}{8.730}            & {0.976}             & {6.531}         & {1.841}            \\ 
    TFT                              & {0.774}            & {18.026}        & \multicolumn{1}{c|}{11.700}           & {0.971}    & \textbf{7.079}              & \multicolumn{1}{c|}{\textbf{3.113}}        & {0.819}         & {11.813}       & \multicolumn{1}{c|}{6.879}    & {0.970}          & {6.048}         & {1.736}            \\ 
    MMDA                             & {0.806}            & {16.366}        & \multicolumn{1}{c|}{9.162}            & {0.970}         & {7.754}              & \multicolumn{1}{c|}{3.144}        & {0.826}         & {11.655}       & \multicolumn{1}{c|}{6.545}            & {0.976}             & {5.593}         & {1.709}            \\ 
    Ours                 & \textbf{0.828} & \textbf{15.318}  & \multicolumn{1}{c|}{\textbf{8.467}}      & \textbf{0.971}    & {7.195}       & \multicolumn{1}{c|}{3.936}      & \textbf{0.836} & \textbf{10.944} & \multicolumn{1}{c|}{\textbf{5.917}}           & \textbf{0.981}          & \textbf{5.506}     & \textbf{1.477}            \\ 
    \toprule  
    \end{tabular}
\end{table*}

The results in Table \ref{tab:cross} also demonstrate the performance of CDA is 
better than or on par with the baselines under the corresponding experimental apparatus. 
There are also some observations that can provide a better understanding into cross-well time-series predicting. 
First, the cross-domain forecaster TFT, Transformer, MMDA, SASA and CDA outperform the single-domain forecaster 
DecoderMLP, LSTM, GRU, Bi-LSTM, Bi-GRU, DeepAR and N-Beats in two domain adaptation tasks. 
The finding indicates that jointly training on source and target domains is also helpful in cross-well 
time-series predicting. 
Second, the cross-domain forecaster, TFT, Transformer, DANN, and MMDA, 
are outperformed by CDA, indicating that domain adaptation combined with causal inferecnce is beneficial 
to achieve the better performance in cross-well time-series predicting. 
Third, the predicting performance of our CDA is better than or competitive to 
other models in industrial cross-well time-series predicting.

Significantly, the observations of Table \ref{tab:inside} and \ref{tab:cross} 
demonstrate the effectiveness of domain adaptation combined with causal inference, 
since it can capture the knowledge of policy-invariant representation under causality. 
Moreover, the results in Fig.\ref{figs:fits} illustrate that our method has achieved a better accuracy, 
especially in treatment region.
In oil and gas industry, the main purpose of applying policy is to improve oil production. 
Thus, based on intermediate result of counterfactual inference, Position-weise CATE in Definition \ref{def:CATE}, 
our proposed CDA can provide a guidance in determining a optimal policy for improving oil production. 
The interpretable results are displayed in following section.

\begin{figure*}[!t]
\centerline{\includegraphics[width=1\textwidth]{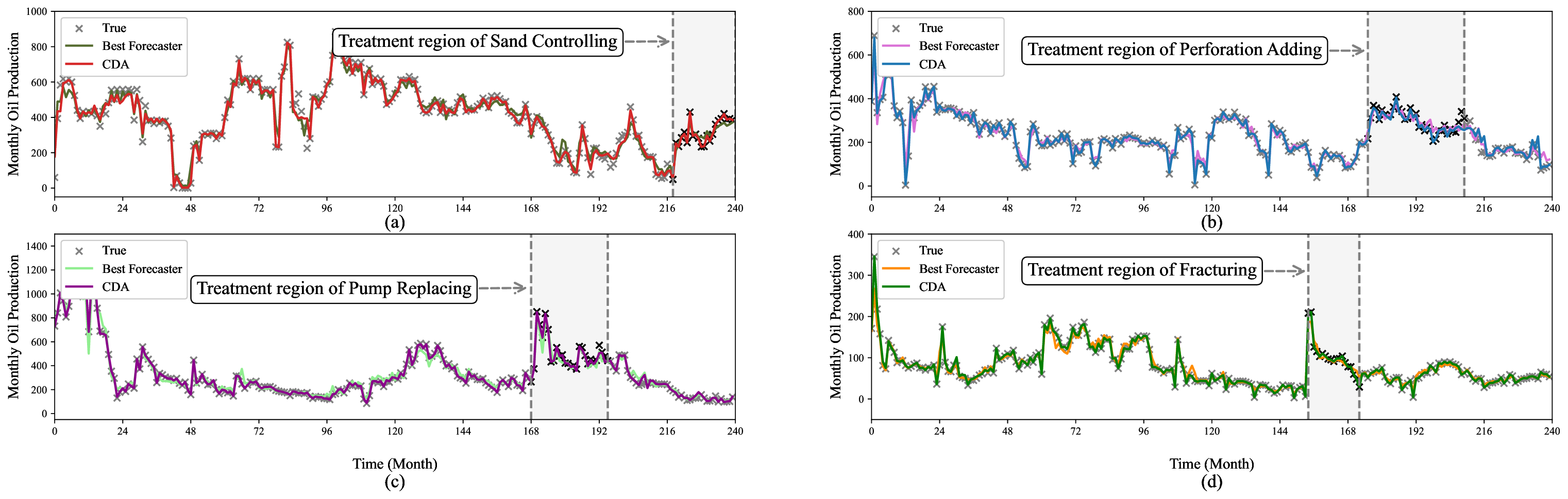}}
    \caption{Performance comparison of CFDA, best single-domain forecaster and cross-domain forecaster
    in estimating treatment-outcome}.
    \label{figs:fits}     
\end{figure*}

\subsection{Optimal Policy Determining}
In this section, we characterize our CDA on realizations of a synthetic decision-making process, 
mainly quantifying the causal effect of treatment policy $\mathbf{Z}$ impacting on oil production $\mathbf{Y}$, 
and determining the optimal policy.
To this end, we first estimate the outcome improvement that could have been achieved if treatment policies 
had been different to the observed ones in the treatment region, 
as dictated by the optimal treatment policy  and intervention.  
Then, we decode (hypothetical) future treatment policy to determine its latent state and use this to 
predict counterfactual outcome and proceed the optimal treatment policy for improving oil production.

\begin{figure*}[!t]
    \centerline{\includegraphics[scale=0.45]{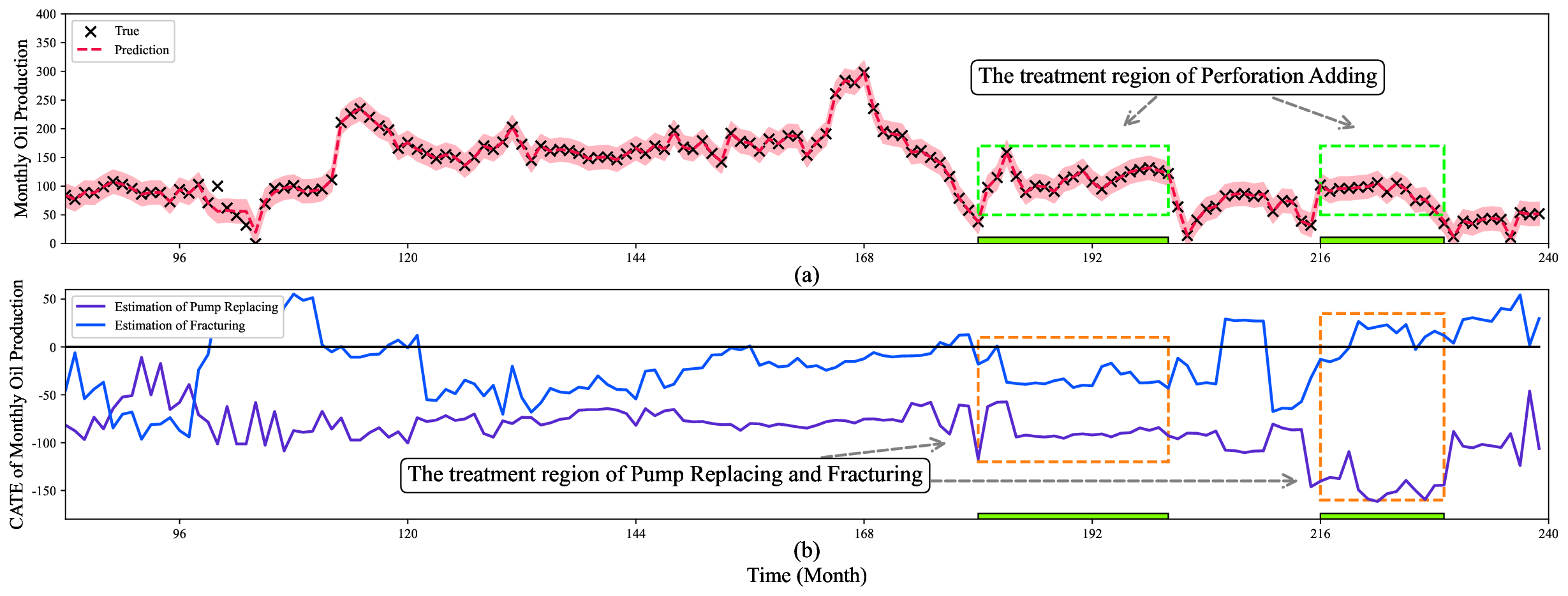}}
    \caption{
        The treatment region of policy and CATE of oil production.
        \textbf{Fig.6(a):} The treatment region under the policy \textit{Perforation Adding},
        where the points in green rectangles denote the response of oil production under the 
        policy \textit{Perforation Adding}.
        \textbf{Fig.6(b):}The CATE of oil production under the policies \textit{Pump Replacing} and \textit{Fracturing},
        where the points in orange rectangles denote the CATE of oil production under the 
        policies \textit{Pump Replacing} and \textit{Fracturing} and
        the black line denotes the CATE of actual policy \textit{Perforation Adding}.
    } 
    \label{figs:cate}     
\end{figure*}

As shown in the Top of Fig.\ref{figs:cate}, it summarizes the results 
for the CATE where an individual waterflooding well is intervened by different policy in treatment region
and for the guidance where CATE is utilized to decide whether the policy is reasonable or not. 
Compared with the ture oil prodution under the actual policy \textit{Perforation Adding} in Fig.\ref{figs:cate}(a),
there are some observations in Fig.\ref{figs:cate}(b) that 
can provide a better understanding into optimal policy in treatment region. 
First, since the the CATEs ($<0$) of \textit{Pump Replacing} and \textit{Fracturing} 
in first treatment region (orange rectangles) are both less than bias CATE 
($=0$, the CATE of actual policy \textit{Perforation Adding})
we can guess that the policies \textit{Pump Replacing} and \textit{Fracturing} are ineffective treatment.
for improving oil production in first treatment region.
Second, since the CATE ($<0$) of \textit{Pump Replacing} is less than bias CATE 
($=0$, the CATE of actual policy \textit{Perforation Adding})
and the CATE of \textit{Fracturing} is almost more than bias CATE 
($=0$, the CATE of actual policy \textit{Perforation Adding}) in second treatment region,
these results illustrate that the policies \textit{Pump Replacing} is ineffective treatment and 
the policies \textit{Fracturing} is more valuable treatment than policy \textit{Perforation Adding}
for improving oil production in second treatment region.

\begin{figure*}[!t]
    \centering
    \subfloat{\includegraphics[scale=0.4]{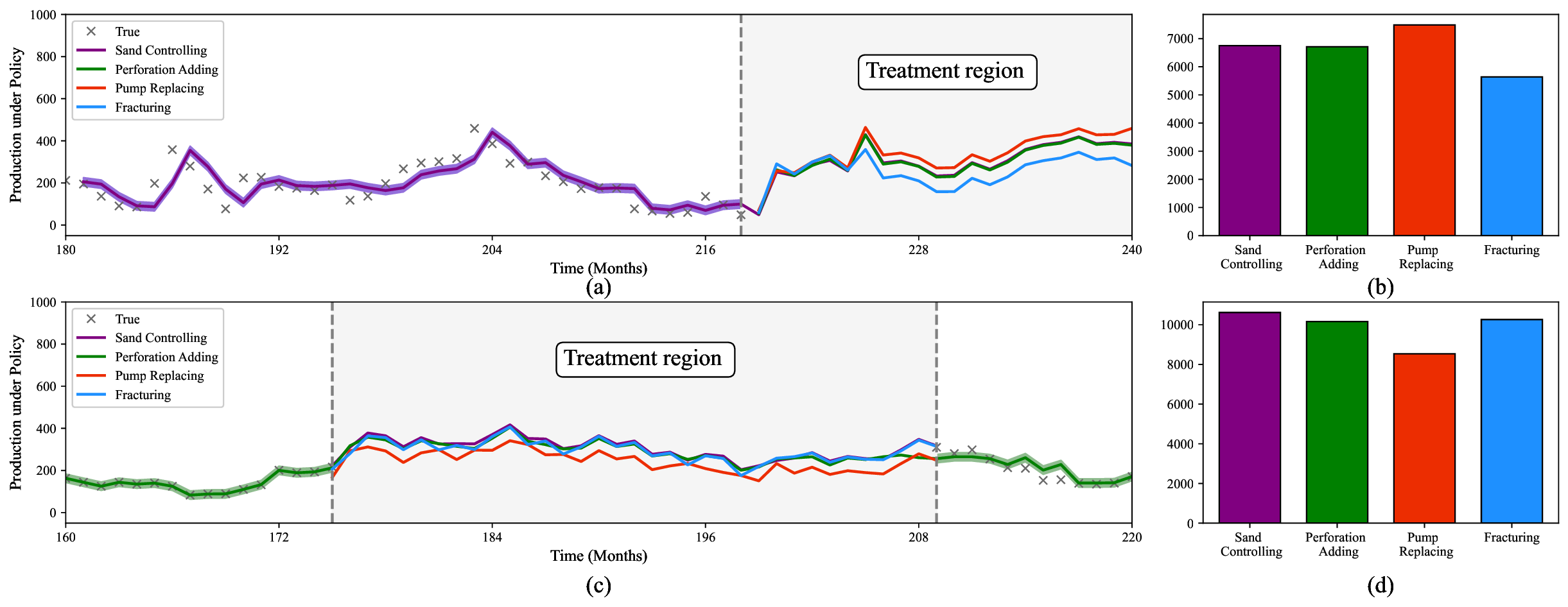}}
    \\[1pt]
    \subfloat{\includegraphics[scale=0.4]{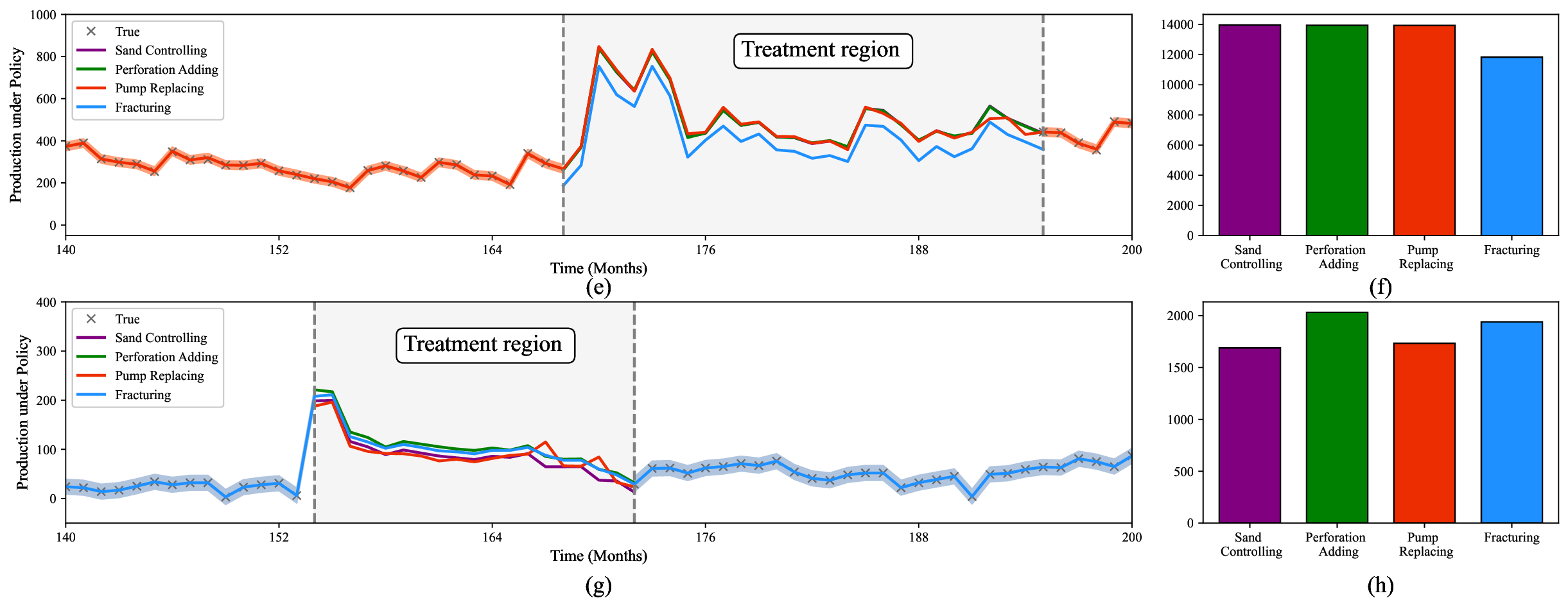}}
    \caption{Inside-Well Inference and Guidance for oil production under different treatment policy.
    \textbf{Left in Top:} The predicted oil production of Well 1 under treatment policy,
    where \textcolor{purple}{purple line} denote the actual oil production without any treatment policy.
    \textbf{Left in Bottom:} The predicted oil production of Well 2 under treatment policy,
    where \textcolor{purple}{purple line} denote the actual oil production without any treatment policy.
    \textbf{Right in Top:} The cumulative increment of oil production under treatment policy for Well 1.
    \textbf{Right in Bottom:} The cumulative increment of oil production under treatment policy for Well 2.}
    \label{figs:policy}     
\end{figure*}

In the left of Fig.\ref{figs:policy}, 
we provide the predicted production trajectories for the treatment policies \textit{Sucker Rod Pump} and
\textit{Perforation Adding}. And in the right of Fig.\ref{figs:ipolicy},
we provide the cumulative increment of predicted production (the response of the left Fig.\ref{figs:policy})
under the treatment policies \textit{Sucker Rod Pump} and \textit{Perforation Adding}.
Then, there are some guidance for improving of oil production. 
First, for Well 1, \textit{Sucker Rod Pump} has more significant and positive causal effects in achieving
the increment of oil production compared with \textit{Perforation Adding}, 
which is corresponding to the top of Fig.\ref{figs:inside-inference},
while the reverse hold false for Well 2.
Second, the predicted production and its CATE by CDA is an effective metric in guiding producing activity.

\section{Conclusion}
In this paper, we aim to combine domain adaptation with causal inferecnce to
industrial time-series forecasting to sovle the data scarcity and decision-confusing.
We identify the shared casuality over time in both domains,
which related theoretical analysis reveals the latent cause of domain shift 
from the perspective of casual inference, 
and accordingly propose a Causal Domain Adaptation (CDA) forecaster based on answer-based
attention mechanism, which is more line with industrial production process.
Different from the existing paradigm of time-series forecaster, CDA not only can perform
industrial time-series forecasting across domains even with different treatments

but also can provide a guidance in industrial production process along with treatments.
Through empirical experiments in real-word and synthetic oilfield datasets, 
we have demonstrated that CDA outperforms various single-domain and domain adaptation methods.
We further show that the effectiveness of our designs in answering casual queries 
about the interventional and counterfactual distribution of outcome in guiding industrial production process.

\bibliographystyle{IEEEtran}
\bibliography{Reference}

\clearpage
\onecolumn
\setcounter{theorem}{0}
\setcounter{corollary}{0}
\appendix
\noindent In this section, we give the proof of the Theorem \ref{the:the1}.
\begin{theorem}
    Let $\mu[P]$ be a distribution of $P$ in RKHS $\mathcal{H}_{k}$, 
    then via the reproducing property of RKHS $\mathcal{H}_{k}$, 
    we have 
    $\langle\phi, \mu[P_{\mathcal{T},\mathcal{S}}]\rangle=\mathbb{E}_{\mathbf{X}\sim\mathcal{T},\mathcal{S}}[\phi(\mathbf{X})]$
    for MMD,
    $\langle\phi, \mu[P_{\mathcal{T},\mathcal{S}|\mathbf{Z}}]\rangle=\mathbb{E}_{\mathbf{X}\sim\mathcal{T},\mathcal{S}}[\phi(\mathbf{X}|\mathbf{Z})]$
    for CMMD.
    Thus, the empirical estimate of squared CMMD shown in Definition \ref{def:cmmd} can be further simplifed as
    \begin{equation}
        \begin{aligned} 
            & \ \ \ \ d_{\mathrm{CMMD}}^2(\mathcal{S}|\mathbf{Z}, \mathcal{T}|\mathbf{Z}) \\
            & \leq \frac{1}{4}\left[d_{\mathrm{CMMD}}^2(\mathcal{S}|\mathbf{Z}, \mathcal{T}|\mathbf{Z})+d_{\mathrm{CMMD}}^2(\mathcal{S}, \mathcal{S}| \mathbf{Z}) + d_{\mathrm{CMMD}}^2(\mathcal{T}, \mathcal{T}| \mathbf{Z}) + d_{\mathrm{MMD}}^2(\mathcal{S}, \mathcal{T})+2d_{\mathrm{CMMD}}(\mathcal{S}|\mathbf{Z}, \mathcal{T}|\mathbf{Z})d_{\mathrm{MMD}}(\mathcal{S}, \mathcal{T})\right] \\
        \end{aligned} \nonumber
    \end{equation}
    where $d_{\mathrm{MMD}}$ is the standard Maximum Mean Distance (MMD),
    $d_{\mathrm{CMMD}}(\mathcal{S}, \mathcal{S}|\mathbf{Z})$ is applied to measure the closeness of source domain
    between the raw distributions $P_{\mathcal{S}}$ and condition distribution $P_{\mathcal{S}|\mathbf{Z}}$, and 
    $d_{\mathrm{CMMD}}(\mathcal{T}, \mathcal{T}|\mathbf{Z})$ is applied to measure the closeness of target domain
    between the raw distributions $P_{\mathcal{T}}$ and condition distribution $P_{\mathcal{T}|\mathbf{Z}}$.
\end{theorem}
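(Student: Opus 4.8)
The idea is to transport the whole argument into the RKHS $\mathcal{H}_{k}$, where every quantity in the statement is a norm of a difference of kernel mean embeddings, and then to use nothing beyond the triangle inequality and one squaring step. First I would apply the reproducing property recalled in the statement to identify, with the shorthand $\mu_{1}=\mu[P_{\mathcal{S}|\mathbf{Z}}]$, $\mu_{2}=\mu[P_{\mathcal{T}|\mathbf{Z}}]$, $\mu_{3}=\mu[P_{\mathcal{S}}]$, $\mu_{4}=\mu[P_{\mathcal{T}}]$, the discrepancies $d_{\mathrm{CMMD}}(\mathcal{S}|\mathbf{Z},\mathcal{T}|\mathbf{Z})=\|\mu_{1}-\mu_{2}\|_{\mathcal{H}_{k}}$, $d_{\mathrm{CMMD}}(\mathcal{S},\mathcal{S}|\mathbf{Z})=\|\mu_{1}-\mu_{3}\|_{\mathcal{H}_{k}}$, $d_{\mathrm{CMMD}}(\mathcal{T},\mathcal{T}|\mathbf{Z})=\|\mu_{2}-\mu_{4}\|_{\mathcal{H}_{k}}$ and $d_{\mathrm{MMD}}(\mathcal{S},\mathcal{T})=\|\mu_{3}-\mu_{4}\|_{\mathcal{H}_{k}}$. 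For the \emph{empirical} estimate one simply replaces each $\mu_{i}$ by its finite sample average $\tfrac1n\sum_{j}\phi(\cdot)$, and since all subsequent steps use only norm inequalities in $\mathcal{H}_{k}$, the empirical and population arguments coincide verbatim.

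Next I would use the telescoping identity $\mu_{1}-\mu_{2}=(\mu_{1}-\mu_{3})+(\mu_{3}-\mu_{4})+(\mu_{4}-\mu_{2})$ and the triangle inequality in $\mathcal{H}_{k}$ to obtain
\[
d_{\mathrm{CMMD}}(\mathcal{S}|\mathbf{Z},\mathcal{T}|\mathbf{Z})\ \le\ d_{\mathrm{CMMD}}(\mathcal{S},\mathcal{S}|\mathbf{Z})+d_{\mathrm{MMD}}(\mathcal{S},\mathcal{T})+d_{\mathrm{CMMD}}(\mathcal{T},\mathcal{T}|\mathbf{Z}).
\]
Writing $A,B,C,D$ for the four nonnegative quantities $d_{\mathrm{CMMD}}(\mathcal{S}|\mathbf{Z},\mathcal{T}|\mathbf{Z})$, $d_{\mathrm{CMMD}}(\mathcal{S},\mathcal{S}|\mathbf{Z})$, $d_{\mathrm{MMD}}(\mathcal{S},\mathcal{T})$, $d_{\mathrm{CMMD}}(\mathcal{T},\mathcal{T}|\mathbf{Z})$, this reads $A\le B+C+D$; adding $A$ to both sides gives $2A\le A+B+C+D$, and squaring (both sides nonnegative) and dividing by $4$ yields
\[
d_{\mathrm{CMMD}}^{2}(\mathcal{S}|\mathbf{Z},\mathcal{T}|\mathbf{Z})\ \le\ \tfrac14\,(A+B+C+D)^{2}=\tfrac14\big[A^{2}+B^{2}+C^{2}+D^{2}+2(AB+AC+AD+BC+BD+CD)\big],
\]
which carries the squared terms $A^{2},B^{2},C^{2},D^{2}$ displayed in the theorem together with, among the cross products, the listed term $2CD=2\,d_{\mathrm{CMMD}}(\mathcal{T},\mathcal{T}|\mathbf{Z})\,d_{\mathrm{MMD}}(\mathcal{S},\mathcal{T})$.

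The step I expect to be the main obstacle, and the one I would write out most carefully, is exactly this final bookkeeping of the cross products: the clean bound above carries all six mixed terms $2(AB+AC+AD+BC+BD+CD)$, whereas the statement retains only the four squares and the single mixed term $2CD$. To land precisely on the displayed right-hand side one has to justify how the remaining mixed terms are disposed of — either by absorbing them against the squared quantities already present via the elementary inequality $2uv\le u^{2}+v^{2}$ (equivalently, Cauchy--Schwarz $|\langle\mu_{i}-\mu_{j},\mu_{k}-\mu_{l}\rangle|\le\|\mu_{i}-\mu_{j}\|\,\|\mu_{k}-\mu_{l}\|$ applied before passing to norms), or by using a finer grouping of the telescoping chain, for instance first bounding $\|(\mu_{1}-\mu_{3})+(\mu_{4}-\mu_{2})\|^{2}\le 2\|\mu_{1}-\mu_{3}\|^{2}+2\|\mu_{2}-\mu_{4}\|^{2}$ and only then invoking the triangle inequality against $\mu_{3}-\mu_{4}$. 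Everything else — the reproducing-property rewriting, the telescoping decomposition, and the squaring — is routine once the mean-embedding picture is in place.
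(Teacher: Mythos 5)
Your route is the same as the paper's: identify each discrepancy with the RKHS norm of a difference of mean embeddings, telescope $\mu[P_{\mathcal{S}|\mathbf{Z}}]-\mu[P_{\mathcal{T}|\mathbf{Z}}]$ through $\mu[P_{\mathcal{S}}]$ and $\mu[P_{\mathcal{T}}]$, apply the triangle inequality to get $A\le B+C+D$, add $A$ to both sides, square, and divide by $4$. Up to that point you and the paper agree (the paper's displayed telescoping has a sign typo, but its intent is exactly your decomposition). The problem is the step you yourself flag as the obstacle: $\tfrac14(A+B+C+D)^2$ contains all six cross products, while the stated bound keeps only the four squares and a single cross term, and since every cross product is nonnegative the stated right-hand side is \emph{smaller} than $\tfrac14(A+B+C+D)^2$. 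So $A^2\le\tfrac14(A+B+C+D)^2$ does not imply the theorem, and neither of your proposed repairs can close the gap: $2uv\le u^2+v^2$ and the regrouping $\|x+y\|^2\le 2\|x\|^2+2\|y\|^2$ both \emph{enlarge} the upper bound; they never delete terms. In fact the displayed inequality is false as a pure norm statement: take $\mu[P_{\mathcal{S}}]=\mu[P_{\mathcal{T}}]=0$, $\mu[P_{\mathcal{S}|\mathbf{Z}}]=e$, $\mu[P_{\mathcal{T}|\mathbf{Z}}]=-e$ for a unit vector $e$; then $A=2$, $B=D=1$, $C=0$, the triangle inequality holds with equality, and the right-hand side equals $\tfrac14(4+1+1+0+0)=3/2<4=A^2$.

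The paper bridges exactly this step by fiat: after expanding the square it declares the five omitted cross products to be zero, on the grounds that each pairs an ``internal'' closeness ($d_{\mathrm{CMMD}}(\mathcal{S},\mathcal{S}|\mathbf{Z})$ or $d_{\mathrm{CMMD}}(\mathcal{T},\mathcal{T}|\mathbf{Z})$) with a ``cross-domain'' closeness and that such mixed products are ``unvalid in domain adversarial learning.'' That is a modeling postulate, not a consequence of the reproducing property, so your instinct to isolate this as the unproved step is correct --- but your proposal does not supply the missing argument, and no argument of the triangle-inequality type can. (A small bookkeeping slip besides: the statement you were given retains the cross term $2\,d_{\mathrm{CMMD}}(\mathcal{S}|\mathbf{Z},\mathcal{T}|\mathbf{Z})\,d_{\mathrm{MMD}}(\mathcal{S},\mathcal{T})$, i.e.\ $2AC$ in your labels, not $2CD$; the paper is itself inconsistent between its two displays of the theorem on which cross term survives.)
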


\begin{proof}
    Given the distributions of source $P_{\mathcal{S}}$
    and target domain $P_{\mathcal{T}}$ \cite{gretton2012kernel, kumagai2019unsupervised}, 
    its MMD can be defined as
    \begin{equation}
        d_{\mathrm{MMD}}(\mathcal{S}, \mathcal{T})=\|\mu\left(P_{\mathcal{S}}\right)-\mu\left(P_{\mathcal{T}}\right)\|_{\mathcal{H}_k}
    \end{equation}  
    In addition, 
    since $\langle\phi, \mu[P_{\mathcal{T},\mathcal{S}}]\rangle=\mathbb{E}_{\mathbf{X}\sim\mathcal{T},\mathcal{S}}[\phi(\mathbf{X})]$
    in MMD and 
    $\langle\phi, \mu[P_{\mathcal{T},\mathcal{S}|\mathbf{Z}}]\rangle=\mathbb{E}_{\mathbf{X}\sim\mathcal{T},\mathcal{S}}[\phi(\mathbf{X}|\mathbf{Z})]$
    in CMMD, we have the following equations for CMMD
    \begin{equation}
        \begin{aligned}
            & d_{\mathrm{CMMD}}(\mathcal{S}| \mathbf{Z}, \mathcal{T} | \mathbf{Z})=\left\|\mu\left(P_{\mathcal{S} | Z}\right)-\mu\left(P_{\mathcal{T} | Z}\right)\right\|_{\mathcal{H}_k} \\
            & d_{\mathrm{CMMD}}(\mathcal{S}, \mathcal{S}| \mathbf{Z})=\|\mu\left(P_{\mathcal{S}}\right)-\mu\left(P_{\mathcal{S}|\mathbf{Z}}\right)\|_{\mathcal{H}_k} \\
            & d_{\mathrm{CMMD}}(\mathcal{T}, \mathcal{T}| \mathbf{Z})=\|\mu\left(P_{\mathcal{T}}\right)-\mu\left(P_{\mathcal{T}|\mathbf{Z}}\right)\|_{\mathcal{H}_k}
        \end{aligned}
    \end{equation}
    Thus, we can obtain
    \begin{equation}
        \begin{aligned} 
            & \ \ \ \ d_{\mathrm{CMMD}}(\mathcal{S}, \mathcal{T} | \mathbf{Z}) \\
            & =\left\|\mu\left(P_{\mathcal{S} | Z}\right)-\mu\left(P_{\mathcal{T} | Z}\right)\right\|_{\mathcal{H}_k} \\ 
            & =\left\|\mu\left(P_{\mathcal{S} | Z}\right)-\mu\left(P_{\mathcal{T} | Z}\right)-\mu\left(P_{\mathcal{S}}\right)+\mu\left(P_{\mathcal{T}}\right)+\mu\left(P_{\mathcal{S}}\right)-\mu\left(P_{\mathcal{T}}\right)\right\|_{\mathcal{H}_k} \\ 
            & =\left\|\left[\mu\left(P_{\mathcal{S} | Z}\right)-\mu\left(P_{\mathcal{S}}\right)\right]+\left[\mu\left(P_{\mathcal{S}}\right)-\mu\left(P_{\mathcal{T} | Z}\right)\right]+\left[\mu\left(P_{\mathcal{S}}\right)-\mu\left(P_{\mathcal{T}}\right)\right]\right\|_{\mathcal{H}_k} \\ 
            & \leq\left\|\mu\left(P_{\mathcal{S} | Z}\right)-\mu\left(P_{\mathcal{S}}\right)\right\|_{\mathcal{H}_k}+\left\|\mu\left(P_{\mathcal{T} | Z}\right)-\mu\left(P_{\mathcal{T}}\right)\right\|_{\mathcal{H}_k}+\left\|\left[\mu\left(P_{\mathcal{S}}\right)-\mu\left(P_{\mathcal{T}}\right)\right]\right\|_{\mathcal{H}_k}) \\
            & = d_{\mathrm{CMMD}}(\mathcal{S}, \mathcal{S}| \mathbf{Z}) + d_{\mathrm{CMMD}}(\mathcal{T}, \mathcal{T}| \mathbf{Z}) + d_{\mathrm{MMD}}(\mathcal{S}, \mathcal{T})
        \end{aligned} 
    \end{equation} 
    Let $d_{\mathrm{CMMD}}(\mathcal{S}, \mathcal{T} | \mathbf{Z}):=\sup_{\|\phi\|_{\mathcal{H}_{k}}\leq1}\left(d_{\mathrm{CMMD}}(\mathcal{S}, \mathcal{S}| \mathbf{Z}) + d_{\mathrm{CMMD}}(\mathcal{T}, \mathcal{T}| \mathbf{Z}) + d_{\mathrm{MMD}}(\mathcal{S}, \mathcal{T})\right)$,
    the nearly minimal estimation of CMMD can be represented as
    \begin{equation}
        \label{eqs:the1}
        \begin{aligned}
            & \ \ \ \ d_{\mathrm{CMMD}}^2(\mathcal{S}|\mathbf{Z}, \mathcal{T}|\mathbf{Z}) \\
            & =\frac{1}{4}\left[d_{\mathrm{CMMD}}(\mathcal{S}|\mathbf{Z}, \mathcal{T}|\mathbf{Z})+d_{\mathrm{CMMD}}(\mathcal{S}, \mathcal{S}| \mathbf{Z}) + d_{\mathrm{CMMD}}(\mathcal{T}, \mathcal{T}| \mathbf{Z}) + d_{\mathrm{MMD}}(\mathcal{S}, \mathcal{T})\right]^2 \\      
            & =\frac{1}{4}\left[d_{\mathrm{CMMD}}^2(\mathcal{S}|\mathbf{Z}, \mathcal{T}|\mathbf{Z})+d_{\mathrm{CMMD}}^2(\mathcal{S}, \mathcal{S}| \mathbf{Z}) + d_{\mathrm{CMMD}}^2(\mathcal{T}, \mathcal{T}| \mathbf{Z}) + d_{\mathrm{MMD}}^2(\mathcal{S}, \mathcal{T})\right] \\
            & + \frac{1}{4}\left[2d_{\mathrm{CMMD}}(\mathcal{S}|\mathbf{Z}, \mathcal{T}|\mathbf{Z})d_{\mathrm{CMMD}}(\mathcal{S}, \mathcal{S}| \mathbf{Z}) + 2d_{\mathrm{CMMD}}(\mathcal{S}|\mathbf{Z}, \mathcal{T}|\mathbf{Z})d_{\mathrm{CMMD}}(\mathcal{T}, \mathcal{T}| \mathbf{Z}) + 2d_{\mathrm{CMMD}}(\mathcal{S}|\mathbf{Z}, \mathcal{T}|\mathbf{Z})d_{\mathrm{MMD}}(\mathcal{S}, \mathcal{T})\right] \\
            & + \frac{1}{4}\left[2d_{\mathrm{CMMD}}(\mathcal{S}, \mathcal{S}|\mathbf{Z})d_{\mathrm{CMMD}}(\mathcal{T}, \mathcal{T}| \mathbf{Z}) + 2d_{\mathrm{CMMD}}(\mathcal{S}, \mathcal{S}|\mathbf{Z})d_{\mathrm{MMD}}(\mathcal{S}, \mathcal{T}) + 2d_{\mathrm{CMMD}}(\mathcal{T}, \mathcal{T}|\mathbf{Z})d_{\mathrm{MMD}}(\mathcal{S}, \mathcal{T})\right]
        \end{aligned}
    \end{equation}
    Since $d_{\mathrm{CMMD}}(\mathcal{S}, \mathcal{S}| \mathbf{Z})$ and $d_{\mathrm{CMMD}}(\mathcal{T}, \mathcal{T}| \mathbf{Z})$
    meausre the internal closeness of source $\mathcal{S}$ and target domain $\mathcal{T}$, respectively;
    $d_{\mathrm{CMMD}}(\mathcal{S}|\mathbf{Z}, \mathcal{T}|\mathbf{Z})$ and $d_{\mathrm{MMD}}(\mathcal{S}, \mathcal{T})$
    are both utilized to measure the closeness between source domain $\mathcal{S}$ and target domain $\mathcal{T}$,
    the mixed MMD is unvalid in dmain adversarial learning, that is, 
    \begin{equation}
        \begin{aligned}
            & d_{\mathrm{CMMD}}(\mathcal{S}|\mathbf{Z}, \mathcal{T}|\mathbf{Z})d_{\mathrm{CMMD}}(\mathcal{S}, \mathcal{S}| \mathbf{Z})=0\\
            & d_{\mathrm{CMMD}}(\mathcal{S}|\mathbf{Z}, \mathcal{T}|\mathbf{Z})d_{\mathrm{CMMD}}(\mathcal{T}, \mathcal{T}| \mathbf{Z})=0\\
            & d_{\mathrm{CMMD}}(\mathcal{S}, \mathcal{S}|\mathbf{Z})d_{\mathrm{CMMD}}(\mathcal{T}, \mathcal{T}| \mathbf{Z})=0\\
            & d_{\mathrm{CMMD}}(\mathcal{S}, \mathcal{S}|\mathbf{Z})d_{\mathrm{MMD}}(\mathcal{S}, \mathcal{T})=0\\
            & d_{\mathrm{CMMD}}(\mathcal{T}, \mathcal{T}|\mathbf{Z})d_{\mathrm{MMD}}(\mathcal{S}, \mathcal{T})=0.
        \end{aligned}
    \end{equation}
    Therefore, the $d_{\mathrm{CMMD}}^2(\mathcal{S}|\mathbf{Z}, \mathcal{T}|\mathbf{Z})$ satsifies
    \begin{equation}
        \begin{aligned} 
            & \ \ \ \ d_{\mathrm{CMMD}}^2(\mathcal{S}|\mathbf{Z}, \mathcal{T}|\mathbf{Z}) \\
            & \leq \frac{1}{4}\left[d_{\mathrm{CMMD}}^2(\mathcal{S}|\mathbf{Z}, \mathcal{T}|\mathbf{Z})+d_{\mathrm{CMMD}}^2(\mathcal{S}, \mathcal{S}| \mathbf{Z}) + d_{\mathrm{CMMD}}^2(\mathcal{T}, \mathcal{T}| \mathbf{Z}) + d_{\mathrm{MMD}}^2(\mathcal{S}, \mathcal{T})+2d_{\mathrm{CMMD}}(\mathcal{T}, \mathcal{T}|\mathbf{Z})d_{\mathrm{MMD}}(\mathcal{S}, \mathcal{T})\right] \\
        \end{aligned} \nonumber
    \end{equation}
\end{proof}
    
\begin{corollary}
    \label{cor:cor1}
    In domain adversarial learning, the MMD can be transformed into a more tractable form 
    from the perspective of loss function \cite{jin2022domain}, that is,
    \begin{equation}
        \begin{aligned}
            & \ \ \ \ d_{\mathrm{CMMD}}^2(\mathcal{S}|\mathbf{Z}, \mathcal{T}|\mathbf{Z}) \\
            & \Rightarrow \mathcal{L}(\mathcal{D}_{\mathcal{S}}, \mathcal{D}_{\mathcal{T}},\mathcal{D}_{\mathcal{S}|\mathbf{Z}}, \mathcal{D}_{\mathcal{T}|\mathbf{Z}};\mathcal{G}_{\mathcal{S}}, \mathcal{G}_{\mathcal{T}}) \\
            & 
            = \mathcal{L}_{1}(\mathcal{D}_{\mathcal{S}}, \mathcal{D}_{\mathcal{T}};\mathcal{G}_{\mathcal{S}}, \mathcal{G}_{\mathcal{T}})
            + \mathcal{L}_{2}(\mathcal{D}_{\mathcal{S}|\mathbf{Z}}, \mathcal{D}_{\mathcal{T}|\mathbf{Z}};\mathcal{G}_{\mathcal{S}}, \mathcal{G}_{\mathcal{T}}) \\
            & 
            + \mathcal{L}_{3}(\mathcal{D}_{\mathcal{S}},\mathcal{D}_{\mathcal{S}|\mathbf{Z}};\mathcal{G}_{\mathcal{S}})
            + \mathcal{L}_{4}(\mathcal{D}_{\mathcal{T}},\mathcal{D}_{\mathcal{T}|\mathbf{Z}};\mathcal{G}_{\mathcal{T}}) \\
            & 
            + \mathcal{L}_{1}(\mathcal{D}_{\mathcal{S}}, \mathcal{D}_{\mathcal{T}};\mathcal{G}_{\mathcal{S}}, \mathcal{G}_{\mathcal{T}})^{\frac{1}{2}}
            \mathcal{L}_{2}(\mathcal{D}_{\mathcal{S}|\mathbf{Z}}, \mathcal{D}_{\mathcal{T}|\mathbf{Z}};\mathcal{G}_{\mathcal{S}}, \mathcal{G}_{\mathcal{T}})^{\frac{1}{2}}
        \end{aligned} \nonumber
    \end{equation}
    where $\mathcal{D}_{\mathcal{S}}=\{\mathcal{S}, \mathcal{\hat{S}}\}, \mathcal{D}_{\mathcal{T}}=\{\mathcal{T}, \mathcal{\hat{T}}\}$, 
    $\mathcal{D}_{\mathcal{S}|\mathbf{Z}}=\{\mathcal{S}, \mathcal{\hat{S}}\}_{|\mathbf{Z}}$
    and $\mathcal{D}_{\mathcal{T}|\mathbf{Z}}=\{\mathcal{T}, \mathcal{\hat{T}}\}_{|\mathbf{Z}}$,
    which $\mathcal{\hat{S}}$ and $\mathcal{\hat{T}}$ 
    denote the generated source and target domains by sequence generators of source domain $\mathcal{G}_{\mathcal{S}}$
    and target domain $\mathcal{G}_{\mathcal{T}}$.
    \begin{equation}
        \left\{
        \begin{aligned}
        \mathcal{L}_{1}
        & =\beta_{1}\left\|\frac{1}{|\mathcal{D}_{\mathcal{S}}|}\sum_{\mathbf{X}\in{\mathcal{D}_{\mathcal{S}}}} \mathbf{X}
        -\frac{1}{|\mathcal{D}_{\mathcal{T}}|}\sum_{\mathbf{X}\in\mathcal{D}_{\mathcal{T}}} \mathbf{X}\right\|_{2}^{2} \\
        \mathcal{L}_{2}
        & =\beta_{2}\left\|\frac{1}{|\mathcal{D}_{\mathcal{S}}|}\sum_{\mathbf{a}\in\mathbf{Z}}\sum_{\mathbf{X}\in{\mathcal{D}_{\mathcal{S}|\mathbf{Z}}}} \mathbf{R}_{\mathbf{a}}
        -\frac{1}{|\mathcal{D}_{\mathcal{T}}|}\sum_{\mathbf{a}\in\mathbf{Z}}\sum_{\mathbf{X}\in\mathcal{D}_{\mathcal{T}|\mathbf{Z}}} \mathbf{R}_{\mathbf{a}}\right\|_{2}^{2}\\
        \mathcal{L}_{3}
        &=\beta_{3}\left\|\frac{1}{|\mathcal{D}_{\mathcal{S}}|}\sum_{\mathbf{X}\in{\mathcal{D}_{\mathcal{S}}}} \mathbf{X}
        -\frac{1}{|\mathcal{D}_{\mathcal{S}|\mathbf{Z}}|}\sum_{\mathbf{a}\in\mathbf{Z}}\sum_{\mathbf{X}\in\mathcal{D}_{\mathcal{S}|\mathbf{Z}}} \mathbf{R}_{\mathbf{a}}\right\|_{2}^{2}\\
        \mathcal{L}_{4}
        &=\beta_{4}\left\|\frac{1}{|\mathcal{D}_{\mathcal{T}}|}\sum_{\mathbf{X}\in{\mathcal{D}_{\mathcal{T}}}} \mathbf{X}
        -\frac{1}{|\mathcal{D}_{\mathcal{T}|\mathbf{Z}}|}\sum_{\mathbf{a}\in\mathbf{Z}}\sum_{\mathbf{X}\in\mathcal{D}_{\mathcal{T}|\mathbf{Z}}} \mathbf{R}_{\mathbf{a}}\right\|_{2}^{2}      
        \end{aligned} \nonumber
        \right.
    \end{equation}
    where $\mathbf{R}_{\mathbf{a}}:=\alpha(\mathbf{a}, \mathbf{k})\mathbf{X}$, 
    $\mathbf{k}$ denote the keys in transfer learning, 
    $|\cdot|$ is the cardinality function, 
    and the constants $\beta$ are the balance parameters.
\end{corollary}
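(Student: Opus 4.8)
The plan is to descend from the reproducing-kernel inequality of Theorem~\ref{the:the1} to an estimator that only involves empirical means over minibatches, following the primal (linear-kernel) reformulation of MMD used in \cite{jin2022domain}. First I would recall that, by the reproducing property, each mean embedding $\mu[P]=\mathbb{E}_{\mathbf{X}\sim P}[\phi(\mathbf{X})]$ (and its conditional counterpart $\mathbb{E}_{\mathbf{X}\sim P}[\phi(\mathbf{X}|\mathbf{Z})]$) can be replaced by a $V$-statistic sample average; taking $\phi$ to be the feature realization acting on the learned representation, the empirical unconditional embedding of a dataset $\mathcal{D}$ is $\frac{1}{|\mathcal{D}|}\sum_{\mathbf{X}\in\mathcal{D}}\mathbf{X}$, while the empirical conditional embedding, summed over the observed policy values $\mathbf{a}\in\mathbf{Z}$, is $\frac{1}{|\mathcal{D}|}\sum_{\mathbf{a}\in\mathbf{Z}}\sum_{\mathbf{X}\in\mathcal{D}_{|\mathbf{Z}}}\mathbf{R}_{\mathbf{a}}$, where $\mathbf{R}_{\mathbf{a}}=\alpha(\mathbf{a},\mathbf{k})\mathbf{X}$ is precisely the answer-based attention reconstruction of Eq.~(\ref{eqs:answer-based})--(\ref{eqs:reconstruction-x}) with keys $\mathbf{k}$ given by the position-wise CATE of Definition~\ref{def:CATE}. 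This is the step that makes the conditional embedding of Definition~\ref{def:cmmd} \emph{computable}: the CATE-keyed attention weight $\alpha(\mathbf{a},\mathbf{k})$ plays the role of the conditioning slice $\phi(\cdot|\mathbf{Z})$.

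Second, I would substitute these empirical embeddings into every squared-distance term on the right-hand side of Theorem~\ref{the:the1}. The term $d_{\mathrm{MMD}}^2(\mathcal{S},\mathcal{T})$ becomes $\mathcal{L}_1/\beta_1$, a squared Euclidean distance between the unconditional sample means of $\mathcal{D}_{\mathcal{S}}$ and $\mathcal{D}_{\mathcal{T}}$; $d_{\mathrm{CMMD}}^2(\mathcal{S}|\mathbf{Z},\mathcal{T}|\mathbf{Z})$ becomes $\mathcal{L}_2/\beta_2$, the analogous distance between the reconstruction means of $\mathcal{D}_{\mathcal{S}|\mathbf{Z}}$ and $\mathcal{D}_{\mathcal{T}|\mathbf{Z}}$; and the two self-closeness terms $d_{\mathrm{CMMD}}^2(\mathcal{S},\mathcal{S}|\mathbf{Z})$, $d_{\mathrm{CMMD}}^2(\mathcal{T},\mathcal{T}|\mathbf{Z})$ become $\mathcal{L}_3/\beta_3$, $\mathcal{L}_4/\beta_4$, each comparing an unconditional mean with its conditioned reconstruction mean on the same domain. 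The product term retained by Theorem~\ref{the:the1} (the cross-product between an unconditional and a conditional closeness that survives after the self-cross-products are discarded as invalid in adversarial learning) then contributes the bilinear correction $\mathcal{L}_1^{1/2}\mathcal{L}_2^{1/2}$ upon taking square roots. Finally I would absorb the factor $\frac14$ and all remaining numerical constants into the free balance weights $\beta_1,\dots,\beta_4$, and take the doubled datasets $\mathcal{D}_{\mathcal{S}}=\{\mathcal{S},\hat{\mathcal{S}}\}$, $\mathcal{D}_{\mathcal{T}}=\{\mathcal{T},\hat{\mathcal{T}}\}$ --- real samples plus the generator outputs $\mathcal{G}_{\mathcal{S}},\mathcal{G}_{\mathcal{T}}$ --- as the empirical populations over which the averages run; this is exactly what makes the objective trainable end-to-end and yields the displayed $\mathcal{L}=\mathcal{L}_1+\mathcal{L}_2+\mathcal{L}_3+\mathcal{L}_4+\mathcal{L}_1^{1/2}\mathcal{L}_2^{1/2}$.

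The main obstacle I expect is the rigour of the first step. A $V$-statistic estimator of squared MMD contains pairwise kernel evaluations and collapses to $\big\|\mathbb{E}_{\mathcal{S}}[\mathbf{X}]-\mathbb{E}_{\mathcal{T}}[\mathbf{X}]\big\|_2^2$ only under a linear kernel (equivalently, once $\phi$ is read as the feature extractor and the ``kernel'' as an inner product in representation space), so I would have to state this primal assumption explicitly and cite the corresponding reduction in \cite{jin2022domain, gretton2012kernel}. I would also have to argue that the CATE-based reconstruction $\mathbf{R}_{\mathbf{a}}$, aggregated over the observed policy values and normalized, is a consistent surrogate for the conditional feature mean $\mathbb{E}[\phi(\mathbf{X})|\mathbf{Z}=\mathbf{a}]$ entering Definition~\ref{def:cmmd}. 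A secondary, purely bookkeeping difficulty is reconciling the inequality of Theorem~\ref{the:the1} with the equality used in the corollary: this is absorbed by declaring the $\beta_i$ tunable, so that the symbol ``$\Rightarrow$'' is to be read as ``is controlled, up to reweighting, by.''
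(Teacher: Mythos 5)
Your proposal follows essentially the same route as the paper's own proof: both invoke the tractable empirical-mean form of MMD from \cite{jin2022domain}, map the four (C)MMD terms of Theorem~\ref{the:the1} to $\mathcal{L}_{1},\dots,\mathcal{L}_{4}$ by substituting the attention reconstruction $\mathbf{R}_{\mathbf{a}}=\alpha(\mathbf{a},\mathbf{k})\mathbf{X}$ for the conditional embeddings, retain the single surviving cross term as $\mathcal{L}_{1}^{1/2}\mathcal{L}_{2}^{1/2}$, and absorb the $\tfrac{1}{4}$ and remaining constants into the balance parameters $\beta_{i}$ (and $\gamma$). The caveats you flag --- the linear-kernel reduction, the consistency of $\mathbf{R}_{\mathbf{a}}$ as a surrogate for the conditional mean, and reading ``$\Rightarrow$'' as control up to reweighting --- are left implicit in the paper, so your version is if anything more careful.
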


\begin{proof}
    In domain adversarial learning, the MMD can be transformed into a more tractable form 
    by computing the distance between source and target domain \cite{jin2022domain}, that is,
    \begin{equation}
        \label{eqs:cor1}
        d_{\mathrm{MMD}}^2(\mathcal{S}, \mathcal{T})\Rightarrow \mathcal{L}(\mathcal{S}, \mathcal{T})
        =\left\|\frac{1}{|\mathcal{S}|}\sum_{\mathbf{X}\in{\mathcal{S}}} \mathbf{X}
        -\frac{1}{|\mathcal{T}|}\sum_{\mathbf{X}\in{\mathcal{T}}} \mathbf{X}\right\|_{2}^{2}
    \end{equation}
    Thus the independent loss functions of our proposed CDA can be formulized by utilizing Eq.(\ref{eqs:reconstruction-x}), 
    shown as
    \begin{equation}
        \label{eqs:cor2}
        \begin{aligned}
            & d_{\mathrm{MMD}}^2(\mathcal{S}, \mathcal{T})\Rightarrow\mathcal{L}_{1}(\mathcal{D}_{\mathcal{S}}, \mathcal{D}_{\mathcal{T}};\mathcal{G}_{\mathcal{S}}, \mathcal{G}_{\mathcal{T}}) 
            =\left\|\frac{1}{|\mathcal{D}_{\mathcal{S}}|}\sum_{\mathbf{X}\in{\mathcal{D}_{\mathcal{S}}}} \mathbf{X}
            -\frac{1}{|\mathcal{D}_{\mathcal{T}}|}\sum_{\mathbf{X}\in\mathcal{D}_{\mathcal{T}}} \mathbf{X}\right\|_{2}^{2}\\
            & d_{\mathrm{CMMD}}^2(\mathcal{S}|\mathbf{Z}, \mathcal{T}|\mathbf{Z})\Rightarrow\mathcal{L}_{2}(\mathcal{D}_{\mathcal{S}|\mathbf{Z}}, \mathcal{D}_{\mathcal{T}|\mathbf{Z}};\mathcal{G}_{\mathcal{S}}, \mathcal{G}_{\mathcal{T}}) 
            =\left\|\frac{1}{|\mathcal{D}_{\mathcal{S}}|}\sum_{\mathbf{a}\in\mathbf{Z}}\sum_{\mathbf{X}\in{\mathcal{D}_{\mathcal{S}|\mathbf{Z}}}} \mathbf{R}_{\mathbf{a}}
            -\frac{1}{|\mathcal{D}_{\mathcal{T}}|}\sum_{\mathbf{a}\in\mathbf{Z}}\sum_{\mathbf{X}\in\mathcal{D}_{\mathcal{T}|\mathbf{Z}}} \mathbf{R}_{\mathbf{a}}\right\|_{2}^{2}\\
            & d_{\mathrm{CMMD}}^2(\mathcal{S}, \mathcal{S}|\mathbf{Z})\Rightarrow\mathcal{L}_{3}(\mathcal{D}_{\mathcal{S}}, \mathcal{D}_{\mathcal{S}|\mathbf{Z}};\mathcal{G}_{\mathcal{S}}) 
            =\left\|\frac{1}{|\mathcal{D}_{\mathcal{S}}|}\sum_{\mathbf{X}\in{\mathcal{D}_{\mathcal{S}}}} \mathbf{X}
            -\frac{1}{|\mathcal{D}_{\mathcal{S}|\mathbf{Z}}|}\sum_{\mathbf{a}\in\mathbf{Z}}\sum_{\mathbf{X}\in\mathcal{D}_{\mathcal{S}|\mathbf{Z}}} \mathbf{R}_{\mathbf{a}}\right\|_{2}^{2}\\
            & d_{\mathrm{CMMD}}^2(\mathcal{T}, \mathcal{T}|\mathbf{Z})\Rightarrow\mathcal{L}_{4}(\mathcal{D}_{\mathcal{T}}, \mathcal{D}_{\mathcal{T}|\mathbf{Z}};\mathcal{G}_{\mathcal{T}}) 
            =\left\|\frac{1}{|\mathcal{D}_{\mathcal{T}}|}\sum_{\mathbf{X}\in{\mathcal{D}_{\mathcal{T}}}} \mathbf{X}
            -\frac{1}{|\mathcal{D}_{\mathcal{T}|\mathbf{Z}}|}\sum_{\mathbf{a}\in\mathbf{Z}}\sum_{\mathbf{X}\in\mathcal{D}_{\mathcal{T}|\mathbf{Z}}} \mathbf{R}_{\mathbf{a}}\right\|_{2}^{2}
        \end{aligned}
    \end{equation}
    Therefore, we have for Eq.(\ref{eqs:the1})
    \begin{equation}
        \begin{aligned} 
            & \ \ \ \ d_{\mathrm{CMMD}}^2(\mathcal{S}|\mathbf{Z}, \mathcal{T}|\mathbf{Z}) \\
            & \leq \frac{1}{4}\left\|\frac{1}{|\mathcal{D}_{\mathcal{S}}|}\sum_{\mathbf{X}\in{\mathcal{D}_{\mathcal{S}}}} \mathbf{X}
            -\frac{1}{|\mathcal{D}_{\mathcal{T}}|}\sum_{\mathbf{X}\in\mathcal{D}_{\mathcal{T}}} \mathbf{X}\right\|_{2}^{2}
            + \frac{1}{4}\left\|\frac{1}{|\mathcal{D}_{\mathcal{S}}|}\sum_{\mathbf{a}\in\mathbf{Z}}\sum_{\mathbf{X}\in{\mathcal{D}_{\mathcal{S}|\mathbf{Z}}}} \mathbf{R}_{\mathbf{a}}
            -\frac{1}{|\mathcal{D}_{\mathcal{T}}|}\sum_{\mathbf{a}\in\mathbf{Z}}\sum_{\mathbf{X}\in\mathcal{D}_{\mathcal{T}|\mathbf{Z}}} \mathbf{R}_{\mathbf{a}}\right\|_{2}^{2}\\
            & 
             + \frac{1}{4}\left\|\frac{1}{|\mathcal{D}_{\mathcal{S}}|}\sum_{\mathbf{X}\in{\mathcal{D}_{\mathcal{S}}}} \mathbf{X}
            -\frac{1}{|\mathcal{D}_{\mathcal{S}|\mathbf{Z}}|}\sum_{\mathbf{a}\in\mathbf{Z}}\sum_{\mathbf{X}\in\mathcal{D}_{\mathcal{S}|\mathbf{Z}}} \mathbf{R}_{\mathbf{a}}\right\|_{2}^{2}
            + \frac{1}{4}\left\|\frac{1}{|\mathcal{D}_{\mathcal{T}}|}\sum_{\mathbf{X}\in{\mathcal{D}_{\mathcal{T}}}} \mathbf{X}
            -\frac{1}{|\mathcal{D}_{\mathcal{T}|\mathbf{Z}}|}\sum_{\mathbf{a}\in\mathbf{Z}}\sum_{\mathbf{X}\in\mathcal{D}_{\mathcal{T}|\mathbf{Z}}} \mathbf{R}_{\mathbf{a}}\right\|_{2}^{2} \\
            & 
             + \frac{1}{2}\left\|\frac{1}{|\mathcal{D}_{\mathcal{S}}|}\sum_{\mathbf{X}\in{\mathcal{D}_{\mathcal{S}}}} \mathbf{X}
            -\frac{1}{|\mathcal{D}_{\mathcal{T}}|}\sum_{\mathbf{X}\in\mathcal{D}_{\mathcal{T}}} \mathbf{X}\right\|_{2}
            \cdot \left\|\frac{1}{|\mathcal{D}_{\mathcal{S}}|}\sum_{\mathbf{a}\in\mathbf{Z}}\sum_{\mathbf{X}\in{\mathcal{D}_{\mathcal{S}|\mathbf{Z}}}} \mathbf{R}_{\mathbf{a}}
            -\frac{1}{|\mathcal{D}_{\mathcal{T}}|}\sum_{\mathbf{a}\in\mathbf{Z}}\sum_{\mathbf{X}\in\mathcal{D}_{\mathcal{T}|\mathbf{Z}}} \mathbf{R}_{\mathbf{a}}\right\|_{2}
        \end{aligned}
    \end{equation}
    Hence, by utilizing the Eq.(\ref{eqs:reconstruction-x})
    the unified loss function of domain classification $\mathcal{L}(\mathcal{D}_{\mathcal{S}}, \mathcal{D}_{\mathcal{T}},\mathcal{D}_{\mathcal{S}|\mathbf{Z}}, \mathcal{D}_{\mathcal{T}|\mathbf{Z}};\mathcal{G}_{\mathcal{S}}, \mathcal{G}_{\mathcal{T}})$
    can be derived as 
    \begin{equation}
        \begin{aligned}
            & \ \ \ \ \mathcal{L}(\mathcal{D}_{\mathcal{S}}, \mathcal{D}_{\mathcal{T}},\mathcal{D}_{\mathcal{S}|\mathbf{Z}}, \mathcal{D}_{\mathcal{T}|\mathbf{Z}};\mathcal{G}_{\mathcal{S}}, \mathcal{G}_{\mathcal{T}}) \\
            & = \beta_{1}\left\|\frac{1}{|\mathcal{D}_{\mathcal{S}}|}\sum_{\mathbf{X}\in{\mathcal{D}_{\mathcal{S}}}} \mathbf{X}
            -\frac{1}{|\mathcal{D}_{\mathcal{T}}|}\sum_{\mathbf{X}\in\mathcal{D}_{\mathcal{T}}} \mathbf{X}\right\|_{2}^{2}
            + \beta_{2}\left\|\frac{1}{|\mathcal{D}_{\mathcal{S}|\mathbf{Z}}|}\sum_{\mathbf{a}\in\mathbf{Z}}\sum_{\mathbf{X}\in{\mathcal{D}_{\mathcal{S}|\mathbf{Z}}}} \mathbf{R}_{\mathbf{a}}
            -\frac{1}{|\mathcal{D}_{\mathcal{T}|\mathbf{Z}}|}\sum_{\mathbf{a}\in\mathbf{Z}}\sum_{\mathbf{X}\in\mathcal{D}_{\mathcal{T}|\mathbf{Z}}} \mathbf{R}_{\mathbf{a}}\right\|_{2}^{2}\\
            & 
             + \beta_{3}\left\|\frac{1}{|\mathcal{D}_{\mathcal{S}}|}\sum_{\mathbf{X}\in{\mathcal{D}_{\mathcal{S}}}} \mathbf{X}
            -\frac{1}{|\mathcal{D}_{\mathcal{S}|\mathbf{Z}}|}\sum_{\mathbf{a}\in\mathbf{Z}}\sum_{\mathbf{X}\in\mathcal{D}_{\mathcal{S}|\mathbf{Z}}} \mathbf{R}_{\mathbf{a}}\right\|_{2}^{2}
            + \beta_{4}\left\|\frac{1}{|\mathcal{D}_{\mathcal{T}}|}\sum_{\mathbf{X}\in{\mathcal{D}_{\mathcal{T}}}} \mathbf{X}
            -\frac{1}{|\mathcal{D}_{\mathcal{T}|\mathbf{Z}}|}\sum_{\mathbf{a}\in\mathbf{Z}}\sum_{\mathbf{X}\in\mathcal{D}_{\mathcal{T}|\mathbf{Z}}} \mathbf{R}_{\mathbf{a}}\right\|_{2}^{2} \\
            & 
             + \gamma\left\|\frac{1}{|\mathcal{D}_{\mathcal{S}}|}\sum_{\mathbf{X}\in{\mathcal{D}_{\mathcal{S}}}} \mathbf{X}
            -\frac{1}{|\mathcal{D}_{\mathcal{T}}|}\sum_{\mathbf{X}\in\mathcal{D}_{\mathcal{T}}} \mathbf{X}\right\|_{2}
            \cdot \left\|\frac{1}{|\mathcal{D}_{\mathcal{S}|\mathbf{Z}}|}\sum_{\mathbf{a}\in\mathbf{Z}}\sum_{\mathbf{X}\in{\mathcal{D}_{\mathcal{S}|\mathbf{Z}}}} \mathbf{R}_{\mathbf{a}}
            -\frac{1}{|\mathcal{D}_{\mathcal{T}|\mathbf{Z}}|}\sum_{\mathbf{a}\in\mathbf{Z}}\sum_{\mathbf{X}\in\mathcal{D}_{\mathcal{T}|\mathbf{Z}}} \mathbf{R}_{\mathbf{a}}\right\|_{2}
        \end{aligned}
    \end{equation}
    where $\mathbf{R}_{\mathbf{a}}:=\alpha(\mathbf{a}, \mathbf{k})\mathbf{X}$, which $\mathbf{k}$ denote the keys 
    in transfer learning, the constants $\beta$ and $\gamma$ are the balance parameters,
    and $|\cdot|$ is the cardinality function.
\end{proof}

\end{document}